\documentclass[11pt]{article}

\usepackage[margin=1in]{geometry}

\usepackage[utf8]{inputenc} 
\usepackage[T1]{fontenc}    
\usepackage{hyperref}       
\usepackage{url}            
\usepackage{booktabs}       
\usepackage{amsfonts}       
\usepackage{nicefrac}       
\usepackage{microtype}      
\usepackage{amsmath}
\usepackage{amsthm}
\usepackage{graphicx}
\usepackage{subcaption}
\usepackage{mathtools}
\usepackage{svg}
\usepackage{tabulary}
\usepackage{indentfirst}

\usepackage{mydef}



\title{\texttt{dotears}: Scalable, consistent DAG estimation using observational and interventional data}

\author{
  Albert Xue$^{1}$
  \quad
  Jingyou Rao$^{1}$
  \quad
  Sriram Sankararaman$^{1*}$
  \quad
  Harold Pimentel$^{1*}$\\
  $^1$University of California, Los Angeles\\
  \{asxue, roserao, hjp\}@ucla.edu\\
  sriram@cs.ucla.edu
}
\date{February 15, 2024}

\begin{document}




\maketitle
\def\thefootnote{*}\footnotetext{These authors contributed equally to this work.}
\begin{abstract}
New biological assays like Perturb-seq link highly parallel CRISPR interventions to a high-dimensional transcriptomic readout, providing insight into gene regulatory networks.
Causal gene regulatory networks can be represented by directed acyclic graph (DAGs), but learning DAGs from observational data is complicated by lack of identifiability and a combinatorial solution space. Score-based structure learning improves practical scalability of inferring DAGs. Previous score-based methods are sensitive to error variance structure; on the other hand, estimation of error variance is difficult without prior knowledge of structure.
Accordingly, we present $\texttt{dotears}$ [doo-tairs], a continuous optimization framework which leverages observational and interventional data to infer a single causal structure, assuming a linear Structural Equation Model (SEM).
$\texttt{dotears}$ exploits structural consequences of hard interventions to give a marginal estimate of exogenous error structure, bypassing the circular estimation problem.
We show that $\texttt{dotears}$ is a provably consistent estimator of the true DAG under mild assumptions.
\dotears{} outperforms other methods in varied simulations, and in real data infers edges that validate with higher precision and recall than state-of-the-art methods through differential expression tests and high-confidence protein-protein interactions.
\end{abstract}

\section{Introduction}
Understanding gene regulatory networks can identify mechanisms and pathways linking GWAS significant variants to phenotype. 
Recent efforts to map regulatory networks through \textit{trans}-eQTLs are partly limited by power;
for example, the GTEx project finds only 143 \textit{trans}-eQTLs in 838 individuals \cite{gtex2020gtex}. On the other hand, V{\~o}sa \etal{} detect almost 60,000 \textit{trans}-eQTLs in $\sim$31,000 individuals, across more than a third of trait-associated variants \cite{vosa2021large}. These results imply that \textit{trans}-regulatory relationships are pervasive, but our ability to detect small eQTL effects is often limited by small sample size regimes, observational data, and a high multiple testing burden. 
Importantly, since rare tissues are unlikely to be sampled at sufficient sample sizes, capturing gene regulation events across a wide array of cell types and tissues requires another experimental method.

High-throughput genomic technologies like Perturb-seq provide a natural alternative for learning gene regulatory networks. Perturb-seq links high-dimensional transcriptomic readouts to known, highly parallel CRISPR interventions, allows direct interrogation of causal regulatory relationships, and has scaled genome-wide \cite{dixit2016perturb, norman2019exploring, replogle2022mapping}. In particular, the effects of CRISPR gene interventions are large in comparison to QTL effects, faciliating inference of downstream regulatory relationships. Notably, analogous experiments have already mapped gene-gene networks in yeast \cite{boone2007exploring, costanzo2010genetic, costanzo2016global}. 

The inference of gene regulatory networks can be treated as a causal structure learning problem, which considers learning relationships between variables in the form of a Directed Acyclic Graph (DAG). Identifiability and scalability are the primary difficulties in learning DAGs from data. For identifiability, distinct DAGs may contain the same conditional independence relationships in \textit{observational} data, and DAGs are only identifiable up to Markov equivalence \cite{verma1990equivalence, hauser2012characterization, squires2022causal}. 
For scalability, DAGs with NO TEARS introduces a continuous, differentiable acyclicity constraint, which avoids combinatorial characterizations of DAGs and allows for continuous optimization over structure \cite{zheng2018dags}. However, NO TEARS and related methods infer DAGs whose topological order follows increasing marginal variance, and re-scaling data can change or reverse their inferences \cite{reisach2021beware}. 

Fundamentally, this is still an issue of identifiability. Because NO TEARS uses observational data, it must choose a single member of a class of Markov equivalent DAGs. The ``tiebreaker'' is then a function of the variance. However, we show that interventional data can correct for variance sensitivity in NO TEARS and related methods, and further that this correction is sufficient for consistent estimation of structure.

Explicitly, exogenous error in the linear SEM drives variance sensitivity. Let $X$ be a $p$-dimensional random vector (e.g. the distribution of gene expression across $p$ genes), $W_0\in \RR^{p\times p}$ the weighted adjacency matrix of the true DAG, and $\epsilon$ a $p$-dimensional random vector specifying the exogenous error. The linear SEM gives the autoregressive formulation
\begin{equation}
    X = XW_0 + \epsilon \nonumber,
\end{equation}
where $\Omega_0 \coloneqq \CV \left(\epsilon\right) = \text{diag}\left(\sigma_1^2, \dots, \sigma_p^2\right)$. For any node $j$ we then have 
\begin{equation}
    \VV\left(X_j\right)= \sum_{i = 1}^p w_{ij}^2 \VV\left(X_i\right) + \sigma_j^2.
    \label{eq:variance introduction}
\end{equation}
$\VV\left(X_i\right)$ is itself a linear function of $\sigma_1^2, \dots, \sigma_p^2$. Critically, parental error variances thus propagate to downstream nodes to provide a signal of structure. However, in observational data the exact relationship between $W_0$, $\Omega_0$, and $\VV (X)$ is unidentifiable. 

Consequently, given $\Omega_0$ then $W_0$ is recoverable even in observational data \cite{loh2014high}. However, estimation of $\Omega_0$ is difficult without \textit{a priori} knowledge of $W_0$. Previous methods either ignore exogenous variance structure or use the conditional estimate $\hat{\Omega}_0 \mid W$ \cite{zheng2018dags, ng2020role}. We show that both procedures are sensitive to exogenous variance structure even in the simplest two node DAG. 

\textit{Hard} interventions \cite{eberhardt2006interventions, pearl2009causality} remove upstream variance in the linear SEM to allow marginal estimation of $\Omega_0$ (Figure \ref{fig:main}). We show that naive incorporation of interventional data into the NO TEARS framework, without estimation of $\Omega_0$, is insufficient for structural recovery. Finally, we show correction by this marginal estimate is sufficient for structural recovery.

\begin{figure*}
    \centering
    \includegraphics[width=0.8\textwidth]{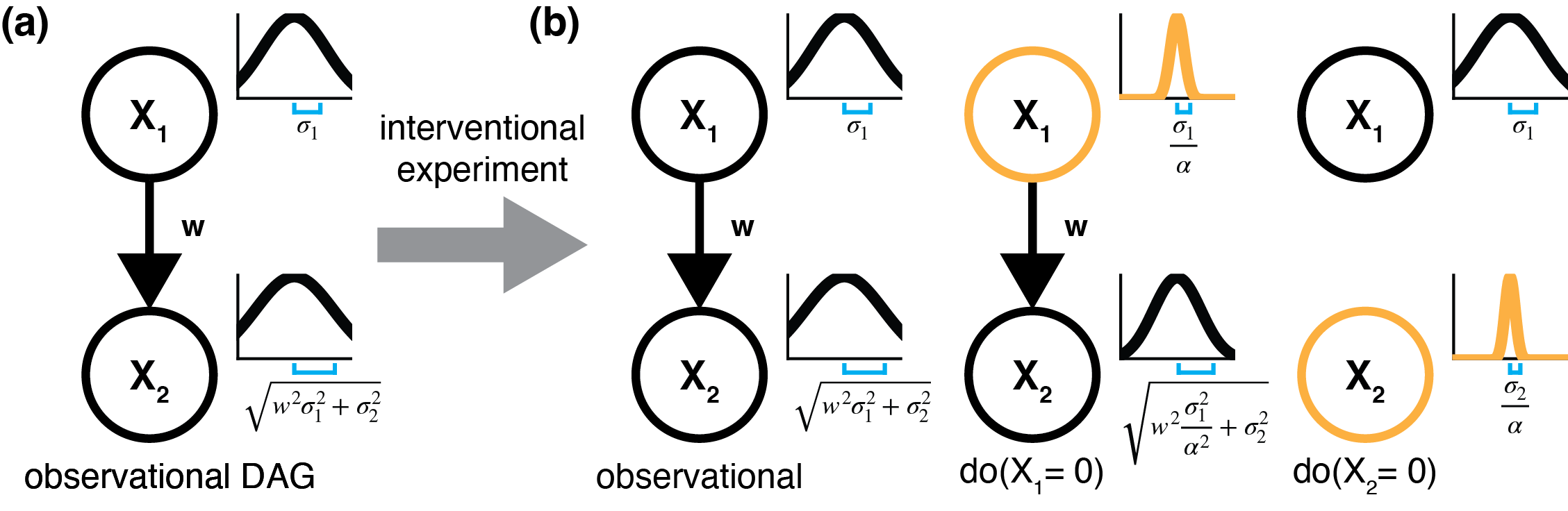}
    \caption{Hard interventions allow marginal estimation of the error variances $\Omega_0 \coloneqq \text{diag}\left(\sigma_1^2, \sigma_2^2\right)$. (a) The true observational DAG $X_1 \overset{w}{\to} X_2$, with corresponding distributions. (b) Hard interventions remove incoming edges to the target. The marginal variance shrinks due to removal of upstream variance and effects from the intervention.} 
    \label{fig:main}
\end{figure*}

Accordingly, we present \dotears{}, a novel optimization framework for structure learning. \dotears{} uses \textbf{1.)} a novel marginal estimation procedure for $\Omega_0$ using the structural consequences of interventions, and \textbf{2.)} joint estimation of the causal DAG from observational and interventional data, given the estimated $\hat{\Omega}_0$. \dotears{} provides a simple model that we show, by extending results from \cite{loh2014high}, is a provably consistent estimator of the true DAG under mild assumptions. In simulations \dotears{} corrects for exogenous variance structure and is robust to reasonable violations of its modeling assumptions. We also apply \dotears{} to the Perturb-seq experiment in Replogle \etal{} \cite{replogle2022mapping}. \dotears{} infers a sparse set of edges that validate with high precision in differential expression tests and in an orthogonal set of protein-protein interactions. In both simulations and real data, \dotears{} outperforms all other tested methods in all used metrics.

\section{Model}
\label{section:model}
\subsection{Formulation}
\label{subsection:formulation}
Let $G = \left([p],\, \mathcal{E}\right)$ a DAG on $p$ nodes with node set $[p] \coloneqq \{ 1\dots p\}$ and edge set $\mathcal{E}$. We represent $\mathcal{E}$ with the weighted adjacency matrix $W \in \mathcal{D} \subset \RR^{p\times p}$, where $\mathcal{D} \subset \RR^{p \times p}$ is the set of weighted adjacency matrices on $p$ nodes whose support is a DAG. We denote the parent set of node $i$ in the observational setting as $\Pa(i)$. For $w_{ij}$ the $i,j$ entry in $W$, $|w_{ij}| > 0$ indicates an edge $i\rightarrow j$ with weight $w_{ij}$, equivalently denoted $i\overset{w_{ij}}{\rightarrow} j$. $k=0,1,\dots, p$ indexes the intervention, where $k=0$ is reserved for the observational system, and $k \neq 0$ denotes intervention on node $k$. Similarly, we denote $\interv{\left(\cdot\right)}{0}$ for observational quantities, and $\interv{\left(\cdot\right)}{k}$ for quantities under intervention on node $k$. For brevity, a variable without a superscript is assumed to be observational; for example, $X \equiv \interv{X}{0}$. $\boldX$ (bolded) denotes $n_0$ samples drawn from the $p$-dimensional random vector $X$ (unbolded), and $\boldepsilon$ (bolded) denotes $n_0$ samples drawn from the $p$-dimensional random vector $\epsilon$ (unbolded). Similarly, if $\interv{X}{k}$ is a $p$-dimensional random vector, then $\interv{\boldX}{k}\in \RR^{n_k\times p}$ represents $n_k$ observations of $X$. We denote the total sample size $n \coloneqq \sum_{k=0}^p n_k$, and the true weighted adjacency matrix $W_0$.

The linear SEM is an autoregressive representation of $\interv{X}{k}$ and weighted adjacency matrix $\interv{W_0}{k}$,
\begin{equation}
    \interv{X}{k} = \interv{X}{k}\interv{W_0}{k} + \interv{\epsilon}{k}, \quad k=0,\dots, p.
    \label{eq:SEM}
\end{equation}
Here, $\interv{W_0}{k}$ is permutation-similar to a strictly upper triangular matrix, representing the constraint $\interv{W_0}{k} \in \mathcal{D}$. For each $k$, $\interv{\epsilon}{k}$ is a $p$-dimensional random vector such that $\EE\interv{\epsilon}{k} = 0_p$, and $\interv{\Omega_0}{k} \coloneqq \CV(\interv{\epsilon}{k})$. Denote $\epsilon_i$ as the $i$th element of $\epsilon$. Then $\interv{\epsilon_i}{k}$ is the exogenous error on node $i$, such that $\EE\interv{\epsilon_i}{k} = 0$ and $\interv{\epsilon_i}{k} \independent \interv{\epsilon_j}{k}$ for $i\neq j$. We further define
 \begin{equation}
    \begin{aligned}
     \Omega_0 &\coloneqq \CV(\interv{\epsilon}{0}) \\
     &= \text{diag}\left(\sigma_1^2, \sigma_2^2, \cdots, \sigma_p^2 \right). \\
    \end{aligned}
    \nonumber
 \end{equation} 



\subsection{Interventions}
Motivated by recent work on a genome-wide screen that performs known single interventions on all protein-coding genes \cite{replogle2022mapping}, we consider the linear SEM with known single interventions on all $p$ nodes. Accordingly, we obtain a system of $p+1$ structural equations
\begin{equation}
    \begin{aligned}
    \interv{X}{0} &= \interv{X}{0}\interv{W}{0} + \interv{\epsilon}{0}\\
    &\vdots \\
    \interv{X}{p} &= \interv{X}{p}\interv{W}{p} + \interv{\epsilon}{p}.
    \end{aligned}
    \nonumber 
\end{equation}
In this setting we have complicated our problem. Before, with the single data matrix $\boldX$, we inferred a single $W$; now, with the $p + 1$ data matrices $\interv{\boldX}{0}, \interv{\boldX}{1}, \dots, \interv{\boldX}{p}$, it seems that we must infer $p + 1$ adjacency matrices $\interv{W}{0}, \interv{W}{1}, \dots, \interv{W}{p}$.


Our model assumes hard interventions, i.e. that an intervention on node $k$ removes causal influences from observational parents of $k$ \cite{eberhardt2006interventions, pearl2009causality}. Hard interventions relax the do operation $\texttt{do}\left(\interv{X}{k}_k = 0\right)$ to allow for residual noise, modeling the limit of interventional efficacy combined with a noisy readout. 

\begin{assumption}
For an intervention $k \neq 0$, if in the observational setting 
\begin{equation}
    \interv{X_k}{0} = \sum_{i\in \text{Pa}(k)} W_{ik}\interv{X_i}{0} + \interv{\epsilon_k}{0}, \nonumber
\end{equation}
then upon intervention on $k$
\begin{equation}
    \interv{X_k}{k} = \interv{\epsilon_k}{k}. \nonumber
\end{equation}
\label{assumption:hard intervention assumption}
\end{assumption}

Under Assumption \ref{assumption:hard intervention assumption}, we can relate $\interv{W}{0}$ to $\interv{W}{k}$ by setting the $k$th column of $\interv{W}{0}$ to $\Vec{0}_p$, giving
\begin{equation}
     \interv{W}{k}_{ij} \coloneqq \begin{cases} 
        W_{ij} & j \neq k \\
        0 & j = k
        \end{cases}.
\label{eq:w k intervention form}
\end{equation} 
This gives $p+1$ data matrices $\interv{\boldX}{k}$ to jointly infer a single weighted adjacency matrix $W$. 

We now characterize the $p+1$ exogenous variance structures $\interv{\Omega_0}{k}$. We assume that the exogenous variance of non-targeted nodes $j \neq k$ is invariant.
\begin{assumption}
    For an intervention $k \neq 0$, $\VV \left(\interv{\epsilon_j}{k}\right) = \VV \left(\interv{\epsilon_j}{0}\right) = \sigma_j^2$ for $j \neq k$.
    \label{assumption:variance without intervention}
\end{assumption}

We allow interventions to have effects on the error variance of the target, but require the effect to be uniform across targets.

\begin{assumption}
    Let unknown $\alpha \in \RR$. Then $\forall \,\, k$, if  $\VV \left(\interv{\epsilon_k}{0}\right) = \sigma_k^2$ then $\VV\left(\interv{\epsilon_k}{k}\right) = \frac{\sigma_k^2}{\alpha^2}$.
    \label{assumption:global a assumption}
\end{assumption}
Here, $\alpha$ is shared across interventions. As $\alpha \to \infty$, this interventional model is equivalent to $\texttt{do}(X_k = 0)$ \cite{pearl2009causality}. Under these assumptions, the variance of the target is 
\begin{equation}
    \VV\left(\interv{X_k}{k}\right)= \frac{\sigma_k^2}{\alpha^2}.\nonumber
\end{equation}

Let $\widehat{\VV}$ denote the unbiased sample variance. We then obtain the estimator
\begin{equation}
\hat{\Omega}_0 = \text{diag}\left(\widehat{\VV}\left(X_1^{(1)}\right), \dots, \widehat{\VV}\left(X_p^{(p)}\right) \right).
\label{eq:omega hat estimator}
\end{equation}


\section{dotears}
DAGs with NO TEARS \cite{zheng2018dags} transforms the combinatorial constraint $W\in \mathcal{D}$ into the continuous constraint $h(W) = 0$, where $\circ$ denotes the Hadamard product and
\begin{equation}
    h(W) = \TR{\exp\left({W \circ W}\right)} - p \nonumber.
\end{equation}
Define $\|\cdot \|_1$ as the vector $\ell_1$ norm on $\text{vec}\left(W\right)$, i.e. $\|\text{vec}\left(W\right)\|_1$, and $\left|\left|{\cdot}\right|\right|_F$ the Frobenius norm. For some loss function $\mathcal{F}$, the differentiability of $h$ allows for the optimization framework
\begin{equation}
    \begin{aligned}
        \min_W &\,\, \mathcal{F}\left(W, \boldX\right) + \lambda \| W \|_1 \\
        \text{s.t.} & \,\, h(W) = 0.
    \end{aligned}
\end{equation}

We present \dotears{}, a consistent, intervention-aware joint estimation procedure for structure learning. Loh and Buhlmann (2014) showed that the Mahalanobis norm is a consistent estimator of $W_0$, and is uniquely minimized in expectation by $W_0$ given $\Omega_0$, but give no estimation procedure for $\Omega_0$ \cite{loh2014high, kaiser2022unsuitability}. Note the Mahalanobis norm's characterization as inverse-variance-weighted by $\Omega_0$,
\begin{equation}
\frobnorm{\left(\boldX-\boldX W\right)\Omega_0^{-\frac{1}{2}}} = \sum_{i=1}^p \frac{1}{\sigma_i^2} \frobnorm{\left(\boldX - \boldX W\right)_i}. 
\label{eq:mahalanobis norm as reweighting}
\end{equation} 
\dotears{} solves the following optimization problem:
\begin{equation}
\begin{aligned}
\min_W &\,\, \frac{1}{p} \sum_{k=0}^p \frac{1}{2n_k} \frobnorm{\left(\boldX^{(k)} - \boldX^{(k)}W^{(k)}\right)\hat{\Omega}_0^{-\frac{1}{2}}} + \lambda \|W\|_1 \\
\text{s.t. }& h(W)=0,  \\
\end{aligned}
\label{eq:dotears optimization framework}
\end{equation}
where 
\begin{equation}
    \interv{W_{ij}}{k} = \begin{cases}
    W_{ij} & j\neq k\\
    0 & j = k
\end{cases}.\nonumber
\end{equation} 
\dotears{} retains the continuous DAG constraint and $\ell_1$ regularization of $W$ from NO TEARS \cite{zheng2018dags}, but incorporates exogenous variance structure through $\hat{\Omega}_0$ as well as interventional data $(k=1,\dots, p)$. 

\subsection{\texttt{dotears} successfully corrects for exogenous variance structure}
\label{section:small p simulations}
We show that \texttt{dotears} is robust to exogenous variance structure, and motivate the necessity of marginal estimation of $\hat{\Omega}_0$ using the simplest non-trivial DAG $X_1 \overset{w}{\rightarrow} X_2$. Assume $X_1 \overset{w}{\to} X_2$ has true weighted adjacency matrix $W_0 \coloneqq \begin{pmatrix}
    0 & w \\
    0 & 0
\end{pmatrix}$ and SEM
\begin{equation}    
    \begin{aligned}
        X_1 &= \epsilon_1,  && \VV\left(\epsilon_1\right) = \sigma_1^2, \\
        X_2 &= w X_1 + \epsilon_2, && \VV\left(\epsilon_2\right) = \sigma_2^2. 
    \end{aligned}
    \label{eq:two node SEM}
\end{equation}

Let $\gamma \coloneqq \frac{\sigma_1^2}{\sigma_2^2}$, such that $\Omega_0 = \begin{pmatrix}
    \gamma & 0\\
    0 & 1
\end{pmatrix}$ and $W_0 = \begin{pmatrix}
    0 & w\\
    0 & 0
\end{pmatrix}$. The least squares loss used by NO TEARS is minimized in expectation if and only if 
\begin{equation}
    |w| \geq \sqrt{1 - \frac{1}{\gamma}},
    \label{eq:varsortability cutoff}
\end{equation}
which is true if and only if the topological ordering of the DAG follows increasing marginal variance, or equivalently a \textit{varsortable} DAG. For the full proof, see Supplementary Material \ref{suppsection:least squares and varsortability}, or \cite{reisach2021beware, kaiser2022unsuitability}.

In Figure \ref{fig:two node sims}, we examine the performance of four different strategies of correcting for $\Omega_0$ in simulations. NO TEARS (black) uses the least squares loss, which ignores $\Omega_0$, while GOLEM-NV (orange) uses a likelihood loss that estimates $\hat{\Omega}_0 \mid W$ \cite{zheng2018dags, ng2020role}. We also include the scenario when $\hat{\Omega}_0$ is set to $I_p$ in Eq. \ref{eq:dotears optimization framework}. 
Call this NO TEARS interventional (green), the simplest extension of NO TEARS to interventional data. NO TEARS interventional is aware of hard interventions through the structure $\interv{W}{k}$, but ignores  $\Omega_0$. NO TEARS interventional is thus an ablation study on $\hat{\Omega}_0$. 

For each set $(w, \gamma) \in \{0.1, \dots, 1.5\} \times \{1, 2, 4, 10, 100\}$, we draw 25 simulations of observational and interventional data, with sample size $n=(p + 1) * 1000 = 3000$. For observational data, this is $3000$ observations from the observational system; for interventional data, this is $1000$ observations from each system $k=0, 1, 2$. For observational data, we draw Gaussian data under the SEM in Eq. \ref{eq:two node SEM}. For interventional methods, we draw Gaussian data under the system of SEMs in Supplementary Material \ref{suppsection:full two node sims}. To isolate the behavior of the loss, we remove $\ell_1$ regularization. Full simulation details and results are given in Supplementary Material \ref{suppsection:full two node sims}. 

NO TEARS does not correct for $\Omega_0$ and uses only observational data. As a result, in Figure \ref{fig:two node sims} it estimates correctly only on varsortable pairs of $w, \gamma$. Note that the grey dashed line represents the theoretical varsortability cutoff $|w| \leq \sqrt{1 - \frac{1}{\gamma}}$ given in Eq. \ref{eq:varsortability cutoff}.

GOLEM-NV uses the maximum likelihood estimate $\hat{\Omega} \mid W$ under a Gaussian model. Subsequently, joint estimation is performed over $W, \hat{\Omega} \mid W$ using the Gaussian negative log likelihood and profile likelihood for $\Omega_0$, which simplifies to
\begin{equation}
    \frac{1}{2} \sum_{i=1}^p \log\left( \frobnorm{(X-XW)_i} \right).
    \label{eq:golem loss}
\end{equation}

This profile likelihood is insufficient to correct for exogenous variance structure, and only infers varsortable structures in Figure \ref{fig:two node sims}. This evidence qualitatively holds in simulations on three node topologies (Supplementary Material \ref{suppsection:full three node sims}), where the behavior of GOLEM-NV remains deterministic in $w, \gamma$. Joint estimation of $W, \hat{\Omega}_0 \mid W$ is thus still sensitive to exogenous variance structure. 

Through NO TEARS interventional, we also see that interventional data alone, without correction for $\Omega_0$, is insufficient to infer the two node structure. The NO TEARS interventional estimate is also deterministic in $w, \gamma$ in Figure \ref{fig:two node sims}, and behaves almost identically to NO TEARS and GOLEM. Note that since NO TEARS interventional is given interventional data for all nodes, it operates in a fully identifiable setting (where NO TEARS and GOLEM-NV do not).

Thus, neither interventional data nor correction by $\hat{\Omega}_0 \mid W$ are alone sufficient to infer structure. \dotears{} combines the two to give the marginal estimate of $\Omega_0$ in Eq. \ref{eq:omega hat estimator} and a robust estimate of $W$.

We do not imply that other methods using interventional data cannot infer the two node case under interventional data; in fact, many do successfully (see Supplementary Material \ref{suppsection:full two node sims}). Rather, we use \textbf{1.)} an observational procedure that ignores $\Omega_0$, \textbf{2.)} an observational procedure that corrects for an estimated $\hat{\Omega}_0$, \textbf{3.)} an interventional procedure that ignores $\Omega_0$, and \textbf{4.)} \dotears{}, an interventional procedure that corrects for an estimated $\hat{\Omega}_0$, to motivate \dotears{} as most parsimonious model robust to exogenous variance structure under this line of thought.


\begin{figure*}
     \centering
     \begin{subfigure}[t]{0.3\textwidth}
         \centering
         \includegraphics[width=\textwidth]{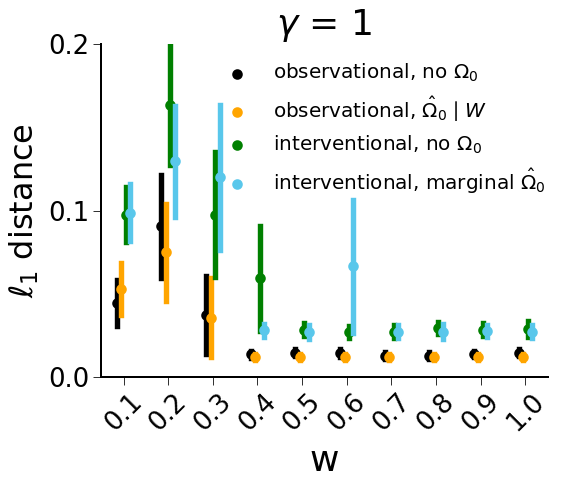}
         \caption{Under the equal variance assumption $\gamma = 1$, correction for $\Omega_0$ is unneeded. All methods are sufficient for structure recovery.}
         \label{fig:gamma1 two node sim}
     \end{subfigure}
     \hfill
     \begin{subfigure}[t]{0.3\textwidth}
         \centering
         \includegraphics[width=\textwidth]{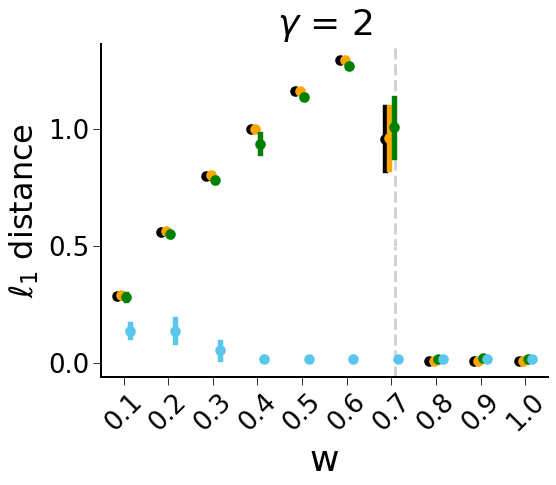}
         \caption{At $\gamma = 2$, NO TEARS (black), GOLEM-NV (orange), and NO TEARS interventional (green) infer correctly on varsortable $w$.}
         \label{fig:gamma2 two node sim}
     \end{subfigure}
     \hfill
     \begin{subfigure}[t]{0.3\textwidth}
         \centering
         \includegraphics[width=\textwidth]{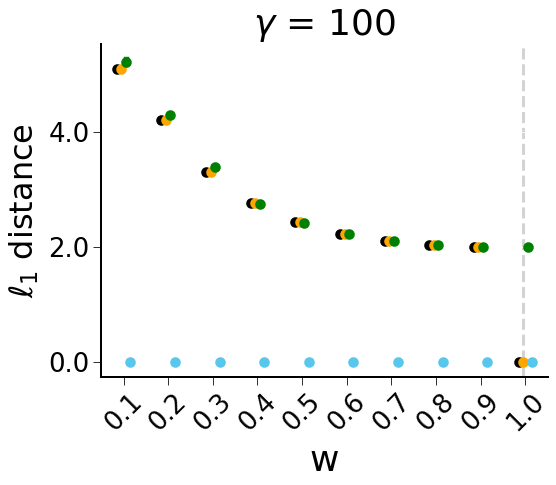}
         \caption{As $\gamma$ grows large, the varsortability bound approaches $|w| \geq 1$. Once again, only \dotears{} estimates correctly for all $w$.}
         \label{fig:gamma100 two node sim}
     \end{subfigure}
    \caption{Comparison of $\ell_1$ distance (lower is better) between true structure and estimates from NO TEARS (black), GOLEM-NV (orange), NO TEARS interventional (green), and \dotears{} (blue). Each method corrects differently for $\Omega_0$. For each $w = 0.1, 0.2, \dots, 1.0$ and $\gamma = 1, 2, 100$, we generate Gaussian data from the structure $X_1 \overset{w}{\to} X_2$ such that $\sigma_1^2 = \gamma \sigma_2^2$. For each pair $w, \gamma$, we draw 25 simulations at a sample size of $n = 3000$. The dashed grey line represents the varsortability bound $|w| \geq \sqrt{1 - \frac{1}{\gamma}}$. Bars represent standard errors; some standard errors are too small to see.}
    \label{fig:two node sims}
\end{figure*}

\subsection{Optimization and consistency}
\label{section:optimization and consistency}
We now wish to show that \dotears{} is a consistent estimator of the true DAG. However, two natural problems arise from our usage of $\hat{\Omega}_0$. First, we have provided no estimation procedure for $\alpha$, but $\EE \hat{\Omega}_0 \neq \Omega_0$ for $\alpha\neq 1$. However, $\EE \hat{\Omega}_0 \propto \Omega_0$ for all $\alpha$, and constant scalings of $\Omega$ are rescalings of the loss \cite{loh2014high}. $\hat{\Omega}_0$ is therefore well-specified for inference on observational data $k=0$.

However, if $\alpha \neq 1$ then $\hat{\Omega}_0$ is still misspecified for interventional data $k \neq 0$. Under Assumption \ref{assumption:global a assumption}
\begin{equation}
    \CV \left(\interv{\epsilon}{k}\right) = \text{diag}\left(\sigma_1^2, \sigma_2^2, \dots, \frac{\sigma_k^2}{\alpha^2}, \dots, \sigma_p^2\right), \nonumber
\end{equation}
and thus $\EE \hat{\Omega}_0 \not \propto \CV\left(\interv{\epsilon}{k}\right)$. A naive approach might estimate $\hat{\Omega}_0$ from interventional data only and then estimate $\hat{W}$ from observational data only. However, this approach ignores a majority of our data and performs substantially worse in simulations (Supplementary Material \ref{section:interventional vs observational only}). We show that $\hat{\Omega}_0$ is well-specified even for $k \neq 0$, and estimation of $\alpha$ is unnecessary (see Corollary \ref{corollary:argmin equivalence under true omega}).

Under a sub-Gaussian assumption, Loh and Buhlmann show consistency of the Mahalanobis norm on observational data given $\Omega_0$ \cite{loh2014high}. We extend these results, using $\hat{\Omega}_0 \overset{p}{\to} \Omega_0$, to show
\begin{equation}
    \argmin_W \frobnorm{\left(\interv{\boldX}{k} - \interv{\boldX}{k} \interv{W}{k}\right) \hat{\Omega}_0^{-\frac{1}{2}}} \nonumber
\end{equation}
is a consistent estimator of $\interv{W_0}{k}$ for each $k$, where $\overset{p}{\to}$ denotes convergence in probability. A full proof is given in Supplementary Material \ref{suppsection:consistency proof}. 

\section{Simulations}
\label{section:simulations}
We evaluate structure learning methods across a range of DAG topologies, effects distributions, and generative models. We benchmark methods that leverage interventional data (\dotears{}, GIES, IGSP, UT-IGSP, DCDI) and methods using only observational data (NO TEARS, sortnregress, GOLEM-EV, GOLEM-NV, DirectLiNGAM) \cite{hauser2012characterization, zheng2018dags, reisach2021beware, ng2020role, yang2018characterizing, wang2017permutation, squires2020permutation, brouillard2020differentiable, shimizu2011directlingam}. \dotears{} outperforms all tested methods in DAG estimation and is robust to reasonable violations of the model.

\begin{figure*}[h]
    \centering
    \includegraphics[width=\textwidth]{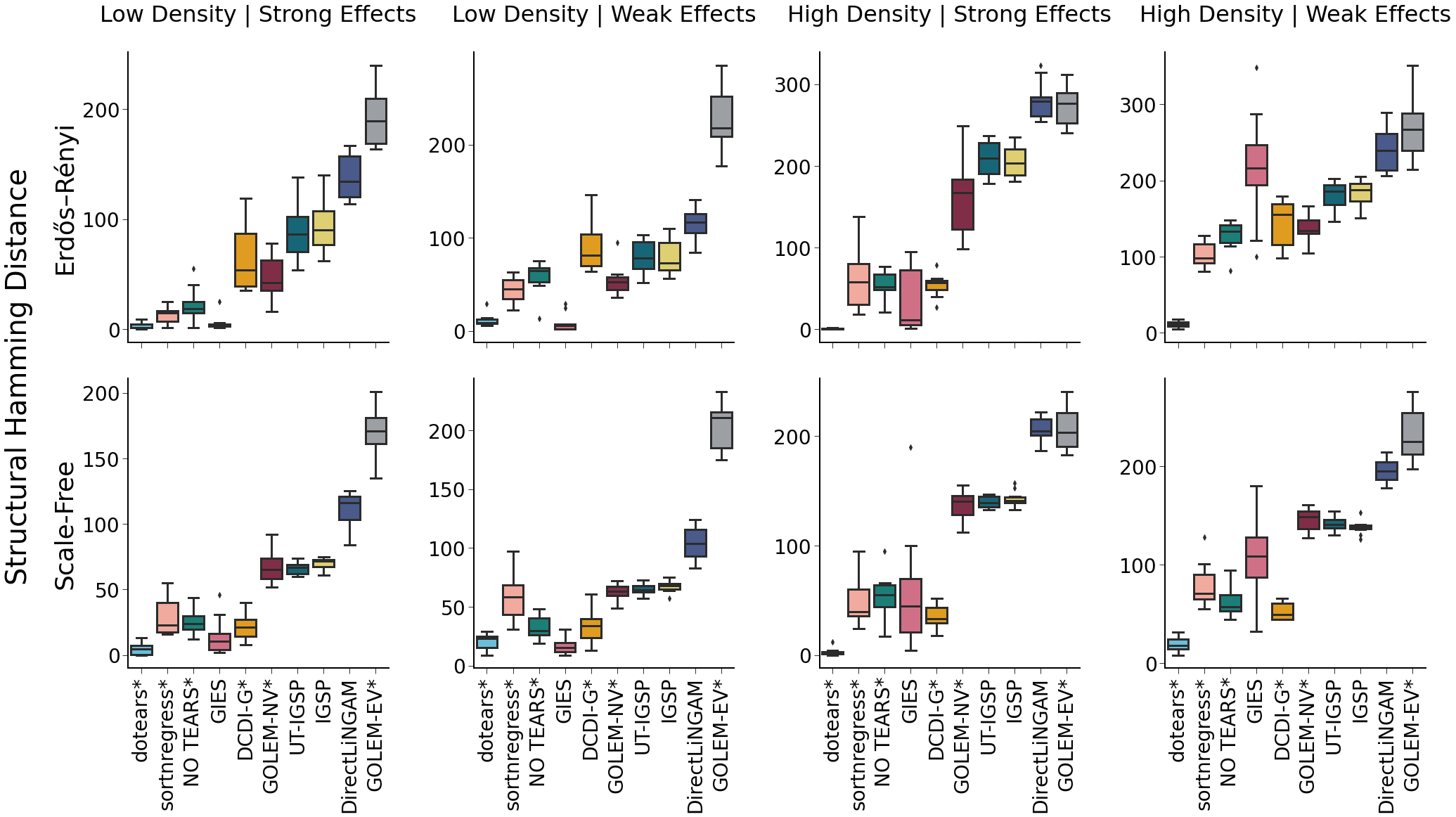}
    \caption{Method performance on large random graphs ($p$ = 40) using Structural Hamming Distance (lower is better). Rows index \erdosrenyi{} or Scale Free topologies. Columns index parameterizations of edge density and weight, ordered in increasing difficulty. For details see Supplementary Material \ref{suppsection:large p data generation}. 10 simulations were drawn for each parameterization with sample size $(p+1) * 100 = 4100$. * indicates cross-validated methods. Methods are sorted by average performance.} 
    \label{fig:large simulations SHD results}
\end{figure*}

Some methods come with important caveats for evaluation. sortnregress is not intended as a ``true'' structure learning method, but benchmarks the data's varsortability \cite{reisach2021beware}. UT-IGSP can infer structure with unknown interventional targets, but we constrain to known targets for fairness \cite{squires2020permutation}. Most simulations use Gaussian data, but \dotears{}, NO TEARS, sortnregress, GIES, IGSP, and UT-IGSP do not assume Gaussianity. The non-Gaussianity assumption is violated for Direct-LiNGAM, and the equal variance assumption for GOLEM-EV \cite{ng2020role, shimizu2011directlingam}. 

\subsection{Large random graph simulations}
\label{section:large random graph simulations}

We simulate synthetic data from large \erdosrenyi{}{} and Scale-Free DAGs \cite{erdHos1960evolution, barabasi1999emergence} ($p = 40$), with 10 replicates each. We simulate under four parameterizations: $\{\text{Low Density, High Density}\} \times \{\text{Weak Effects, Strong Effects}\}$. Observational and interventional data have matched sample size $n = (p + 1) \times 100 = 4100$. Methods using 5-fold cross-validation for hyperparameter tuning are denoted by *. For non-binary methods, edge weights are thresholded. Methods are robust to threshold choice (Supplementary Material \ref{suppsection:thresholding}) \cite{reisach2021beware}. For simulation, cross-validation, and thresholding details, see Supplementary Material \ref{suppsection:large p data generation}; for runtime and memory usage, see Supplementary Material \ref{suppsection:large p simulation benchmarking}. We note that the memory usage of DCDI-G was extreme even with no hidden layers.


On average, \dotears{} outperforms all tested methods in structural recovery (Structural Hamming Distance (SHD), Figure \ref{fig:large simulations SHD results}) and edge weight recovery ($\ell_1$ distance, Supplementary Figure \ref{suppfig:large simulation L1 results}). Furthermore, \dotears{} outperforms all other methods in most parameterizations. GIES matches \dotears{} in ``Low Density'' simulations, but performs substantially worse in ``High Density'' simulations. We hypothesize that the greedy nature of GIES hinders performance in more complex DAGs. 

The primary modeling assumptions of \dotears{} are \textbf{1.)} hard interventions (Assumption \ref{assumption:hard intervention assumption}), \textbf{2.)} shared $\alpha$ across interventions (Assumption \ref{assumption:global a assumption}), and \textbf{3.)} linearity of the SEM. In Supplementary Material \ref{section:linear} and \ref{section:nonlinear} we assess the sensitivity of \dotears{} to each assumption. In addition, in Supplementary Material \ref{section:linear} we also consider \textbf{4.)} simulations under different interventional models. 

We find that \dotears{} performance can change under violations of the hard intervention assumption, but is robust to violations of its interventional model and linearity. Surprisingly, \dotears{} is the second best performing method under a mean-shift intervention model. Moreover, \dotears{} outperforms the neural network method DCDI-DSF in nonlinear simulations, even with ``imperfect'' interventions. Under ``imperfect'' interventions the primary determining factor is not nonlinearity, but the denseness of the DAG.

\section{Results}
\begin{figure}
    \centering
    \includegraphics[width=\textwidth]{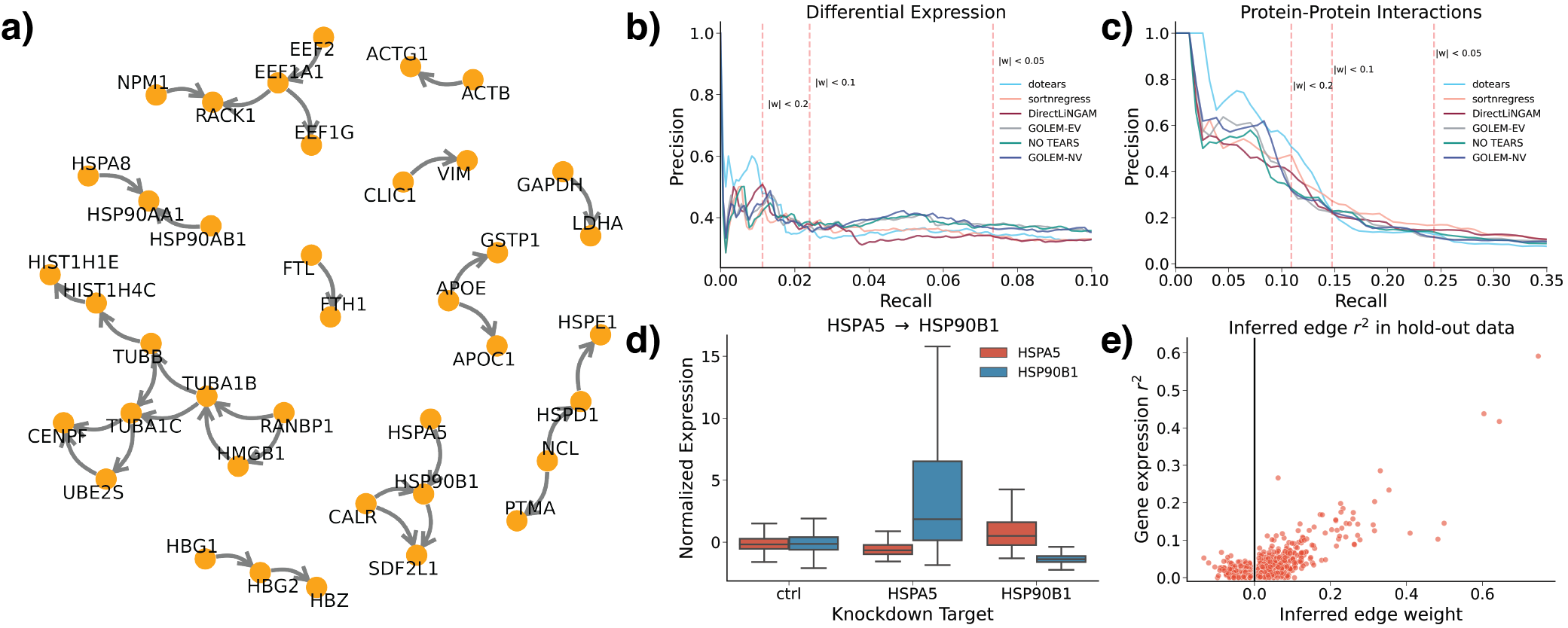}
    \caption{a) \dotears{}-inferred network. Edges with magnitude less than 0.2, and genes without inferred edges, were removed. b) Precision-recall curves across differential expression calls made by DESeq2. Dashed red lines indicate recall of \dotears{} at thresholds of $|w| < 0.2, 0.1,$ and $0.05$ respectively. c) Precision-recall curves across high confidence protein-protein interactions nominated by STRING. Dashed red lines indicate recall of \dotears{} at thresholds of $|w| < 0.2, 0.1,$ and $0.05$ respectively. d) \dotears{} infers  HSP90AB1 $\rightarrow$ HSP90AA1. HSP90AB1 knockdown increases expression of HSP90AA1, but HSP90AA1 knockdown does not change HSP90AB1 expression. d) \dotears{} inferred edges show correlated gene expression in hold-out observational data.}
    \label{fig:gwps}
\end{figure}

We apply all benchmarked methods in Section \ref{section:simulations} to a genome-wide Perturb-seq experiment from Replogle \etal{} \cite{replogle2022mapping}. We validate inferred edges through \textbf{1.)} differential expression tests in the training data using DESeq2 \cite{love2014moderated, ahlmann2020glmgampoi} and \textbf{2.)} an orthogonal set of high-confidence protein-protein interactions from the STRING database \cite{szklarczyk2023string}. We also examine gene-gene correlations in held-out observational expression data. High-confidence edges inferred by \dotears{} show differential expression and/or protein-protein interactions more frequently than those found by other methods, and \dotears{} outperforms all other methods in precision and recall under reasonable thresholding.

Replogle \etal{} provide both normalized and raw data. We select the top 100 most variable genes in the raw observational data. We then benchmark all methods on the normalized, feature-selected data. Cross-validation is not performed due to low sample sizes in some knockdowns; instead, the $\ell_1$ penalty is arbitrarily set to 0.1 for all methods where appropriate. Figure \ref{fig:gwps}a shows the network inferred by \dotears{}, thresholded at $|w| < 0.2$. For full details, see Supplementary Material \ref{suppsection:gwps}.

We use DESeq2 differential expression calls and high-confidence protein-protein interactions from the STRING database to validate inferred edges \cite{love2014moderated, ahlmann2020glmgampoi, szklarczyk2023string}. For differential expression, we call an edge $i\rightarrow j$ a true positive if either gene shows differential expression under knockdown of the other. This is because all methods struggle equally with predicting directionality (see Supplementary Table \ref{supptab:directionality}). For protein-protein interactions, we take high-confidence physical interactions as true positives, where here ``high-confidence'' is defined by STRING as having a confidence level of over 70\% \cite{szklarczyk2023string}. Figures \ref{fig:gwps}b and \ref{fig:gwps}c show precision and recall across thresholds for differential expression calls and the protein-protein interactions, respectively. Vertical lines indicate different thresholding regimes. 

\dotears{} shows much higher precision than all other methods at equivalent recall. Over 65\% of inferred edges validated by either differential expression or high-confidence protein-protein interactions. GIES, IGSP, UT-IGSP, and DCDI are excluded because they infer binary edges. These methods inferred 3038, 3064, 3075, and 2039 out of a possible 4950 edges, respectively; no other method predicted more than 700 even at a weight threshold of .05. Accordingly, they had almost random precision - see Supplementary Material \ref{suppsection:gwps tables} for detailed results for all methods at multiple thresholds. These results reinforce concerns about the scalability of GIES to more complex scenarios. For DCDI, we report intermediate results, since convergence was not obtained under 24 hours on GPU training. CPU training attempts ran out of memory even after allocation of 180GB; GPU training attempts also repeatedly ran out of memory.

We re-run \dotears{} on the same data excluding the observational data, and use the held-out observational data to validate inferred edges in the knockdown data. For each pair of genes $i, j$, Figure \ref{fig:gwps}e shows the inferred edge weight against the $r^2$ of $i, j$ in the observational data. Note that we take the largest magnitude weight between $w_{ij}$ and $w_{ji}$. Figure \ref{fig:gwps}e shows a clear relationship between inferred edge weight in knockdown data and the $r^2$ in observational expression data.

\section{Discussion}
We present \dotears{}, a structure learning framework that uses interventional data to estimate exogenous variance structure and subsequently leverages observational and interventional data to learn the causal graph.
We showed that \dotears{} is appropriate for Peturb-seq data analysis and can recover high-confidence gene regulation events.
Additionally, we verify that \dotears{} is robust to exogenous variance structure, prove that its loss function provides consistent DAG estimation, and show that \dotears{} outperforms all tested state-of-the-art methods in simulations. 

In simulations simple methods generally outperform complex methods in structure recovery, even under complex generating mechanisms. In particular, \dotears{} and sortnregress regularly outperform more complex methods, including neural network methods, even under modeling assumption violations and nonlinear data. Their strong performance also points to the efficacy of using patterns in variance to infer structure.

In real data, \dotears{} infers DAGs with edges supported by knockdown expression in training data, orthogonal high-confidence protein-protein interactions, and gene expression correlations in held out observational data. \dotears{}-inferred edges validate with higher precision than any other method without sacrificing power. In particular, edges with high inferred edge weights - in other words, the edges \dotears{} infers with highest ``confidence'' - are validated more frequently than other methods' inferences, regardless of validation scheme. We value performance in these ``high-confidence'' edges most highly, since these edges are the top candidates for follow-up experimental validation.

\clearpage
\bibliographystyle{unsrt}
\bibliography{main}

\begin{thebibliography}{10}

\bibitem{gtex2020gtex}
GTEx Consortium.
\newblock The gtex consortium atlas of genetic regulatory effects across human tissues.
\newblock {\em Science}, 369(6509):1318--1330, 2020.

\bibitem{vosa2021large}
Urmo V{\~o}sa, Annique Claringbould, Harm-Jan Westra, Marc~Jan Bonder, Patrick Deelen, Biao Zeng, Holger Kirsten, Ashis Saha, Roman Kreuzhuber, Seyhan Yazar, et~al.
\newblock Large-scale cis-and trans-eqtl analyses identify thousands of genetic loci and polygenic scores that regulate blood gene expression.
\newblock {\em Nature genetics}, 53(9):1300--1310, 2021.

\bibitem{dixit2016perturb}
Atray Dixit, Oren Parnas, Biyu Li, Jenny Chen, Charles~P Fulco, Livnat Jerby-Arnon, Nemanja~D Marjanovic, Danielle Dionne, Tyler Burks, Raktima Raychowdhury, et~al.
\newblock Perturb-seq: dissecting molecular circuits with scalable single-cell rna profiling of pooled genetic screens.
\newblock {\em cell}, 167(7):1853--1866, 2016.

\bibitem{norman2019exploring}
Thomas~M Norman, Max~A Horlbeck, Joseph~M Replogle, Alex~Y Ge, Albert Xu, Marco Jost, Luke~A Gilbert, and Jonathan~S Weissman.
\newblock Exploring genetic interaction manifolds constructed from rich single-cell phenotypes.
\newblock {\em Science}, 365(6455):786--793, 2019.

\bibitem{replogle2022mapping}
Joseph~M Replogle, Reuben~A Saunders, Angela~N Pogson, Jeffrey~A Hussmann, Alexander Lenail, Alina Guna, Lauren Mascibroda, Eric~J Wagner, Karen Adelman, Gila Lithwick-Yanai, et~al.
\newblock Mapping information-rich genotype-phenotype landscapes with genome-scale perturb-seq.
\newblock {\em Cell}, 185(14):2559--2575, 2022.

\bibitem{boone2007exploring}
Charles Boone, Howard Bussey, and Brenda~J Andrews.
\newblock Exploring genetic interactions and networks with yeast.
\newblock {\em Nature Reviews Genetics}, 8(6):437--449, 2007.

\bibitem{costanzo2010genetic}
Michael Costanzo, Anastasia Baryshnikova, Jeremy Bellay, Yungil Kim, Eric~D Spear, Carolyn~S Sevier, Huiming Ding, Judice~LY Koh, Kiana Toufighi, Sara Mostafavi, et~al.
\newblock The genetic landscape of a cell.
\newblock {\em science}, 327(5964):425--431, 2010.

\bibitem{costanzo2016global}
Michael Costanzo, Benjamin VanderSluis, Elizabeth~N Koch, Anastasia Baryshnikova, Carles Pons, Guihong Tan, Wen Wang, Matej Usaj, Julia Hanchard, Susan~D Lee, et~al.
\newblock A global genetic interaction network maps a wiring diagram of cellular function.
\newblock {\em Science}, 353(6306):aaf1420, 2016.

\bibitem{verma1990equivalence}
Thomas Verma and Judea Pearl.
\newblock On equivalence of causal models.
\newblock In {\em Proceedings of the Sixth Conference Annual Conference on Uncertainty in Artificial Intelligence (UAI-90)}, pages 220--227, 1990.

\bibitem{hauser2012characterization}
Alain Hauser and Peter B{\"u}hlmann.
\newblock Characterization and greedy learning of interventional markov equivalence classes of directed acyclic graphs.
\newblock {\em The Journal of Machine Learning Research}, 13(1):2409--2464, 2012.

\bibitem{squires2022causal}
Chandler Squires and Caroline Uhler.
\newblock Causal structure learning: a combinatorial perspective.
\newblock {\em Foundations of Computational Mathematics}, pages 1--35, 2022.

\bibitem{zheng2018dags}
Xun Zheng, Bryon Aragam, Pradeep~K Ravikumar, and Eric~P Xing.
\newblock Dags with no tears: Continuous optimization for structure learning.
\newblock {\em Advances in neural information processing systems}, 31, 2018.

\bibitem{reisach2021beware}
Alexander Reisach, Christof Seiler, and Sebastian Weichwald.
\newblock Beware of the simulated dag! causal discovery benchmarks may be easy to game.
\newblock {\em Advances in Neural Information Processing Systems}, 34:27772--27784, 2021.

\bibitem{loh2014high}
Po-Ling Loh and Peter B{\"u}hlmann.
\newblock High-dimensional learning of linear causal networks via inverse covariance estimation.
\newblock {\em The Journal of Machine Learning Research}, 15(1):3065--3105, 2014.

\bibitem{ng2020role}
Ignavier Ng, AmirEmad Ghassami, and Kun Zhang.
\newblock On the role of sparsity and dag constraints for learning linear dags.
\newblock {\em Advances in Neural Information Processing Systems}, 33:17943--17954, 2020.

\bibitem{eberhardt2006interventions}
Frederick Eberhardt and Richard Scheines.
\newblock Interventions and causal inference.
\newblock {\em Philosophy of Science}, 74(5):981--995, 2007.

\bibitem{pearl2009causality}
Judea Pearl.
\newblock {\em Causality}.
\newblock Cambridge university press, 2009.

\bibitem{kaiser2022unsuitability}
Marcus Kaiser and Maksim Sipos.
\newblock Unsuitability of notears for causal graph discovery when dealing with dimensional quantities.
\newblock {\em Neural Processing Letters}, 54(3):1587--1595, 2022.

\bibitem{yang2018characterizing}
Karren Yang, Abigail Katcoff, and Caroline Uhler.
\newblock Characterizing and learning equivalence classes of causal dags under interventions.
\newblock In {\em International Conference on Machine Learning}, pages 5541--5550. PMLR, 2018.

\bibitem{wang2017permutation}
Yuhao Wang, Liam Solus, Karren Yang, and Caroline Uhler.
\newblock Permutation-based causal inference algorithms with interventions.
\newblock {\em Advances in Neural Information Processing Systems}, 30, 2017.

\bibitem{squires2020permutation}
Chandler Squires, Yuhao Wang, and Caroline Uhler.
\newblock Permutation-based causal structure learning with unknown intervention targets.
\newblock In {\em Conference on Uncertainty in Artificial Intelligence}, pages 1039--1048. PMLR, 2020.

\bibitem{brouillard2020differentiable}
Philippe Brouillard, S{\'e}bastien Lachapelle, Alexandre Lacoste, Simon Lacoste-Julien, and Alexandre Drouin.
\newblock Differentiable causal discovery from interventional data.
\newblock {\em Advances in Neural Information Processing Systems}, 33:21865--21877, 2020.

\bibitem{shimizu2011directlingam}
Shohei Shimizu, Takanori Inazumi, Yasuhiro Sogawa, Aapo Hyvarinen, Yoshinobu Kawahara, Takashi Washio, Patrik~O Hoyer, Kenneth Bollen, and Patrik Hoyer.
\newblock Directlingam: A direct method for learning a linear non-gaussian structural equation model.
\newblock {\em Journal of Machine Learning Research-JMLR}, 12(Apr):1225--1248, 2011.

\bibitem{erdHos1960evolution}
Paul Erd{\H{o}}s, Alfr{\'e}d R{\'e}nyi, et~al.
\newblock On the evolution of random graphs.
\newblock {\em Publ. Math. Inst. Hung. Acad. Sci}, 5(1):17--60, 1960.

\bibitem{barabasi1999emergence}
Albert-L{\'a}szl{\'o} Barab{\'a}si and R{\'e}ka Albert.
\newblock Emergence of scaling in random networks.
\newblock {\em science}, 286(5439):509--512, 1999.

\bibitem{love2014moderated}
Michael~I Love, Wolfgang Huber, and Simon Anders.
\newblock Moderated estimation of fold change and dispersion for rna-seq data with deseq2.
\newblock {\em Genome biology}, 15(12):1--21, 2014.

\bibitem{ahlmann2020glmgampoi}
Constantin Ahlmann-Eltze and Wolfgang Huber.
\newblock glmgampoi: fitting gamma-poisson generalized linear models on single cell count data.
\newblock {\em Bioinformatics}, 36(24):5701--5702, 2020.

\bibitem{szklarczyk2023string}
Damian Szklarczyk, Rebecca Kirsch, Mikaela Koutrouli, Katerina Nastou, Farrokh Mehryary, Radja Hachilif, Annika~L Gable, Tao Fang, Nadezhda~T Doncheva, Sampo Pyysalo, et~al.
\newblock The string database in 2023: protein--protein association networks and functional enrichment analyses for any sequenced genome of interest.
\newblock {\em Nucleic acids research}, 51(D1):D638--D646, 2023.

\bibitem{wang2018direct}
Yuhao Wang, Chandler Squires, Anastasiya Belyaeva, and Caroline Uhler.
\newblock Direct estimation of differences in causal graphs.
\newblock {\em Advances in neural information processing systems}, 31, 2018.

\bibitem{koster2012snakemake}
Johannes K{\"o}ster and Sven Rahmann.
\newblock Snakemake—a scalable bioinformatics workflow engine.
\newblock {\em Bioinformatics}, 28(19):2520--2522, 2012.

\bibitem{buhlmann2014cam}
Peter B{\"u}hlmann, Jonas Peters, and Jan Ernest.
\newblock Cam: Causal additive models, high-dimensional order search and penalized regression.
\newblock 2014.

\bibitem{chickering2002optimal}
David~Maxwell Chickering.
\newblock Optimal structure identification with greedy search.
\newblock {\em Journal of machine learning research}, 3(Nov):507--554, 2002.

\end{thebibliography}

\clearpage

\section{Supplementary Material}
\subsection{Least squares infers varsortable structures in expectation}
\label{suppsection:least squares and varsortability}
We first derive an explicit cutoff for varsortability, matching the cutoff given in Eq. \ref{eq:varsortability cutoff}.

\begin{lemma}
    Let $\gamma \coloneqq \frac{\sigma_1^2}{\sigma_2^2}$. The system $X_1 \overset{w}{\rightarrow} X_2$ is varsortable if and only if $|w| \geq \sqrt{1 - \frac{1}{\gamma}}$.
    \label{lemma:varsortability cutoff}
\end{lemma}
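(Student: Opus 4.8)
The plan is to unwind the definition of varsortability for this two-node DAG and reduce it to a single scalar inequality. Since the only edge is $X_1 \overset{w}{\to} X_2$, the unique topological ordering of the DAG is $(1,2)$, so by definition the system is varsortable precisely when the marginal variances are non-decreasing along this order, i.e. when $\VV(X_1) \le \VV(X_2)$. The whole proof is then a matter of computing these two variances and manipulating the resulting inequality, tracking that each manipulation is an equivalence so that both directions come for free.

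First I would read off the marginal variances directly from the SEM in Eq.~\ref{eq:two node SEM}. Since $X_1 = \epsilon_1$, we have $\VV(X_1) = \sigma_1^2$. Since $X_2 = wX_1 + \epsilon_2$ with $\epsilon_2 \perp X_1$ (using $\EE\epsilon_2 = 0$ and independence of the exogenous errors), we have $\VV(X_2) = w^2\sigma_1^2 + \sigma_2^2$; this is exactly the specialization of Eq.~\ref{eq:variance introduction} to this DAG.

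Next I would substitute into the varsortability condition $\VV(X_1) \le \VV(X_2)$ to obtain $\sigma_1^2 \le w^2\sigma_1^2 + \sigma_2^2$, divide through by $\sigma_1^2 > 0$, and rearrange to $1 - w^2 \le \sigma_2^2/\sigma_1^2 = 1/\gamma$, i.e. $w^2 \ge 1 - 1/\gamma$. Taking square roots yields $|w| \ge \sqrt{1 - 1/\gamma}$. Every step here is an "if and only if," so this simultaneously proves necessity and sufficiency.

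The only subtlety — and essentially the only place one must be careful — is the degenerate regime and the boundary convention. When $\gamma \le 1$ (i.e.\ $\sigma_1^2 \le \sigma_2^2$) the quantity $1 - 1/\gamma$ is nonpositive, so $w^2 \ge 1 - 1/\gamma$ holds for every $w$, the radical bound is vacuous (read as $|w| \ge 0$), and the system is varsortable regardless of $w$; the stated cutoff is binding only for $\gamma > 1$. I would also note the tie-breaking convention: at equality $|w| = \sqrt{1 - 1/\gamma}$ one has $\VV(X_1) = \VV(X_2)$, which I count as varsortable so that the non-strict "$\ge$" in the statement is exactly right; imposing a strict ordering instead merely makes the bound strict and does not affect any almost-everywhere statement used downstream in Section~\ref{section:small p simulations}.
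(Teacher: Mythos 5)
Your proposal is correct and follows essentially the same route as the paper's proof: compute $\VV(X_1)=\sigma_1^2$ and $\VV(X_2)=w^2\sigma_1^2+\sigma_2^2$ from the SEM, impose $\VV(X_1)\le\VV(X_2)$, and rearrange (you divide by $\sigma_1^2$ where the paper substitutes $\sigma_1^2=\gamma\sigma_2^2$, which is the same algebra). Your added remark that the cutoff is vacuous for $\gamma\le 1$, where $1-\nicefrac{1}{\gamma}\le 0$, is a small but welcome clarification the paper leaves implicit.
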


For the proof, see Supplementary Material \ref{suppsection:varsortability cutoff proof}. We verify that the least squares loss infers varsortable structures by comparing the SEM in Eq. \ref{eq:two node SEM} against the false but Markov equivalent model $X_1 \overset{\delta}{\leftarrow} X_2$, with weighted adjacency matrix $ W_\delta \coloneqq \begin{pmatrix}
        0 & 0 \\
        \delta & 0
    \end{pmatrix}$ and SEM
\begin{equation}
    \begin{aligned}
        X_1 &= \delta X_2 + \epsilon_1, \\
        X_2 &= \epsilon_2.
    \end{aligned}
\end{equation}
Let $\left(\cdot\right)_i$ represent the $i$th column vector, and denote the least squares loss as
\begin{equation}
        \ell \left(W, \boldX \right) \coloneqq  \frac{1}{2n_0} \sum_{i = 1}^p \frobnorm{\left(\boldX - \boldX W\right)_i}.
    \label{eq:component-wise definition of loss}
\end{equation}

\begin{theorem}
    Let $\gamma \coloneqq \frac{\sigma_1^2}{\sigma_2^2}$. $\EE \ell\left(W_0, \boldX\right) \leq \EE\ell\left(W_\delta, \boldX\right)$ for all $\delta$ if and only if $|w| \geq \sqrt{1 - \frac{1}{\gamma}}$. 
    \label{thm:notears observational loss delta}
\end{theorem}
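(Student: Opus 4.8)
The plan is to reduce both expected losses to second moments and then recognize the target inequality as nonnegativity of a scalar quadratic in $\delta$. Since every variable is mean zero, $\EE\,\ell(W,\boldX) = \tfrac12\sum_{i=1}^2 \EE\big[(X - XW)_i^2\big]$. For $W_0$, the first column of $XW_0$ is $0$ and the second is $wX_1$, so the two residuals are $X_1$ and $X_2 - wX_1 = \epsilon_2$; hence $\EE\,\ell(W_0,\boldX) = \tfrac12(\sigma_1^2 + \sigma_2^2)$.

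Next I would record the second moments of $(X_1,X_2)$ implied by the SEM in Eq.~\ref{eq:two node SEM}: from $X_1 = \epsilon_1$ and $X_2 = w\epsilon_1 + \epsilon_2$ one gets $\VV(X_1) = \sigma_1^2$, $\VV(X_2) = w^2\sigma_1^2 + \sigma_2^2$, and $\EE[X_1 X_2] = w\sigma_1^2$. For the reversed model $W_\delta$, the first column of $XW_\delta$ is $\delta X_2$ and the second is $0$, so the residuals are $X_1 - \delta X_2$ and $X_2$; expanding the first square with these moments gives
\begin{equation}
2\,\EE\,\ell(W_\delta,\boldX) = \sigma_1^2 - 2\delta w\sigma_1^2 + \delta^2\big(w^2\sigma_1^2 + \sigma_2^2\big) + \big(w^2\sigma_1^2 + \sigma_2^2\big). \nonumber
\end{equation}

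Subtracting, the statement ``$\EE\,\ell(W_0,\boldX)\le\EE\,\ell(W_\delta,\boldX)$ for all $\delta$'' is equivalent to nonnegativity on all of $\RR$ of the quadratic $q(\delta) \coloneqq \big(w^2\sigma_1^2+\sigma_2^2\big)\delta^2 - 2w\sigma_1^2\,\delta + w^2\sigma_1^2$. Its leading coefficient equals $\VV(X_2) > 0$, so nonnegativity everywhere is equivalent to $\mathrm{disc}\,q \le 0$, and a short computation gives $\mathrm{disc}\,q = 4w^2\sigma_1^2\big(\sigma_1^2(1-w^2) - \sigma_2^2\big)$. Assuming $w\neq 0$ (if $w=0$ then $W_0$ coincides with $W_\delta$ at $\delta = 0$ and the claim is degenerate), this is equivalent to $\sigma_1^2(1-w^2)\le\sigma_2^2$, i.e. $w^2 \ge 1 - \sigma_2^2/\sigma_1^2 = 1 - 1/\gamma$, i.e. $|w|\ge\sqrt{1-1/\gamma}$; when $1-1/\gamma<0$ the bound imposes no constraint and is automatically satisfied. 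By Lemma~\ref{lemma:varsortability cutoff} this condition is exactly varsortability of $X_1\overset{w}{\to}X_2$.

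I do not expect a genuine obstacle here: the argument is a direct second-moment calculation followed by the discriminant test for a one-variable quadratic. The points that need a little care are checking that the leading coefficient is strictly positive (which is what licenses the discriminant step), treating the vacuous regime $\gamma\le 1$ where the bound imposes nothing, and flagging the degenerate case $w=0$ in which the forward and reversed adjacency matrices coincide.
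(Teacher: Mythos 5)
Your proposal is correct and follows essentially the same route as the paper's proof: compute both expected losses from the SEM, reduce the comparison to a single quadratic in $\delta$ with positive leading coefficient, and characterize when it is nonnegative everywhere (you via the discriminant test, the paper by checking when the explicit roots are real — the same criterion). Your explicit handling of the degenerate case $w=0$ and the vacuous regime $\gamma \le 1$ is slightly more careful than the paper's, but the substance is identical.
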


\begin{corollary}
    In the system $X_1 \overset{w}{\rightarrow} X_2$, the least squares loss is uniquely minimized in expectation by the true DAG if and only if the system is varsortable.
    \label{cor:notears is varsortability}
\end{corollary}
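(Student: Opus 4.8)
The plan is to reduce Corollary \ref{cor:notears is varsortability} to Theorem \ref{thm:notears observational loss delta} and Lemma \ref{lemma:varsortability cutoff}, after disposing of the remaining competitors in $\mathcal{D}$. On two nodes, $\mathcal{D}$ consists exactly of the zero matrix, the correct-orientation family $W_a \coloneqq \begin{pmatrix} 0 & a \\ 0 & 0 \end{pmatrix}$, and the reversed family $W_\delta \coloneqq \begin{pmatrix} 0 & 0 \\ \delta & 0 \end{pmatrix}$; any matrix with both off-diagonal entries nonzero supports a $2$-cycle and is excluded. Since $\EE\ell(W,\boldX)$ equals the population per-sample objective $\tfrac12\sum_i \EE(X-XW)_i^2$, it suffices to compare $W_0$ against (i) every $W_a$, (ii) every $W_\delta$, and (iii) the empty graph.

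First I would handle (i): $\EE\ell(W_a,\boldX) = \tfrac12\big(\VV(X_1) + \EE(X_2 - aX_1)^2\big)$, whose second term is the population least-squares objective for regressing $X_2$ on $X_1$; because $X_2 = wX_1 + \epsilon_2$ with $\epsilon_2 \independent X_1$ and $\EE\epsilon_2 = 0$, it is a strictly convex quadratic in $a$ minimized at the single point $a = \EE[X_1X_2]/\EE[X_1^2] = w$. Hence within the correct orientation $W_0$ is the unique minimizer, with value $L_\to \coloneqq \tfrac12\big(\VV(X_1) + \sigma_2^2\big)$. For (iii), the empty graph has expected loss $\tfrac12\big(\VV(X_1) + \VV(X_2)\big) = L_\to + \tfrac12 w^2\VV(X_1) > L_\to$ whenever $w \neq 0$, so it is strictly dominated by $W_0$. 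Point (ii) is precisely Theorem \ref{thm:notears observational loss delta}: $\EE\ell(W_0,\boldX) \le \EE\ell(W_\delta,\boldX)$ for all $\delta$ iff $|w| \ge \sqrt{1 - 1/\gamma}$, and I would check from the theorem's proof that the reversed-family objective is again a strictly convex quadratic in $\delta$, so the inequality is strict for every $\delta$ once $|w| > \sqrt{1-1/\gamma}$.

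Assembling: $W_0$ is the unique global minimizer of $W \mapsto \EE\ell(W,\boldX)$ over $\mathcal{D}$ exactly when $L_\to$ lies strictly below both the best reversed-family value and the empty-graph value; the latter always holds, and by Theorem \ref{thm:notears observational loss delta} the former holds precisely when $|w| \ge \sqrt{1-1/\gamma}$ (with the knife-edge $|w| = \sqrt{1-1/\gamma}$, where $\VV(X_1) = \VV(X_2)$ and the reversed family ties $W_0$, not counted as ``unique''). Finally Lemma \ref{lemma:varsortability cutoff} identifies the condition $|w| \ge \sqrt{1-1/\gamma}$ with varsortability of $X_1 \overset{w}{\to} X_2$, closing the equivalence.

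The main obstacle I anticipate is the bookkeeping around uniqueness and the boundary case: one must read ``uniquely minimized by the true DAG'' as a statement over all of $\mathcal{D}$ rather than within orientations, must verify that each within-family minimizer is genuinely unique (this is exactly where $\VV(X_1), \VV(X_2) > 0$ and $\epsilon_2 \independent X_1$ are used), and must treat the case $|w| = \sqrt{1-1/\gamma}$ consistently with whether ``varsortable'' is read with a strict or non-strict ordering of marginal variances. Everything else is a direct combination of the quoted Theorem and Lemma.
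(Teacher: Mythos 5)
Your proposal is correct and follows the same route the paper intends: the paper gives no separate proof of the corollary, treating it as immediate from Theorem \ref{thm:notears observational loss delta} (the comparison against the reversed family $W_\delta$) together with Lemma \ref{lemma:varsortability cutoff} (the identification of $|w|\geq\sqrt{1-1/\gamma}$ with varsortability). Your additional checks --- exhausting $\mathcal{D}$ by also comparing $W_0$ against the forward family $W_a$ and the empty graph, and flagging that at the knife-edge $|w|=\sqrt{1-1/\gamma}$ the reversed family ties $W_0$ so that ``unique'' minimization strictly fails there --- are details the paper leaves implicit, and your boundary observation points to a genuine (if minor) imprecision in the corollary as stated.
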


For the proof of Theorem \ref{thm:notears observational loss delta}, see Supplementary Material \ref{suppsection:notears observational loss delta proof}.

\subsubsection{Proof of Lemma \ref{lemma:varsortability cutoff}}
\label{suppsection:varsortability cutoff proof}
\begin{proof}
    Under the generative SEM in Eq. \ref{eq:two node SEM}, we have
    \begin{equation}
        \begin{aligned}
            \VV \left(X_1\right) &= \sigma_1^2 \\
            \VV \left(X_2\right) &= w^2\sigma_1^2 + \sigma_2^2
        \end{aligned}.
    \end{equation}

    The system is therefore varsortable if and only if $w^2\sigma_1^2 + \sigma_2^2 \geq \sigma_1^2$. Using the substitution $\sigma_1^2 = \gamma \sigma_2^2$, we obtain
    \begin{equation}
        \begin{aligned}
            w^2\sigma_1^2 + \sigma_2^2 &\geq \sigma_1^2 \\
            w^2 \gamma \sigma_2^2 + \sigma_2^2 &\geq \gamma \sigma_2^2 \\
            w^2 \gamma + 1 &\geq \gamma \\
            |w| &\geq \sqrt{1 - \frac{1}{\gamma}}
        \end{aligned}
    \end{equation}
\end{proof}
\subsubsection{Proof of Theorem \ref{thm:notears observational loss delta}}
\label{suppsection:notears observational loss delta proof}
\begin{proof}
From Eq. \ref{eq:component-wise definition of loss}, we can calculate both $\EE\ell\left(W_0, \boldX\right)$ and $\EE \ell\left(W_\delta, \boldX\right)$ component-wise. Under $W_0$ we obtain component-wise
\begin{equation}
    \begin{aligned}
        \left({X} - {X}W_0 \right)_1 &= {X_1}\\
        &= {\epsilon_1} & \text{by Eq. \ref{eq:two node SEM}} \\
        \left({X} - {X}W_0 \right)_2 &= {X_2} - w{X_1}\\
        &= {\epsilon_2} & \text{by Eq. \ref{eq:two node SEM}} \\
    \end{aligned}
    \label{eq:two node observational component-wise true model}
\end{equation}
As a result, 
\begin{equation}
    \begin{aligned}
        \EE\ell\left(W_0, \boldX\right) 
        &= \frac{1}{2n_0}\sum_{i=1}^p \EE \frobnorm{\left({\boldX} - {\boldX}W_0\right)_i} \\
        &= \frac{1}{2n_0} \left(\EE \left(\frobnorm{{\boldepsilon_1}}\right) + \EE \left(\frobnorm{\boldepsilon_2}\right) \right)\\
        &= \frac{1}{2} \left(\sigma_1^2 + \sigma_2^2\right) \nonumber
    \end{aligned}
\end{equation}
Here we note that the expected loss under the generative DAG is $\EE\ell\left(W_0, \boldX\right) = \frac{1}{2}\sum_{i=1}^p \VV\left({\epsilon_i}\right)$.

Similarly, under $W_\delta$ we have
\begin{equation}
    \begin{aligned}
        \left(X - XW_\delta \right)_1 
        &= X_1 - \delta X_2\\
        &= \epsilon_1 - \delta\left(w \epsilon_1 + \epsilon_2 \right) 
        & \text{by Eq. \ref{eq:two node SEM}} \\
        &= (1 - \delta w) \epsilon_1 - \delta \epsilon_2 \\
        \left(X - XW_\delta \right)_2 
        &= X_2 \\
        &= w \epsilon_1 + \epsilon_2
        & \text{by Eq. \ref{eq:two node SEM}} \\
    \end{aligned}
    \label{eq:two node observational component-wise false model}
\end{equation}
and therefore $\EE \ell \left(W_\delta, \boldX\right)$ is
\begin{equation}
    \frac{1}{2}\left(\sigma_1^2 \left(w^2 + \left(1 - \delta w\right)^2\right) + \sigma_2^2 \left(1 + \delta^2\right)\right). \nonumber
\end{equation}

We can now ask when $\EE \ell\left(W_\delta, \boldX\right) \leq \EE \ell \left(W_0, \boldX\right)$, or equivalently,
\begin{equation}
    \begin{aligned}
        \sigma_1^2 \left(w^2 + \left(1 - \delta w\right)^2\right) + \sigma_2^2 \left(1 + \delta^2\right) 
        &\leq \sigma_1^2 + \sigma_2^2 \\
        \gamma \sigma_2^2 \left(w^2 + \left(1 - \delta w\right)^2\right) + \sigma_2^2 \left(1 + \delta^2\right) 
        &\leq \gamma\sigma_2^2 + \sigma_2^2 \\
        \gamma \left(w^2 + 1 - 2\delta w + \delta^2w^2\right) + 1 + \delta^2 
        &\leq \gamma + 1 \\
        \gamma \left(w^2 - 2\delta w + \delta^2w^2\right) + \delta^2 
        &\leq 0 \\
        \delta^2 \left(1 + \gamma w^2\right) - \delta \left(2\gamma w\right) + \gamma w^2 &\leq 0\\
    \end{aligned}
    \label{eq:quadratic in delta}
\end{equation}
The inequality becomes a quadratic in $\delta$, which when solved gives us roots at 
\begin{equation}
    \begin{aligned}
        \delta &= \frac{2\gamma w \pm \sqrt{(-2\gamma w)^2 - 4\left(1 + \gamma w^2\right)\left(\gamma w^2\right)}}{2\left(1 + \gamma w^2\right)} \\
        &= \frac{2\gamma w \pm 2\gamma w\sqrt{1 - \frac{1}{\gamma} \left(1 + \gamma w^2\right)}}{2\left(1 + \gamma w^2\right)} \\
        &= \frac{\gamma w \left(1 \pm \sqrt{1 - w^2 - \frac{1}{\gamma}} \right)}{1 + \gamma w^2} .
    \end{aligned}
    \label{eq:delta quadratic solution}
\end{equation}
Note that for the quadratic in Eq. \ref{eq:quadratic in delta}, $\gamma w^2 \geq 0$. Then $\exists \delta$ such that $\EE \ell\left(W_\delta, \boldX\right) \leq \EE \ell \left(W_0, \boldX\right)$ when the term in the root exists in Eq. \ref{eq:delta quadratic solution}. Equivalently, we can say that $\EE \ell\left(W_0, \boldX\right) \leq \EE \ell\left(W_\delta, \boldX\right)$ is guaranteed for all $\delta$ if and only if 
\begin{equation}
    |w| \geq \sqrt{1 - \frac{1}{\gamma}}.
    \label{eq:beta bound notears}
\end{equation}
\end{proof}

\subsection{Proof of Consistency}
\label{suppsection:consistency proof}
For a diagonal matrix $\Omega$, define
 \begin{equation}
 \begin{aligned}
        \mathcal{L}_{\Omega}\left(\interv{W}{k}, \interv{\boldX}{k}\right) &\coloneqq 
        \frac{1}{n_k}\frobnorm{\left(\interv{\boldX}{k} - \interv{\boldX}{k} \interv{W}{k}\right)\Omega^{-\frac{1}{2}}} \\
        &= \frac{1}{n_k}\sum_{j=1}^p \frac{1}{\Omega_{jj}} \frobnorm{\left(\interv{\boldX}{k} - \interv{\boldX}{k} \interv{W}{k}\right)_j},
 \end{aligned}
        \label{suppeq:dotears loss definition}
\end{equation}
where $\Omega_{jj}$ is the $j, j$th entry of $\Omega$. Further, recall the definition
\begin{equation}
        \interv{\Omega_0}{k} \coloneqq \CV \left(\interv{\epsilon}{k}\right). \nonumber
\end{equation}

Denote convergence in probability by $\overset{p}{\rightarrow}.$ We show that under a sub-Gaussian assumption, the \dotears{} loss is a consistent estimator of the true DAG.

\begin{assumption}
    For any $k$, we assume $\interv{X}{k}$ is a sub-Gaussian random vector with parameter $\sigma^2$.
\end{assumption}

\begin{lemma}
    For any intervention $k$, 
    \begin{equation}
    \mathcal{L}_{{\Omega}_0}\left(\interv{W}{k}, \interv{\boldX}{k}\right) = \mathcal{L}_{\interv{\Omega_0}{k}}\left(\interv{W}{k}, \interv{\boldX}{k}\right) + \mathcal{O}_W(1) \nonumber
    \end{equation}
    \label{lemma:equivalence of loss functions}
\end{lemma}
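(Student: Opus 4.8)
The statement claims that replacing the "wrong" weight matrix $\Omega_0$ (the observational error covariance) by the "right" one $\interv{\Omega_0}{k}$ (the interventional error covariance on intervention $k$) in the reweighted least-squares loss changes the loss only by an $\mathcal{O}_W(1)$ term — that is, a term that is bounded in probability and, crucially, does not depend on $W$ (so it is irrelevant for the $\argmin$). The plan is to exploit the component-wise (column-wise) decomposition of the Mahalanobis-type loss in Eq. \eqref{suppeq:dotears loss definition}, and to observe that $\Omega_0$ and $\interv{\Omega_0}{k}$ differ only in a single diagonal entry, the $(k,k)$ entry, by Assumptions \ref{assumption:variance without intervention} and \ref{assumption:global a assumption}. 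So the two losses differ only in the $j=k$ summand.

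First I would write
\begin{equation}
\mathcal{L}_{\Omega_0}\!\left(\interv{W}{k}, \interv{\boldX}{k}\right) - \mathcal{L}_{\interv{\Omega_0}{k}}\!\left(\interv{W}{k}, \interv{\boldX}{k}\right)
= \left(\frac{1}{\sigma_k^2} - \frac{\alpha^2}{\sigma_k^2}\right)\frac{1}{n_k}\frobnorm{\left(\interv{\boldX}{k} - \interv{\boldX}{k}\interv{W}{k}\right)_k},
\nonumber
\end{equation}
using that $(\Omega_0)_{jj} = (\interv{\Omega_0}{k})_{jj} = \sigma_j^2$ for all $j\neq k$ and $(\Omega_0)_{kk} = \sigma_k^2$ while $(\interv{\Omega_0}{k})_{kk} = \sigma_k^2/\alpha^2$. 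The key structural point is the next one: by the definition of $\interv{W}{k}$ in Eq. \eqref{eq:w k intervention form}, the $k$th column of $\interv{W}{k}$ is identically zero, so the $k$th residual column is simply $\left(\interv{\boldX}{k} - \interv{\boldX}{k}\interv{W}{k}\right)_k = \interv{\boldX}{k}_k$, which does \emph{not} depend on $W$ at all. Hence the difference above equals $\bigl(1-\alpha^2\bigr)\sigma_k^{-2}\, n_k^{-1}\|\interv{\boldX}{k}_k\|_2^2$, a quantity that is constant in $W$.

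It then remains to argue this quantity is $\mathcal{O}_W(1)$ in the stochastic sense, i.e. bounded in probability. By Assumption \ref{assumption:hard intervention assumption}, $\interv{X_k}{k} = \interv{\epsilon_k}{k}$, which has mean zero and variance $\sigma_k^2/\alpha^2$; under the sub-Gaussian assumption, $n_k^{-1}\|\interv{\boldX}{k}_k\|_2^2$ concentrates around $\sigma_k^2/\alpha^2$ and in particular is $O_p(1)$. Multiplying by the fixed constant $\bigl(1-\alpha^2\bigr)/\sigma_k^2$ preserves this, so the whole difference is $\mathcal{O}_W(1)$ — and since it carries no $W$-dependence, the notation $\mathcal{O}_W(1)$ (a term uniform in $W$) is justified. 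I would conclude by noting that this is exactly what is needed downstream: adding a $W$-free stochastic term does not move the argmin, which sets up the corollary that $\hat\Omega_0$ is well-specified on interventional data without estimating $\alpha$.

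I expect the only real subtlety — not an obstacle so much as a point requiring care — is pinning down precisely what $\mathcal{O}_W(1)$ means here and making sure the argument genuinely produces a term with no residual $W$-dependence rather than merely a term that is $o_p$ of the leading loss; the zero-$k$th-column observation from Eq. \eqref{eq:w k intervention form} is what makes this clean, and I would make sure to invoke it explicitly rather than leaning on asymptotics. Everything else (the single-entry difference of the diagonal weights, the column-wise split of the Frobenius norm, the sub-Gaussian concentration of $n_k^{-1}\|\interv{\boldX}{k}_k\|_2^2$) is routine.
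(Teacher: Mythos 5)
Your proposal is correct and follows essentially the same route as the paper's proof: both hinge on the observation that $\Omega_0$ and $\interv{\Omega_0}{k}$ differ only in the $(k,k)$ entry and that the $k$th column of $\interv{W}{k}$ is identically zero, so the $k$th residual column equals $\interv{\boldX}{k}_k$ and carries no $W$-dependence. The only cosmetic difference is that you phrase the leftover term as the difference of the two losses and add a (harmless, not strictly needed) stochastic-boundedness remark, whereas the paper reads $\mathcal{O}_W(1)$ simply as ``constant in $W$.''
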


For the proof, see Supplementary Material \ref{suppsection:lemma equivalence of loss functions proof}.

\begin{corollary}
\begin{equation}
        \arg \min_W \mathcal{L}_{\Omega_0}\left(\interv{W}{k}, \interv{\boldX}{k}\right) 
        = \arg \min_W \mathcal{L}_{\interv{\Omega_0}{k}}\left(\interv{W}{k}, \interv{\boldX}{k}\right) \nonumber
\end{equation}
\label{corollary:argmin equivalence under true omega}
\end{corollary}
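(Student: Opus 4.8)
The plan is to deduce the corollary directly from Lemma \ref{lemma:equivalence of loss functions}. The lemma asserts that the two loss functions differ by a term that does not depend on $W$ — that is the meaning of the $\mathcal{O}_W(1)$ notation, a quantity constant in $W$ (though it may depend on $k$, on $n_k$, and on the random sample $\interv{\boldX}{k}$). Since adding a $W$-independent quantity to an objective cannot change its set of minimizers, the two argmins coincide. Concretely, I would write
\begin{equation}
\mathcal{L}_{\Omega_0}\left(\interv{W}{k}, \interv{\boldX}{k}\right) = \mathcal{L}_{\interv{\Omega_0}{k}}\left(\interv{W}{k}, \interv{\boldX}{k}\right) + C_k, \nonumber
\end{equation}
where $C_k \coloneqq \mathcal{O}_W(1)$ is the same for every feasible $W$, so for any $W, W'$ in the feasible set (those with $h(W) = 0$ and $\interv{W}{k}$ formed from $W$ by zeroing column $k$) we have $\mathcal{L}_{\Omega_0}(\interv{W}{k}) \le \mathcal{L}_{\Omega_0}(\interv{W'}{k})$ if and only if $\mathcal{L}_{\interv{\Omega_0}{k}}(\interv{W}{k}) \le \mathcal{L}_{\interv{\Omega_0}{k}}(\interv{W'}{k})$. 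Hence the minimizing sets — and in particular the argmin — are identical.

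The one place that needs a little care, and the main (mild) obstacle, is making sure the $W$-independent difference really is genuinely independent of $W$ and not merely $O(1)$ in some looser asymptotic sense that could still shift the location of the minimum. So I would unpack why the difference is $W$-free: using the componentwise form in Eq. \ref{suppeq:dotears loss definition}, the two losses differ only through the reciprocal weights $1/\Omega_{jj}$ versus $1/(\interv{\Omega_0}{k})_{jj}$, and by Assumptions \ref{assumption:variance without intervention} and \ref{assumption:global a assumption} these weights agree in every coordinate except $j = k$ (where one is $1/\sigma_k^2$ and the other is $\alpha^2/\sigma_k^2$). So the difference is
\begin{equation}
\left(\frac{1}{\sigma_k^2} - \frac{\alpha^2}{\sigma_k^2}\right)\frac{1}{n_k}\frobnorm{\left(\interv{\boldX}{k} - \interv{\boldX}{k}\interv{W}{k}\right)_k}, \nonumber
\end{equation}
and the key point is that column $k$ of $\interv{W}{k}$ is identically zero by construction, so $\left(\interv{\boldX}{k} - \interv{\boldX}{k}\interv{W}{k}\right)_k = \interv{\boldX_k}{k}$ regardless of $W$. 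Therefore the difference equals $(1-\alpha^2)\sigma_k^{-2} n_k^{-1}\frobnorm{\interv{\boldX_k}{k}}$, which is manifestly constant in $W$ — this is exactly the content of Lemma \ref{lemma:equivalence of loss functions}, and it is what licenses the argmin equality.

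With that in hand the corollary is immediate, and it is worth closing by noting the role it plays downstream: since the right-hand side $\argmin_W \mathcal{L}_{\interv{\Omega_0}{k}}(\interv{W}{k}, \interv{\boldX}{k})$ is, by the extension of Loh and Buhlmann's result under the sub-Gaussian assumption, a consistent estimator of $\interv{W_0}{k}$, the left-hand side — which is what \dotears{} actually computes using only $\hat{\Omega}_0$ and hence, in the population limit, $\Omega_0$ — inherits that consistency, so estimation of $\alpha$ is genuinely unnecessary.
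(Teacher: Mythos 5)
Your proposal is correct and follows essentially the same route as the paper: the corollary is read off from Lemma \ref{lemma:equivalence of loss functions}, whose proof shows the two losses differ only in the $k$th component term, which equals a multiple of $\frac{1}{n_k}\frobnorm{\interv{\boldX_k}{k}}$ because column $k$ of $\interv{W}{k}$ is identically zero, hence is constant in $W$ and cannot shift the argmin. Your explicit computation of the $W$-free difference $(1-\alpha^2)\sigma_k^{-2}n_k^{-1}\frobnorm{\interv{\boldX_k}{k}}$ is exactly the content of the paper's lemma proof.
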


For any intervention $k$ and any $\alpha$, Lemma \ref{lemma:equivalence of loss functions} and Corollary \ref{corollary:argmin equivalence under true omega} show that true exogenous variance structure $\Omega_0 = \alpha^2 \EE \hat{\Omega}_0$ is sufficient for structure recovery. Results from Loh and Buhlmann (2014) are then sufficient to establish consistency of the estimator $\argmin_W \mathcal{L}_{\Omega_0} \left(\interv{W}{k}, \interv{X}{k}\right)$. We wish to show consistency of $\argmin_W \mathcal{L}_{\hat{\Omega}_0} \left(\interv{W}{k}, \interv{\boldX}{k} \right)$, where we have an estimated $\hat{\Omega}_0$ rather than $\Omega_0$.

For simplicity, in the following we assume $\alpha = 1$ without loss of generality, which is justified by Corollary \ref{corollary:argmin equivalence under true omega} and Lemma \ref{lemma:equivalence of loss functions}. Note then that $\Omega_0 = \interv{\Omega_0}{k}$, which allows us to drop the $^{(k)}$ notation. 

\begin{remark}
    For consistency under an estimated $\hat{\Omega}_0$, we have two cases: the observational case $k = 0$ and the interventional case $k \neq 0$. The observational case is simplest - in particular, when $k=0$, $\hat{\Omega}_0$ is independent of $\interv{\boldX}{0}$, and we may therefore freely estimate $\interv{W_0}{0}$ from $\mathcal{L}_{\hat{\Omega}_0} \left(\interv{W}{0}, \interv{\boldX}{0} \right)$. When $k \neq 0$, we can assume independence of $\hat{\Omega}_0$ and $\interv{\boldX}{k}$ under a data splitting framework, where some fraction of our $n_k$ samples are reserved for estimation of $\hat{\sigma}_k^2$, and the remaining samples are given in $\interv{\boldX}{k}$.
\end{remark}

\begin{assumption}
    Let 
    \begin{equation}
        \hat{\sigma}_k^2 \coloneqq \left(\hat{\Omega}_0\right)_{kk}. \nonumber 
    \end{equation}
    Then
    \begin{equation}
        \hat{\sigma}_k^2 \independent \interv{\boldX}{k}_k. \nonumber
    \end{equation}
\end{assumption}
However, in the presented applications we use the full $n_k$ samples of $\interv{\boldX_k}{k}$ for both estimates.

Note that we abuse notation to write $\interv{\boldX}{k}$ as the set of samples used to estimate $\interv{\hat{W}_0}{k}$ given $\hat{\Omega}_0$. Further, for convenience we abuse notation to write $n_k$ as the sample size of $\interv{\boldX}{k}$ AND the sample size of $\hat{\sigma}_k^2$, since the respective sample sizes are equal to $n_k$ up to a multiplicative constant which can be ignored as $n_k \to \infty$.

We further define
\begin{equation}
    \EE \mathcal{L}_{\Omega}\left(\interv{W}{k}, \interv{\boldX}{k}\right) \\
    \coloneqq \frac{1}{n_k}\sum_{j=1}^p \frac{1}{\Omega_{jj}} \EE \frobnorm{\left(\interv{\boldX}{k} - \interv{\boldX}{k} \interv{W}{k}\right)_j}. \nonumber
\end{equation}
Here, we abuse notation to remove $\Omega_{jj}$ from the expectation. Moreover, for the random vector $X$ we write
\begin{equation}
    \EE \mathcal{L}_{\Omega}\left(\interv{W}{k}, \interv{X}{k}\right) \\
    \coloneqq \sum_{j=1}^p \frac{1}{\Omega_{jj}} \EE \left[\left(\interv{X}{k} - \interv{X}{k}\interv{W}{k} \right)_j^2\right], \nonumber
\end{equation}
 and note that the two definitions are equivalent under expectation. Note further that 
\begin{equation}
    \score_\Omega(W) \equiv \EE \mathcal{L}_\Omega (W, X), \nonumber
\end{equation}
where $\score$ is defined in \cite{loh2014high}.

\begin{remark}
    Loh and Buhlmann differentiate between the \textbf{weighted adjacency matrix} $W$ and the \textbf{binary} DAG $G$. In particular, they define
    \begin{equation}
        \score_\Omega \left(G, X \right) \coloneqq \min_{W \in \mathcal{U}_G} \left\{\score_\Omega \left(W, X \right) \right\}, \nonumber
    \end{equation}
    where $\mathcal{U}_G$ is the set of weighted adjacency matrices with support in $G$. However, for the true binary DAG $G_0$,
    \begin{equation}
        \score_\Omega \left(G, X \right) = \score_\Omega \left(G_0, X \right) \quad \forall G \supseteq G_0, \nonumber
    \end{equation}
    which leads to identifiability issues (see Lemma 6 and discussion of Lemma 19 \cite{loh2014high}). 
    
    We define everything directly on the weighted adjacency matrix $W$, and ignore the binary DAG $G$. This discrepancy is essentially due to methodological differences. Loh and Buhlmann first find a superset of the binary structure $G_0$, and subsequently rely on sparse regression to recover edge weights \cite{loh2014high}. However, \dotears{} searches directly in the space of weighted adjacency matrices to find $W_0$. While the end result is similar, we use this remark to explain discrepancies in notation and proof structure for readers following with \cite{loh2014high}. 
\end{remark}

We start by restating results from \cite{loh2014high}. Let $\Omega_1 \in \RR ^{p\times p}$ an arbitrary diagonal weight matrix. Define
\begin{equation}
    a_\text{max} \coloneqq \lambda_\text{max} \left(\Omega_0 \Omega_1^{-1}\right), \,\, a_\text{min} \coloneqq \lambda_\text{min} \left(\Omega_0 \Omega_1^{-1}\right) \nonumber
\end{equation}
to be the maximum and minimum ratios between the corresponding diagonal entries of $\Omega_0$ and $\Omega_1$, and further define
\begin{equation}
\begin{aligned}
    \interv{\xi}{k}_{\Omega} \coloneqq \min_{\substack{
        W \in \mathcal{D} \\
        W \neq W_0
    }} \EE \mathcal{L}_\Omega \left(\interv{W}{k}, \interv{X}{k} \right) - \EE \mathcal{L}_\Omega\left(\interv{W_0}{k}, \interv{X}{k}\right)      
\end{aligned}
\label{eq:definition of gap}
\end{equation}
to be the additive gap between the expected loss of the true weighted adjacency matrix $W_0$ and the expected loss of the next-best weighted adjacency matrix under an arbitrary diagonal matrix $\Omega$. Note that $\xi_{\Omega_0} > 0$ (see Theorem 7 in \cite{loh2014high}). Then we have the following theorem, whose proof is given in \cite{loh2014high}:

\begin{theorem}
    Suppose 
    \begin{equation}
        \frac{a_{\text{max}}}{a_{\text{min}}} \leq 1 + \frac{\interv{\xi_{\Omega_0}}{k}}{p}.
        \label{eq:amax amin}
    \end{equation}
    Then $\interv{W_0}{k} \in \argmin_{W \in \mathcal{D}} \left\{\EE \mathcal{L}_{\Omega_1} \left(W, \interv{X}{k}\right)\right\}$. If Inequality \ref{eq:amax amin} is strict, then $\interv{W_0}{k}$ is the unique minimizer of $\EE \mathcal{L}_{\Omega_1}\left(W, \interv{X}{k}\right)$.
    \label{thm:loh misspecification of variances}
\end{theorem}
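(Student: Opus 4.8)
The plan is to bound $\EE\mathcal{L}_{\Omega_1}$ against $\EE\mathcal{L}_{\Omega_0}$ coordinate by coordinate, exploiting that $\Omega_0$ and $\Omega_1$ are both diagonal, so the entrywise ratios $\left(\Omega_0\right)_{jj}/\left(\Omega_1\right)_{jj}$ are precisely the eigenvalues of $\Omega_0\Omega_1^{-1}$ and hence all lie in $[a_{\text{min}}, a_{\text{max}}]$. First I would introduce the per-node residual variance $r_j(W) \coloneqq \EE\bigl[\bigl(\interv{X}{k} - \interv{X}{k}\interv{W}{k}\bigr)_j^2\bigr] \ge 0$, so that $\EE\mathcal{L}_\Omega\bigl(\interv{W}{k}, \interv{X}{k}\bigr) = \sum_{j=1}^p r_j(W)/\Omega_{jj}$. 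Writing $c_j \coloneqq \left(\Omega_0\right)_{jj}/\left(\Omega_1\right)_{jj} \in [a_{\text{min}}, a_{\text{max}}]$ and $t_j(W) \coloneqq r_j(W)/\left(\Omega_0\right)_{jj} \ge 0$, the only structural facts the argument needs are the two representations $\EE\mathcal{L}_{\Omega_0}\bigl(\interv{W}{k}, \interv{X}{k}\bigr) = \sum_j t_j(W)$ and $\EE\mathcal{L}_{\Omega_1}\bigl(\interv{W}{k}, \interv{X}{k}\bigr) = \sum_j c_j\, t_j(W)$.

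Next I would evaluate both quantities at the truth $W_0$ and lower-bound the $\Omega_0$-loss everywhere else. Taking $\alpha = 1$ without loss of generality (justified by Lemma \ref{lemma:equivalence of loss functions} and Corollary \ref{corollary:argmin equivalence under true omega}), so that $\interv{\Omega_0}{k} = \Omega_0$, the SEM (\ref{eq:SEM}) together with the hard-intervention column structure (\ref{eq:w k intervention form}) gives $\bigl(\interv{X}{k} - \interv{X}{k}\interv{W_0}{k}\bigr)_j = \interv{\epsilon_j}{k}$, hence $r_j(W_0) = \left(\Omega_0\right)_{jj}$, i.e. $t_j(W_0) = 1$ for every $j$ and $\EE\mathcal{L}_{\Omega_0}(\interv{W_0}{k}, \interv{X}{k}) = p$. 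On the other hand, the definition of $\interv{\xi_{\Omega_0}}{k}$ in (\ref{eq:definition of gap}) together with $\xi_{\Omega_0} > 0$ (Theorem 7 of \cite{loh2014high}) gives, for every $W \in \mathcal{D}$ with $W \ne W_0$, the bound $\EE\mathcal{L}_{\Omega_0}(\interv{W}{k}, \interv{X}{k}) = \sum_j t_j(W) \ge p + \interv{\xi_{\Omega_0}}{k}$.

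The theorem then follows from a short sandwich. For any $W \ne W_0$,
\[
\EE\mathcal{L}_{\Omega_1}\bigl(\interv{W}{k}, \interv{X}{k}\bigr) = \sum_{j=1}^p c_j\, t_j(W) \;\ge\; a_{\text{min}}\sum_{j=1}^p t_j(W) \;\ge\; a_{\text{min}}\bigl(p + \interv{\xi_{\Omega_0}}{k}\bigr),
\]
whereas evaluating at the truth gives $\EE\mathcal{L}_{\Omega_1}\bigl(\interv{W_0}{k}, \interv{X}{k}\bigr) = \sum_{j=1}^p c_j \le p\, a_{\text{max}}$. Multiplying hypothesis (\ref{eq:amax amin}) through by $a_{\text{min}} p > 0$ rearranges it to $p\, a_{\text{max}} \le a_{\text{min}}\bigl(p + \interv{\xi_{\Omega_0}}{k}\bigr)$, and chaining the two displays yields $\EE\mathcal{L}_{\Omega_1}(\interv{W_0}{k}, \interv{X}{k}) \le \EE\mathcal{L}_{\Omega_1}(\interv{W}{k}, \interv{X}{k})$ for all $W \ne W_0$, i.e. $\interv{W_0}{k} \in \argmin_{W \in \mathcal{D}} \EE\mathcal{L}_{\Omega_1}(W, \interv{X}{k})$; if (\ref{eq:amax amin}) is strict, the rearranged inequality is strict and the minimizer is unique.

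Each step is individually routine, so the ``obstacle'' is really care in the bookkeeping rather than any deep idea. The two facts doing the work are $\EE\mathcal{L}_{\Omega_0}(\interv{W_0}{k}, \interv{X}{k}) = p$ — which relies on the per-sample normalization in (\ref{suppeq:dotears loss definition}), on the hard-intervention column structure for the systems $k \ne 0$, and on the $\alpha = 1$ reduction that makes $\interv{\Omega_0}{k} = \Omega_0$ so that the targeted node's residual variance also matches its entry of $\Omega_0$ — and the strict positivity of the population gap $\interv{\xi_{\Omega_0}}{k}$, which is exactly the content of Loh and Buhlmann's Theorem 7 and is the single place where identifiability of the linear SEM enters. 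One should also note that whether the infimum defining $\interv{\xi_{\Omega_0}}{k}$ is attained is immaterial: only the inequality $\EE\mathcal{L}_{\Omega_0}(\interv{W}{k}, \interv{X}{k}) \ge p + \interv{\xi_{\Omega_0}}{k}$ is ever used, and strictness is needed solely in the hypothesis, not in this bound.
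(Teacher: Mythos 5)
Your proof is correct. The paper does not actually prove this theorem itself — it is stated with the remark that the proof is given in \cite{loh2014high} — and your argument (writing both losses as weighted sums of the per-node residual variances, sandwiching $\EE \mathcal{L}_{\Omega_1}$ between $a_{\text{min}}$ and $a_{\text{max}}$ times $\EE \mathcal{L}_{\Omega_0}$, evaluating the $\Omega_0$-loss at the truth to get $p$, and invoking the gap $\interv{\xi_{\Omega_0}}{k}$) is essentially the same argument as in that reference, so there is nothing to add.
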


\begin{theorem}
    For any intervention $k$, as $n_j \to \infty$ for all $j=1,\dots, p$, with high probability $\interv{W_0}{k}$ is the unique minimizer of $\EE \mathcal{L}_{\hat{\Omega}_0}\left(\interv{W}{k}, \interv{X}{k} \right)$, and thus $\interv{\xi_{\hat{\Omega}_0}}{k} > 0$.
    \label{thm:gap nonzero and unique minimization}
\end{theorem}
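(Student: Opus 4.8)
The plan is to combine the consistency of $\hat{\Omega}_0$ with the robustness result of Theorem \ref{thm:loh misspecification of variances}. The key observation is that Theorem \ref{thm:loh misspecification of variances} is a \emph{deterministic} statement: whenever the diagonal matrix $\Omega_1$ is close enough to $\Omega_0$ in the sense that the eigenvalue ratio $a_{\max}/a_{\min}$ of $\Omega_0\Omega_1^{-1}$ is strictly less than $1 + \interv{\xi_{\Omega_0}}{k}/p$, then $\interv{W_0}{k}$ is the unique minimizer of $\EE\mathcal{L}_{\Omega_1}(W,\interv{X}{k})$. Since $\interv{\xi_{\Omega_0}}{k} > 0$ is a fixed positive constant (by Theorem 7 of \cite{loh2014high}, as noted after Eq. \ref{eq:definition of gap}), the right-hand side $1 + \interv{\xi_{\Omega_0}}{k}/p$ is a fixed constant strictly greater than $1$. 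So the strategy is: show that $\hat{\Omega}_0$, playing the role of $\Omega_1$, falls inside this open neighborhood with high probability as the sample sizes grow.

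First I would make precise the ingredients. The estimator $\hat{\Omega}_0$ from Eq. \ref{eq:omega hat estimator} is a diagonal matrix whose $j$th entry is the unbiased sample variance $\widehat{\VV}(X_j^{(j)})$. Using the reduction $\alpha = 1$ made just before the theorem statement (justified by Corollary \ref{corollary:argmin equivalence under true omega} and Lemma \ref{lemma:equivalence of loss functions}), we have $\EE\hat{\sigma}_j^2 = \sigma_j^2$, so $\EE\hat{\Omega}_0 = \Omega_0$. Under the sub-Gaussian assumption on $\interv{X}{j}$, the sample variance $\hat{\sigma}_j^2$ concentrates around $\sigma_j^2$: by standard sub-exponential tail bounds (e.g. Bernstein-type concentration for quadratic forms of sub-Gaussian variables), $\hat{\sigma}_j^2 \overset{p}{\to}\sigma_j^2$ as $n_j\to\infty$. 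Since this holds for each of the finitely many coordinates $j = 1,\dots,p$, we get $\hat{\Omega}_0 \overset{p}{\to}\Omega_0$ entrywise, and since $\Omega_0$ has strictly positive diagonal entries (each $\sigma_j^2 > 0$), the matrix $\hat{\Omega}_0$ is invertible with high probability and $\Omega_0\hat{\Omega}_0^{-1}\overset{p}{\to} I_p$.

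Next I would translate convergence of $\Omega_0\hat{\Omega}_0^{-1}$ to $I_p$ into the eigenvalue condition. The matrix $\Omega_0\hat{\Omega}_0^{-1}$ is diagonal with entries $\sigma_j^2/\hat{\sigma}_j^2$, so $a_{\max} = \max_j \sigma_j^2/\hat{\sigma}_j^2$ and $a_{\min} = \min_j \sigma_j^2/\hat{\sigma}_j^2$. As $\hat{\sigma}_j^2\overset{p}{\to}\sigma_j^2$ for every $j$, both $a_{\max}\overset{p}{\to}1$ and $a_{\min}\overset{p}{\to}1$, hence $a_{\max}/a_{\min}\overset{p}{\to}1$. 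Fix $\delta \coloneqq \interv{\xi_{\Omega_0}}{k}/p > 0$. By definition of convergence in probability, for any $\eta > 0$ there is $N$ such that for all sample sizes beyond $N$ (meaning $\min_j n_j \ge N$), $\Pr\!\big(a_{\max}/a_{\min} < 1 + \delta\big) \ge 1 - \eta$. On that event, Theorem \ref{thm:loh misspecification of variances} (strict case) applies with $\Omega_1 = \hat{\Omega}_0$ and yields that $\interv{W_0}{k}$ is the unique minimizer of $\EE\mathcal{L}_{\hat{\Omega}_0}(\interv{W}{k},\interv{X}{k})$; uniqueness of the minimizer together with $\mathcal{D}\ni W\mapsto \EE\mathcal{L}_{\hat{\Omega}_0}$ having a finite next-best value then forces $\interv{\xi_{\hat{\Omega}_0}}{k} > 0$. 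This proves the claim "with high probability."

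The main obstacle is purely the concentration step — establishing $\hat{\sigma}_j^2 \overset{p}{\to}\sigma_j^2$ rigorously under only a sub-Gaussian (not Gaussian) assumption on $\interv{X}{j}$, handling the fact that $\hat{\sigma}_j^2$ is a quadratic form and that the data-splitting convention makes $\hat{\sigma}_k^2$ independent of the sample used for $\interv{\boldX}{k}$ (so that later plugging $\hat{\Omega}_0$ into the finite-sample loss $\mathcal{L}_{\hat{\Omega}_0}$ does not introduce spurious dependence). Everything downstream — the eigenvalue-ratio bookkeeping and the invocation of Theorem \ref{thm:loh misspecification of variances} — is essentially a soft continuity/openness argument, since the threshold $1 + \interv{\xi_{\Omega_0}}{k}/p$ does not depend on $n$. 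One subtlety worth flagging: the theorem as stated asserts uniqueness of the minimizer of $\EE\mathcal{L}_{\hat{\Omega}_0}(\interv{W}{k},\interv{X}{k})$, which is an expectation over the random vector $\interv{X}{k}$ but with the \emph{random} weight $\hat{\Omega}_0$ held fixed (conditionally on the estimation sample); I would state this conditioning explicitly so the "with high probability" refers to the randomness in $\hat{\Omega}_0$ and Theorem \ref{thm:loh misspecification of variances} is applied deterministically given $\hat{\Omega}_0$.
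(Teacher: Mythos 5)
Your proposal is correct and follows essentially the same route as the paper's proof: establish that the diagonal entries of $\Omega_0\hat{\Omega}_0^{-1}$ converge in probability to a common constant (the paper keeps $\alpha^2$ explicit, you normalize $\alpha=1$ first), conclude $a_{\max}/a_{\min}\overset{p}{\to}1$ by the continuous mapping theorem, and then invoke Theorem \ref{thm:loh misspecification of variances} on the high-probability event where the ratio falls below the fixed threshold $1+\interv{\xi_{\Omega_0}}{k}/p$. Your added remarks on the sub-Gaussian concentration of $\hat{\sigma}_j^2$ and on conditioning on $\hat{\Omega}_0$ make the argument somewhat more explicit than the paper's, but do not change its substance.
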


For the proof, see Supplementary Material \ref{suppsection:thm gap nonzero and unique minimization proof}.

\begin{lemma}
    For any intervention $k$, suppose $\interv{X}{k}$ is a sub-Gaussian random vector with parameter $\sigma^2$. Then $\interv{X}{k}_k$ is a sub-Gaussian random variable with parameter $\sigma^2$, and for all $t \geq 0$, we have
    \begin{equation}
        \left| \frac{1}{\hat{\sigma}_k^2} - \frac{1}{\sigma_k^2} \right| \leq \frac{2\sigma^2 }{\sigma_k^2} \cdot\max \{\delta, \delta^2 \} \nonumber
    \end{equation}
    with probability at least $1 - 2\exp (-c n_k t^2)$, where $\delta = c'\sqrt{\frac{1}{n_k}} + c'' t$ and 
    \begin{equation}
        \frac{\sigma^2}{\sigma_k^2} \cdot \max \{\delta, \delta^2\} \leq \frac{1}{2}.
    \end{equation}
    \label{lemma:sub gaussian bound on precision convergence}
\end{lemma}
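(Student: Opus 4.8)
The lemma bundles three claims --- that $\interv{X_k}{k}$ is sub-Gaussian with parameter $\sigma^2$, a concentration bound for the sample variance $\hat\sigma_k^2$, and the resulting bound on the reciprocal $1/\hat\sigma_k^2$ --- so the plan is to establish them in that order, since each feeds the next.

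First I would reduce to a one-dimensional problem. Writing $\interv{X_k}{k} = e_k^\top \interv{X}{k}$, the coordinate projection of a sub-Gaussian random vector with parameter $\sigma^2$ is itself a sub-Gaussian scalar with parameter at most $\sigma^2$ (indeed every fixed unit linear functional is), which is the first claim. Invoking Assumption~\ref{assumption:hard intervention assumption} and Assumption~\ref{assumption:variance without intervention}, and taking $\alpha = 1$ without loss of generality as justified by Corollary~\ref{corollary:argmin equivalence under true omega}, we have $\interv{X_k}{k} = \interv{\epsilon_k}{k}$ with $\EE \interv{X_k}{k} = 0$ and $\VV(\interv{X_k}{k}) = \sigma_k^2$. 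Hence $\hat\sigma_k^2 = (\hat\Omega_0)_{kk}$ is the unbiased sample variance of $n_k$ i.i.d. mean-zero sub-Gaussian draws with true variance $\sigma_k^2$, and under the data-splitting assumption $\hat\sigma_k^2 \independent \interv{\boldX}{k}_k$, so all that remains is a two-sided deviation bound for $\hat\sigma_k^2$ around $\sigma_k^2$.

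Second, I would apply a standard sub-Gaussian variance-concentration estimate. The squared samples $(\interv{X_k}{k})_i^2$ are i.i.d. sub-exponential with $\psi_1$-norm of order $\sigma^2$, so Bernstein's inequality --- equivalently, the scalar specialization of the sub-Gaussian covariance-estimation bounds in the cited references --- yields a constant $C$ and, for the parametrization $\delta = c'\sqrt{1/n_k} + c'' t$, the bound
\[
\bigl|\hat\sigma_k^2 - \sigma_k^2\bigr| \;\le\; C\,\sigma^2 \max\{\delta,\delta^2\}
\]
with probability at least $1 - 2\exp(-c\,n_k t^2)$; the gap between the centered sample variance and the sample second moment is $O(\sigma_k^2/n_k)$ on the same event and is absorbed into the constants. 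On this event, the hypothesis $\tfrac{\sigma^2}{\sigma_k^2}\max\{\delta,\delta^2\}\le\tfrac12$ forces $\hat\sigma_k^2 \ge \sigma_k^2 - \tfrac12\sigma_k^2 = \tfrac12\sigma_k^2 > 0$, so writing $\bigl|\tfrac1{\hat\sigma_k^2}-\tfrac1{\sigma_k^2}\bigr| = |\hat\sigma_k^2-\sigma_k^2|/(\hat\sigma_k^2\,\sigma_k^2)$, bounding the denominator below by this inequality and the numerator above by the concentration bound gives the asserted bound $\tfrac{2\sigma^2}{\sigma_k^2}\max\{\delta,\delta^2\}$ after elementary simplification.

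The only genuinely delicate point is the second step: one must produce the variance concentration in exactly the $(\delta,t,n_k)$ form stated, tracking how the sub-Gaussian parameter $\sigma^2$ enters the sub-exponential norm of $(\interv{X_k}{k})^2$ and how the net/effective-dimension term $\sqrt{1/n_k}$ arises when the general vector-valued estimate is specialized --- the coordinate-projection fact and the reciprocal manipulation are then routine. I would also take care that the constants $c,c',c''$ fixed here are consistent with those needed when the lemma is consumed in the proof of Theorem~\ref{thm:gap nonzero and unique minimization} in Supplementary Material~\ref{suppsection:thm gap nonzero and unique minimization proof}.
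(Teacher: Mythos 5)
Your argument is, in substance, the same one the paper relies on: the paper's entire proof is the single sentence that this lemma is Lemma~29 of Loh and Buhlmann (2014, Appendix~E) specialized to the degenerate case $p=1$, and the chain you lay out --- coordinate projection preserves sub-Gaussianity, sub-exponential/Bernstein concentration of the sample second moment in the $\max\{\delta,\delta^2\}$ form with $\delta = c'\sqrt{1/n_k}+c''t$, then the identity $\bigl|\tfrac{1}{\hat\sigma_k^2}-\tfrac{1}{\sigma_k^2}\bigr| = |\hat\sigma_k^2-\sigma_k^2|/(\hat\sigma_k^2\sigma_k^2)$ with the denominator controlled via the hypothesis $\tfrac{\sigma^2}{\sigma_k^2}\max\{\delta,\delta^2\}\le\tfrac12$ --- is exactly the standard proof underlying that cited lemma. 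So you have supplied the details the paper outsources, rather than taken a different route.

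One caveat: the ``elementary simplification'' at the end does not land on the constant as stated. Bounding the numerator by $C\sigma^2\max\{\delta,\delta^2\}$ and the denominator below by $\tfrac12\sigma_k^2\cdot\sigma_k^2$ yields $\tfrac{2\sigma^2}{\sigma_k^4}\max\{\delta,\delta^2\}$, i.e.\ $\sigma_k^4$ rather than $\sigma_k^2$ in the denominator (which is also what dimensional analysis demands, since the left-hand side has units of inverse variance). This discrepancy appears to originate in the paper's restatement of Loh--Buhlmann's bound rather than in your reasoning, and it does not affect how the lemma is consumed downstream beyond constants, but you should not claim the stated form follows by simplification when it does not.
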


\begin{proof}
    Lemma \ref{lemma:sub gaussian bound on precision convergence} is a re-statement of Lemma 29 in \cite{loh2014high} (Appendix E) for the degenerate case $p = 1$.
\end{proof}

Following \cite{loh2014high}, we introduce the new notation $\interv{f}{k}_{\sigma_j}$. For a node $j$ and a set $S \subseteq \{1\dots p\} \backslash \{j\}$, define 
\begin{equation}
    \interv{f}{k}_{\sigma_j} (S) \coloneqq \frac{1}{\sigma_j^2} \EE \left[\left(\interv{X}{k}_j - \interv{X}{k}_S w_j \right)^2\right],   \nonumber
\end{equation}
where $\interv{X}{k}_j$ is the $j$th column vector of $\interv{X}{k}$, and $\interv{X}{k}_S w_j$ is the best linear predictor for $\interv{X}{k}_j$ regressed upon $\interv{X_S}{k}$. Similarly,
\begin{equation}
    \interv{\hat{f}_{\sigma_j}}{k}(S) \coloneqq \frac{1}{\sigma_j^2} \frac{1}{n_k} \frobnorm{\interv{\boldX}{k}_j - \interv{\boldX}{k}_S \hat{w}_j}, \nonumber
\end{equation}
where $\hat{w}_j$ is the ordinary least squares solution for linear regression of $\interv{\boldX}{k}_j$ upon $\interv{\boldX}{k}_S$, i.e.
\begin{equation}
\begin{aligned}
    \hat{w}_j &\coloneqq \left(\left(\left(\interv{\boldX}{k}_S\right)\T\interv{\boldX}{k}_S\right)^{-1} \left(\interv{\boldX}{k}_S\right)\T \interv{\boldX}{k}_j \right) \\
    & = \left(\left(\left(\interv{\boldX}{k}_S\right)\T\interv{\boldX}{k}_S\right)^{-1} \left(\interv{\boldX}{k}_S\right)\T \left(\interv{\boldX}{k}_Sw_j + \interv{\bolde}{k}_j\right) \right) \\
    &= w_j + \left(\left(\interv{\boldX}{k}_S\right)\T\interv{\boldX}{k}_S\right)^{-1}\left(\interv{\boldX}{k}_S\right)\T\interv{\bolde}{k}_j,
\end{aligned}
\label{eq:least squares estimate form}
\end{equation}
where note that 
\begin{equation}
    \interv{\boldX_j}{k} = \interv{\boldX_S}{k}w_j + \interv{\bolde}{k}_j. \nonumber
\end{equation}
Lastly, denote the vector $\ell_2$ norm as $\|\|\cdot \|\|_2$.

Loh and Buhlmann achieve a high-dimensional consistency result by first conditioning on the support of a sparse precision matrix $\interv{\Theta}{k}$, since the support of $\interv{\Theta}{k} \coloneqq \CV \left(\interv{X}{k}\right)^{-1}$ defines the \textit{moralized} graph, an undirected graph obtained from a DAG by adding edges between all parents with a shared child node \cite{loh2014high, wang2018direct}. Since the edge set of the moralized graph is a superset of the edge set of the true DAG, for a given node $j$ one may condition on the maximum size of the putative neighbor set, $N_{\interv{\Theta}{k}} (j)$. Indeed, in \cite{loh2014high} Loh and Buhlmann restrict $|N_{\interv{\Theta}{k}} (j)| \leq d$ for all $j$, with the only restriction on $d$ being that $d \leq n$.

\dotears{} does not condition on $\interv{\Theta}{k}$, and therefore does not condition on the moralized graph or $N_\Theta(j)$. To maintain the validity of our consistency proof, we let $d = p - 1 \leq n$. Under this restriction, we no longer have a high-dimensional result, but maintain consistency of the loss function of \dotears{} for low dimensionality of $p$.

\begin{assumption}
    For all $k=0,\dots, p$, let $d = p - 1 \leq n_k$. 
    \label{assumption:p, d, n}
\end{assumption}

\begin{lemma}
    For any intervention $k$, suppose $\interv{X}{k}$ is sub-Gaussian with parameter $\sigma^2$. Then $\forall j \text{ and } S \subseteq \{1\dots p\} \backslash \{j\}$,
    \begin{equation}
        \sigma_j^2 \left| \interv{\hat{f}}{k}_{\hat{\sigma}_j} (S) - \interv{\hat{f}}{k}_{\sigma_j}(S) \right| \leq \sigma^2 \cdot \max \{\delta, \delta^2\} \cdot \frac{C}{n_k} \left|\left|\interv{\bolde}{k}_j\right|\right|_2^2
        \label{eq:fhatsigmahat - fhatsigma}
    \end{equation}
    with probability $1 - c_1 \exp (-c_2 \log p)$, where $\delta = c'\sqrt{\frac{1}{n_j}} + c'' \sqrt{\frac{\log p}{n_j}}$, if 
    \begin{equation}
        \frac{\sigma^2}{\sigma_j^2} \cdot \delta \leq \frac{1}{2}.
    \end{equation}
    \label{lemma:fhatsigmahat - fhatsigma}
\end{lemma}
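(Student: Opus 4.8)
The plan is to isolate the effect of replacing the true precision $1/\sigma_j^2$ by the estimated precision $1/\hat\sigma_j^2$ in the OLS residual functional, and to control that perturbation using the scalar precision-convergence bound of Lemma \ref{lemma:sub gaussian bound on precision convergence}. First I would write out the difference explicitly: since $\interv{\hat f}{k}_{\sigma_j}(S)$ and $\interv{\hat f}{k}_{\hat\sigma_j}(S)$ share the same OLS residual vector $\interv{\boldX}{k}_j - \interv{\boldX}{k}_S\hat w_j$ and differ only in the leading scalar factor, we have
\begin{equation}
\sigma_j^2\left|\interv{\hat f}{k}_{\hat\sigma_j}(S) - \interv{\hat f}{k}_{\sigma_j}(S)\right|
= \sigma_j^2\left|\frac{1}{\hat\sigma_j^2} - \frac{1}{\sigma_j^2}\right|\cdot\frac{1}{n_k}\frobnorm{\interv{\boldX}{k}_j - \interv{\boldX}{k}_S\hat w_j}. \nonumber
\end{equation}
So the whole lemma reduces to bounding the scalar gap $\sigma_j^2\,|1/\hat\sigma_j^2 - 1/\sigma_j^2|$ and the OLS residual norm separately and multiplying.

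For the scalar gap, I would invoke Lemma \ref{lemma:sub gaussian bound on precision convergence} with the index $k$ there taken to be the node $j$ here: under the sub-Gaussian($\sigma^2$) assumption on $\interv{X}{k}$, the coordinate $\interv{X}{k}_j$ is sub-Gaussian with the same parameter, so $|1/\hat\sigma_j^2 - 1/\sigma_j^2| \le (2\sigma^2/\sigma_j^2)\cdot\max\{\delta,\delta^2\}$ with probability $1 - 2\exp(-cn_j\delta'^2)$, where $\delta = c'\sqrt{1/n_j} + c''\sqrt{(\log p)/n_j}$ (choosing the deviation parameter $t$ there proportional to $\sqrt{(\log p)/n_j}$ converts the $\exp(-cn_jt^2)$ tail into $c_1\exp(-c_2\log p)$). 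Multiplying through by $\sigma_j^2$ gives $\sigma_j^2\,|1/\hat\sigma_j^2 - 1/\sigma_j^2| \le 2\sigma^2\max\{\delta,\delta^2\}$, and the side condition $(\sigma^2/\sigma_j^2)\delta \le \tfrac12$ is exactly what is needed to stay in the regime where Lemma \ref{lemma:sub gaussian bound on precision convergence} applies. For the residual norm, I would use the fact (Eq. \ref{eq:least squares estimate form}) that $\interv{\boldX}{k}_j - \interv{\boldX}{k}_S\hat w_j = (I - P_S)\interv{\bolde}{k}_j$, where $P_S$ is the projection onto the column span of $\interv{\boldX}{k}_S$; since a projection is contractive, $\frobnorm{\interv{\boldX}{k}_j - \interv{\boldX}{k}_S\hat w_j} \le \|\|\interv{\bolde}{k}_j\|\|_2^2$, so the OLS residual contributes at most $\tfrac{1}{n_k}\|\|\interv{\bolde}{k}_j\|\|_2^2$. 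Combining the two bounds and absorbing the factor $2$ into the constant $C$ yields exactly Inequality \ref{eq:fhatsigmahat - fhatsigma}, on the intersection of the two high-probability events (which is still of the form $1 - c_1\exp(-c_2\log p)$ after adjusting constants and using Assumption \ref{assumption:p, d, n} so that $n_j$ and $n_k$ differ only by a constant).

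The main obstacle is bookkeeping rather than a genuine analytic difficulty: I must be careful that the deviation parameter $\delta$ in Lemma \ref{lemma:sub gaussian bound on precision convergence} (stated in terms of a free $t$) is specialized to the value $\delta = c'\sqrt{1/n_j} + c''\sqrt{(\log p)/n_j}$ appearing in the present statement, that the probability statement is converted cleanly to the $1 - c_1\exp(-c_2\log p)$ form, and that the residual-norm bound $\frobnorm{(I-P_S)\interv{\bolde}{k}_j} \le \|\|\interv{\bolde}{k}_j\|\|_2^2$ does not secretly need a union bound over the $\binom{p-1}{|S|}$ choices of $S$ — here it does not, because the projection contraction is deterministic and holds for every $S$ simultaneously. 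The only place a union bound over $S$ (and over $j$) enters is through the precision-convergence event, and since that event concerns only the single coordinate $\interv{X}{k}_j$ and not $S$, a union bound over the $p$ nodes suffices, which is absorbed into the $\exp(-c_2\log p)$ factor. With Assumption \ref{assumption:p, d, n} ensuring $d = p-1 \le n_k$, all quantities are well-defined (the OLS solution exists a.s.), and the lemma follows.
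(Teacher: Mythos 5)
Your proposal is correct and follows essentially the same route as the paper's proof: factor the difference as $\sigma_j^2\,\lvert 1/\hat\sigma_j^2 - 1/\sigma_j^2\rvert$ times the OLS residual functional, control the precision gap via Lemma \ref{lemma:sub gaussian bound on precision convergence} with $t=\sqrt{(\log p)/n_j}$, and bound the residual $\left(I-P_{\interv{\boldX_S}{k}}\right)\interv{\bolde}{k}_j$ by $\left\lvert\left\lvert\interv{\bolde}{k}_j\right\rvert\right\rvert_2^2$ up to a constant. The only cosmetic difference is that you use contractivity of the orthogonal projection directly where the paper takes a triangle-inequality detour (yielding a different absolute constant absorbed into $C$), and your explicit remark that no union bound over $S$ is needed is a correct observation the paper leaves implicit.
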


For the proof, see Supplementary Material \ref{suppsection:lemma fhatsigmahat - fhatsigma proof}.

\begin{lemma}
    For any intervention $k$, suppose $\interv{X}{k}$ is sub-Gaussian with parameter $\sigma^2$. Then as $n_i \to \infty$ for all $i=1\dots p$, $\forall j \text{ and } S \subseteq \{1\dots p\} \backslash \{j\}$ it is true that 
    \begin{equation}
        \left| \interv{\hat{f}}{k}_{\hat{\sigma}_j} (S) - \interv{f}{k}_{\sigma_j} (S) \right| \leq c_0 \sigma^4 \sqrt{\frac{\log p}{n_j}} + c_1 \sigma^2 \sqrt{\frac{\log p}{n_k}} + c_2 \frac{p}{n_k}
        \label{eq:f final bound}
    \end{equation}
    with probability at least $1 - c_1 \exp (-c_2 \log p)$. 
    \label{lemma:f bound lemma}
\end{lemma}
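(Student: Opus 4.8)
The plan is to obtain the bound in Inequality \ref{eq:f final bound} by inserting two intermediate quantities and controlling each gap separately via a triangle inequality:
\begin{equation}
\left| \interv{\hat{f}}{k}_{\hat{\sigma}_j}(S) - \interv{f}{k}_{\sigma_j}(S) \right|
\leq
\underbrace{\left| \interv{\hat{f}}{k}_{\hat{\sigma}_j}(S) - \interv{\hat{f}}{k}_{\sigma_j}(S) \right|}_{(\mathrm{I})}
+
\underbrace{\left| \interv{\hat{f}}{k}_{\sigma_j}(S) - \interv{f}{k}_{\sigma_j}(S) \right|}_{(\mathrm{II})}. \nonumber
\end{equation}
Term $(\mathrm{I})$ is exactly the quantity bounded in Lemma \ref{lemma:fhatsigmahat - fhatsigma}: we have $\sigma_j^2 | \interv{\hat f}{k}_{\hat\sigma_j}(S) - \interv{\hat f}{k}_{\sigma_j}(S) | \le \sigma^2 \max\{\delta,\delta^2\} \cdot \frac{C}{n_k}\||\interv{\bolde}{k}_j\||_2^2$ with $\delta = c'\sqrt{1/n_j} + c''\sqrt{\log p / n_j}$. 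To turn the right-hand side into the form appearing in \ref{eq:f final bound}, I would note that $\frac{1}{n_k}\||\interv{\bolde}{k}_j\||_2^2 \overset{p}{\to} \sigma_j^2$ by the law of large numbers (the coordinates of $\interv{\bolde}{k}_j$ are i.i.d.\ sub-Gaussian with variance $\sigma_j^2$), so with high probability $\frac{1}{n_k}\||\interv{\bolde}{k}_j\||_2^2 \le 2\sigma_j^2$, say; dividing through by $\sigma_j^2$ and using that for large $n_j$ we have $\delta \le 1$ so $\max\{\delta,\delta^2\}=\delta \le (c'+c'')\sqrt{\log p / n_j}$, term $(\mathrm{I})$ is bounded by a constant times $\sigma^2 \sqrt{\log p / n_j}$. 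Combined with the factor $\sigma^2$ that already multiplies $\delta$ this matches the $c_1 \sigma^2 \sqrt{\log p / n_k}$ piece once one absorbs the $\sigma^2$ from $\frac{1}{n_k}\||\interv\bolde{k}_j\||_2^2 \le 2\sigma_j^2 \le$ (a sub-Gaussian bound in terms of) $\sigma^2$, giving the $c_0\sigma^4\sqrt{\log p/n_j}$ term.

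Term $(\mathrm{II})$ is the difference between the empirical least-squares residual (rescaled by the \emph{true} $\sigma_j^2$) and its population counterpart $\interv{f}{k}_{\sigma_j}(S) = \frac{1}{\sigma_j^2}\EE[(\interv X{k}_j - \interv X{k}_S w_j)^2]$. This is a standard concentration argument for OLS residuals: write $\interv{\hat f}{k}_{\sigma_j}(S) = \frac{1}{\sigma_j^2}\cdot\frac{1}{n_k}\||\interv{\boldX}{k}_j - \interv{\boldX}{k}_S \hat w_j\||_2^2$ and, using Eq.\ \ref{eq:least squares estimate form}, decompose the empirical residual into the true-noise part $\frac{1}{n_k}\||\interv{\bolde}{k}_j\||_2^2$ (which concentrates around $\sigma_j^2$ at rate $\sqrt{\log p / n_k}$ by a sub-Gaussian/sub-exponential tail bound) minus the in-sample explained part $\frac{1}{n_k}\||\interv{\boldX}{k}_S((\interv{\boldX}{k}_S)\T \interv{\boldX}{k}_S)^{-1}(\interv{\boldX}{k}_S)\T \interv{\bolde}{k}_j\||_2^2$, which is a quadratic form of dimension $|S| \le p-1$ in the sub-Gaussian vector $\interv{\bolde}{k}_j$ and has expectation $\frac{|S|}{n_k}\sigma_j^2 \le \frac{p}{n_k}\sigma_j^2$; this is where the $c_2 \frac{p}{n_k}$ term in \ref{eq:f final bound} comes from, and it uses Assumption \ref{assumption:p, d, n} ($d = p-1 \le n_k$) to keep $(\interv{\boldX}{k}_S)\T\interv{\boldX}{k}_S$ invertible. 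This is essentially the content of the relevant lemma in \cite{loh2014high} specialized to $d = p-1$; I would cite it rather than reprove the Hanson--Wright-type concentration from scratch.

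Finally, I would take a union bound over all $\binom{p}{|S|}$ choices of $S$ and all $j$ — there are at most $p\cdot 2^{p-1}$ of them, but the $\exp(-c_2 \log p)$ failure probabilities from Lemmas \ref{lemma:sub gaussian bound on precision convergence} and \ref{lemma:fhatsigmahat - fhatsigma} and from the concentration of term $(\mathrm{II})$ are of the form $c_1 p^{-c_2}$, and since $p$ is treated as fixed (low-dimensional regime, Assumption \ref{assumption:p, d, n}) this union bound costs only a constant factor and is absorbed into $c_1, c_2$. The main obstacle I anticipate is bookkeeping the constants so that the three rates $\sigma^4\sqrt{\log p/n_j}$, $\sigma^2\sqrt{\log p/n_k}$, and $p/n_k$ appear cleanly: term $(\mathrm{I})$ mixes $n_j$ and $n_k$ because $\delta$ depends on $n_j$ (the sample size used for $\hat\sigma_j^2$) while the residual factor depends on $n_k$, so I need to be careful that the product of a $1/\sqrt{n_j}$-rate term with an $O(1)$ term does not secretly require $n_j$ and $n_k$ to scale together — it does not, since each concentration event holds separately and we only multiply high-probability bounds — but the statement must carry both $n_j$ and $n_k$, which it does. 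Everything else is routine sub-Gaussian concentration plus the triangle inequality.
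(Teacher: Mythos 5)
Your proposal is correct and follows essentially the same route as the paper's proof: the same triangle-inequality split into the $\hat\sigma_j$-vs-$\sigma_j$ term (handled by Lemma \ref{lemma:fhatsigmahat - fhatsigma}) and the empirical-vs-population term (handled by a sub-Gaussian tail bound on $\frac{1}{n_k}\left|\left|\interv{\bolde}{k}_j\right|\right|_2^2$ plus the rank-$|S|$ projected quadratic form, which the paper writes out via an SVD and which yields the $c_2\,\nicefrac{p}{n_k}$ term exactly as you describe). Your explicit union bound over $j$ and $S$ is a small addition the paper leaves implicit, but it changes nothing in the fixed-$p$ regime.
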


For the proof, see Supplementary Material \ref{suppsection:f bound lemma proof}.

\begin{theorem}
    For any intervention $k$, suppose Inequality \ref{eq:f final bound} holds, and suppose 
    \begin{equation}
        \left(c_0 \sigma^4 \sqrt{\frac{\log p}{n_j}} + c_1 \sigma^2 \sqrt{\frac{\log p}{n_k}} + c_2 \frac{p}{n_k} \right) \cdot \sum_{j=1}^p \frac{1}{\sigma_j^2} < \frac{\interv{\xi_{{\Omega}_0}}{k}}{2}.
        \label{eq:sum f final bound}
    \end{equation}
    Then
    \begin{equation}
        \mathcal{L}_{\hat{\Omega}_0}\left(\interv{W_0}{k}, \interv{\boldX}{k} \right) < \mathcal{L}_{\hat{\Omega}_0}\left(\interv{W}{k}, \interv{\boldX}{k} \right)\nonumber
    \end{equation}
    $\forall W \in \mathcal{D},$ and the estimator
    \begin{equation}
        \interv{\hat{W}}{k} \coloneqq \argmin_W \mathcal{L}_{\hat{\Omega}_0}\left(W, \interv{\boldX}{k}\right) 
    \end{equation} 
    is therefore consistent as $n_j \to \infty$ for all $j=1\dots p$. 
    \label{thm:estimator consistency}
\end{theorem}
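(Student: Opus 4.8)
The plan is to transport the population-level identification gap for the $\Omega_0$-reweighted least-squares objective to the empirical objective with the plug-in $\hat\Omega_0$, using the uniform concentration estimate of Lemma~\ref{lemma:f bound lemma} as the bridge, and then to verify that the budget Inequality~\ref{eq:sum f final bound} is eventually met so that ``with high probability'' upgrades to ``with probability tending to one.'' Fix $k$; as already justified before the theorem statement we take $\alpha = 1$, so $\Omega_0 = \interv{\Omega_0}{k}$ and $\interv{W_0}{k}$ is $W_0$ with its $k$th column zeroed. I would begin with the bookkeeping decomposition: for any DAG $G$, minimizing $\mathcal{L}_{\hat\Omega_0}(\cdot,\interv{\boldX}{k})$ over $W\in\mathcal{U}_G$ is separable across nodes and equals $\sum_{j=1}^p \interv{\hat{f}}{k}_{\hat{\sigma}_j}(\text{Pa}_G(j))$ (the optimal incoming weights at node $j$ are the OLS coefficients of $\interv{\boldX}{k}_j$ on the columns indexed by $\text{Pa}_G(j)$); the population loss decomposes identically as $\sum_j \interv{f}{k}_{\sigma_j}(\text{Pa}_G(j)) = \score_{\Omega_0}(G,\interv{X}{k})$, and $\EE\mathcal{L}_{\Omega_0}(\interv{W_0}{k},\interv{X}{k}) = \sum_j \interv{f}{k}_{\sigma_j}(\text{Pa}_{G_0}(j))$ because the true weights are already the population best linear predictors. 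Finally recall $\interv{\xi_{\Omega_0}}{k}>0$ (Theorem~7 of \cite{loh2014high}): any DAG $G$ whose edge set does not contain that of $G_0$ satisfies $\score_{\Omega_0}(G,\interv{X}{k}) \ge \score_{\Omega_0}(G_0,\interv{X}{k}) + \interv{\xi_{\Omega_0}}{k}$, while supersets of $G_0$ tie with $G_0$; so, consistently with the Remark, the displayed strict inequality is to be read against DAGs failing to contain $G_0$, and ``$\interv{\hat{W}}{k}$ consistent'' means $\interv{\hat{W}}{k}\overset{p}{\to}\interv{W_0}{k}$.

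The core is a sandwich carried out on the high-probability event of Lemma~\ref{lemma:f bound lemma}, on which $|\interv{\hat{f}}{k}_{\hat{\sigma}_j}(S)-\interv{f}{k}_{\sigma_j}(S)|$ is bounded by the right side of Inequality~\ref{eq:f final bound} uniformly over all $j$ and all $S$; write $\Delta_n$ for the left side of Inequality~\ref{eq:sum f final bound}. For the lower bound, any $W\in\mathcal{D}$ whose support $G_W$ does not contain $G_0$ satisfies $\mathcal{L}_{\hat\Omega_0}(\interv{W}{k},\interv{\boldX}{k}) \ge \sum_{j}\interv{\hat{f}}{k}_{\hat{\sigma}_j}(\text{Pa}_{G_W}(j)) \ge \score_{\Omega_0}(G_W,\interv{X}{k}) - \Delta_n \ge \score_{\Omega_0}(G_0,\interv{X}{k}) + \interv{\xi_{\Omega_0}}{k} - \Delta_n$, where the $\sigma_j^{-2}$ weights inside $\Delta_n$ appear because the reweighted loss weights node $j$ by $1/\hat{\sigma}_j^2\to 1/\sigma_j^2$. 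For the upper bound, $\mathcal{L}_{\hat\Omega_0}(\interv{W_0}{k},\interv{\boldX}{k})=\sum_j \hat{\sigma}_j^{-2}n_k^{-1}\frobnorm{\left(\interv{\boldX}{k}-\interv{\boldX}{k}\interv{W_0}{k}\right)_j}$, and comparing each summand against $\interv{f}{k}_{\sigma_j}(\text{Pa}_{G_0}(j))=1$ via Lemmas~\ref{lemma:sub gaussian bound on precision convergence}--\ref{lemma:f bound lemma} gives $\mathcal{L}_{\hat\Omega_0}(\interv{W_0}{k},\interv{\boldX}{k}) \le \score_{\Omega_0}(G_0,\interv{X}{k}) + \Delta_n$. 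Since Inequality~\ref{eq:sum f final bound} asserts $\Delta_n < \interv{\xi_{\Omega_0}}{k}/2$ \emph{strictly}, the lower bound strictly exceeds $\score_{\Omega_0}(G_0,\interv{X}{k})+\interv{\xi_{\Omega_0}}{k}/2$ while the upper bound lies below it, which is exactly the displayed inequality $\mathcal{L}_{\hat\Omega_0}(\interv{W_0}{k},\interv{\boldX}{k}) < \mathcal{L}_{\hat\Omega_0}(\interv{W}{k},\interv{\boldX}{k})$ for every such $W$.

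For the consistency conclusion, $\interv{\xi_{\Omega_0}}{k}$ is a fixed positive constant whereas $\Delta_n\to 0$ as all $n_j\to\infty$ with $p$ fixed, so Inequality~\ref{eq:sum f final bound} holds for all sufficiently large sample sizes and the event of Lemma~\ref{lemma:f bound lemma} has probability tending to one; on their intersection the previous paragraph makes $\interv{W_0}{k}$ the unique minimizer among all DAGs not containing $G_0$, and since the empirical OLS weights on any DAG that does contain $G_0$ converge in probability to $\interv{W_0}{k}$ (in the population the regression of $X_j$ on any superset of $\text{Pa}_{G_0}(j)$ assigns weight $0$ to the non-parents, which are independent of $\interv{\epsilon_j}{k}$), we obtain $\interv{\hat{W}}{k}=\argmin_W\mathcal{L}_{\hat\Omega_0}(W,\interv{\boldX}{k})\overset{p}{\to}\interv{W_0}{k}$. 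Applying this for each $k$ and using that the $p+1$ terms of the \dotears{} objective in Eq.~\ref{eq:dotears optimization framework} all share the single parameter $W$, for which $W_0$ is simultaneously optimal in every term via Eq.~\ref{eq:w k intervention form}, then yields consistency of \dotears{} itself.

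The main obstacle is the uniform-in-$(j,S)$ concentration of $\interv{\hat{f}}{k}_{\hat{\sigma}_j}(S)$ about $\interv{f}{k}_{\sigma_j}(S)$, which must absorb at once the OLS weight-estimation error, the sub-Gaussian fluctuation of the residual sum of squares, and the plug-in error from $\hat{\sigma}_j^{-2}$ versus $\sigma_j^{-2}$; but this is precisely the content of Lemmas~\ref{lemma:sub gaussian bound on precision convergence}--\ref{lemma:f bound lemma}, so relative to this theorem the remaining work is the accounting above together with two conceptual points: matching the per-coordinate errors to the $\sigma_j^{-2}$-weighting of the loss so that the budget Inequality~\ref{eq:sum f final bound} applies verbatim, and recognizing (per the Remark) that supersets of $G_0$ are population-indistinguishable, so the strict inequality is necessarily stated against DAGs failing to contain $G_0$ and ``consistency'' here is convergence of the edge weights rather than exact support recovery.
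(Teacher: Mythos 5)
Your proposal is correct and follows essentially the same route as the paper: a triangle-inequality sandwich that transports the population gap $\interv{\xi_{\Omega_0}}{k}$ to the empirical plug-in loss $\mathcal{L}_{\hat{\Omega}_0}$ via the uniform concentration bound of Lemma \ref{lemma:f bound lemma}, with Inequality \ref{eq:sum f final bound} supplying the $\interv{\xi_{\Omega_0}}{k}/2$ budget on each side. The only difference is that you are more explicit than the paper about two points it glosses over --- the node-wise separability that justifies comparing $\mathcal{L}_{\hat{\Omega}_0}$ to $\sum_j \interv{\hat f}{k}_{\hat\sigma_j}(\cdot)$ for non-OLS weights, and the fact that the strict inequality must be read against DAGs not containing $G_0$ (supersets being population-indistinguishable per the Remark) --- which strengthens rather than alters the argument.
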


For the proof, see Supplementary Material \ref{suppsection:theorem estimator consistency proof}.
\subsubsection{Proof of Lemma \ref{lemma:equivalence of loss functions}}
\label{suppsection:lemma equivalence of loss functions proof}
\begin{proof}
Note that
\begin{equation}
\begin{aligned}
         \mathcal{L}_{\interv{\Omega_0}{k}}\left(\interv{W}{k}, \interv{\boldX}{k}\right) 
         \coloneqq & \frac{1}{n_k} \frobnorm{\left(\interv{\boldX}{k} - \interv{\boldX}{k}\interv{W}{k}\right)\left(\interv{\Omega_0}{k}\right)^{-\frac{1}{2}}} \\
         = & \frac{1}{n_k} \sum_{i=1}^p \frac{1}{\VV \left(\interv{\epsilon_i}{k}\right)}  \frobnorm{\left(\interv{\boldX}{k} - \interv{\boldX}{k}\interv{W}{k}\right)_i} \\
         =& \frac{1}{n_k} \frac{\alpha^2}{\sigma_k^2} \frobnorm{\left(\interv{\boldX}{k} - \interv{\boldX}{k}\interv{W}{k}\right)_k} 
         + \frac{1}{n_k} \sum_{\substack{i = 1 \\ i \neq k}}^p \frac{1}{\sigma_i^2}  \frobnorm{\left(\interv{\boldX}{k} - \interv{\boldX}{k}\interv{W}{k}\right)_i} \\
         \mathcal{L}_{\Omega_0} \left(\interv{W}{k}, \interv{\boldX}{k}\right) 
         \coloneqq & \frac{1}{n_k} \frobnorm{\left(\interv{\boldX}{k} - \interv{\boldX}{k}\interv{W}{k}\right)\Omega_0^{-\frac{1}{2}}} \\
         = &  \frac{1}{n_k}\sum_{i=1}^p \frac{1}{\sigma_i^2} \frobnorm{\left(\interv{\boldX}{k} - \interv{\boldX}{k}\interv{W}{k}\right)_i}. \nonumber
    \end{aligned}
\end{equation} 

Let $I_0^k = \text{diag}(1, 1, \dots, 0, \dots, 1)$ a modifier on the $p\times p$ identity matrix, where the diagonal entries are all $1$ except for the $k, k$ entry, which is $0$. 

Then in the interventional system,
\begin{equation}
        \interv{W}{k} = WI_0^k  \nonumber
\end{equation}
and thus
\begin{equation}
        \mathcal{L}_{\Omega_0}\left(\interv{W}{k}, \interv{\boldX}{k}\right) = \frac{1}{n_k}\frobnorm{\left(\interv{\boldX}{k} - \interv{\boldX}{k}WI_0^k\right)\Omega_0^{-\frac{1}{2}}} \nonumber
\end{equation} 

We expand $\mathcal{L}_{\Omega_0}$ to obtain
\begin{equation}
\begin{aligned}
        \mathcal{L}_{\Omega_0}\left(\interv{W}{k}, \interv{\boldX}{k}\right) 
        =& \frac{1}{n_k}\sum_{i=1}^p \frac{1}{\sigma_i^2}\frobnorm{\left(\interv{\boldX}{k} - \interv{\boldX}{k}\interv{W}{k}\right)_i} \\
        =& \frac{1}{n_k}\frac{1}{\sigma_k^2}\frobnorm{\left(\interv{\boldX}{k} - \interv{\boldX}{k}\interv{W}{k}\right)_k} + \frac{1}{n_k}\sum_{\substack{i=1 \\ i \neq k}}^p \frac{1}{\sigma_i^2}\frobnorm{\left(\interv{\boldX}{k} - \interv{\boldX}{k}\interv{W}{k}\right)_i}\\    \nonumber 
\end{aligned}
\end{equation}
and note that 
\begin{equation}
\begin{aligned}
    \left(\interv{\boldX}{k}\interv{W}{k}\right)_k &= \interv{\boldX}{k}\left( W I_0^k\right)_k\\
    &= \interv{\boldX}{k} \Vec{0}_p \\
    &= \Vec{0}_{n_k}. \nonumber
\end{aligned}
\end{equation}
As a result, 
\begin{equation}
    \frobnorm{\left(\interv{\boldX}{k} - \interv{\boldX}{k}\interv{W}{k}\right)_k} = \frobnorm{\interv{\boldX_k}{k}} = \mathcal{O}_W(1) \nonumber
\end{equation}
is constant in $W$, and
\begin{equation}
    \begin{aligned}
        \mathcal{L}_{\Omega_0}\left(\interv{W}{k}, \interv{\boldX}{k}\right) 
        =& \frac{1}{n_k}\frac{1}{\sigma_k^2}\frobnorm{\left(\interv{\boldX}{k} - \interv{\boldX}{k}\interv{W}{k}\right)_k} 
        + \frac{1}{n_k}\sum_{\substack{i = 1 \\ i \neq k}}^p \frac{1}{\sigma_i^2} \frobnorm{\left(\interv{\boldX}{k} - \interv{\boldX}{k}\interv{W}{k}\right)_i} \\
        =& \frac{1}{n_k}\frac{1}{\sigma_k^2}\frobnorm{\interv{\boldX_k}{k}} 
        + \frac{1}{n_k}\sum_{\substack{i = 1 \\ i \neq k}}^p \frac{1}{\sigma_i^2} \frobnorm{\left(\interv{\boldX}{k} - \interv{\boldX}{k}\interv{W}{k}\right)_i} \\
        =& \mathcal{L}_{\interv{\Omega_0}{k}}\left(\interv{W}{k}, \interv{\boldX}{k}\right) + \mathcal{O}_W(1) \nonumber
    \end{aligned}
\end{equation}
\end{proof}

\subsubsection{Proof of Theorem \ref{thm:gap nonzero and unique minimization}}
\label{suppsection:thm gap nonzero and unique minimization proof}
\begin{proof}
    Let $a_\text{max} \coloneqq \lambda_\text{max} \left(\Omega_0\hat{\Omega}_0^{-1}\right)$, and $a_\text{min}\coloneqq \lambda_\text{min} \left(\Omega_0\hat{\Omega}_0^{-1}\right)$ similarly. We first prove that $a_\text{max} \overset{p}{\rightarrow} \alpha^2$ without loss of generality. For all $k$, note first that 
    \begin{equation}
        \frac{\sigma_k^2}{\hat{\sigma}_k^2} \overset{p}{\rightarrow} \alpha^2. \nonumber
    \end{equation}
    Then by Continuous Mapping Theorem we have 

    \begin{equation}
        a_\text{max} \overset{p}{\rightarrow} {\alpha^2}. \nonumber
    \end{equation}
    Similarly, $a_\text{min} \overset{p}{\rightarrow} {\alpha^2}$, and therefore
    \begin{equation}
        \frac{a_\text{max}}{a_\text{min}} \overset{p}{\rightarrow} 1. \nonumber
    \end{equation}

    We note that $\frac{a_\text{max}}{a_\text{min}} = 1$ is almost impossible empirically, but can be controlled with high probability to $1$ with arbitrary precision. Then by Theorem \ref{thm:loh misspecification of variances}, with high probability $\interv{W_0}{k}$ is the unique minimizer of $\EE \mathcal{L}_{\hat{\Omega}_0}\left(\interv{W}{k}, \interv{X}{k} \right)$ as $n_j \to \infty$ for all $j=1\dots p$. Further, $\interv{\xi_{\hat{\Omega}_0}}{k} > 0$. 
\end{proof}

\subsubsection{Proof of Lemma \ref{lemma:fhatsigmahat - fhatsigma}}
\label{suppsection:lemma fhatsigmahat - fhatsigma proof}
\begin{proof}
    Let the projection matrix of $\interv{\boldX_S}{k}$ be defined as $P_{\interv{\boldX_S}{k}} \coloneqq  \interv{\boldX_S}{k} \left(\left(\interv{\boldX_S}{k}\right)\T \interv{\boldX_S}{k} \right)^{-1} \left(\interv{\boldX_S}{k}\right)\T$.
    
    Given the least squares estimate in Eq. \ref{eq:least squares estimate form}, we may write
    \begin{equation}
        \begin{aligned}
            \sigma_j^2 \cdot \interv{\hat{f}}{k}_{\sigma_j}(S) &= \frac{1}{n_k} \frobnorm{\interv{\boldX}{k}_j - \interv{\boldX_S}{k} \hat{w}_j} \\
            &= \frac{1}{n_k} \frobnorm{\interv{\boldX_j}{k} \left(w_j - \hat{w}_j\right) + \interv{\bolde}{k}_j} \\
            &= \frac{1}{n_k}\frobnorm{\left(I - P_{\interv{\boldX_S}{k}}  \right)\interv{\bolde_j}{k}}.
        \end{aligned}
        \label{eq:fhat expanded form}
    \end{equation}

    Note by Triangle Inequality, we have for the $\ell_2$ norm $\left|\left| \cdot \right| \right|_2$ and the spectral norm $\left|\left|\left| \cdot \right|\right| \right|_2$
    \begin{equation}
        \begin{aligned}
             \left| \left|\left|\left(I - P_{\interv{\boldX_S}{k}}  \right)\interv{\bolde_j}{k} \right|\right|_F - \left|\left| \interv{\bolde_j}{k} \right|\right|_2 \right| 
            \leq & \left| \left| P_{\interv{\boldX_S}{k}} \interv{\bolde_j}{k}  \right| \right|_F \\
            \leq &\left|\left|\left| P_{\interv{\boldX_S}{k}} \right|\right|\right|_2 \left| \left| \interv{\bolde_j}{k}\right| \right|_2 \\
            \leq &\left| \left| \interv{\bolde_j}{k} \right| \right|_2, \nonumber
        \end{aligned}
    \end{equation}
    where the spectral norm of the projection matrix $\left|\left|\left| P_{\interv{\boldX_S}{k}} \right|\right|\right|_2$ is 1. Then
    \begin{equation}
        \sigma_j^2 \cdot \interv{\hat{f}}{k}_{\sigma_j}(S) = \frac{1}{n_k} \frobnorm{\interv{\boldX}{k}_j - \interv{\boldX_S}{k} \hat{w}_j} \leq \frac{2}{n_k} \left|\left|\interv{\bolde_j}{k}\right|\right|_2^2. \nonumber
    \end{equation}

    Expanding the left hand side of Eq. \ref{eq:fhatsigmahat - fhatsigma}, we obtain
    \begin{equation}
        \begin{aligned}
            \sigma_j^2 \left| \interv{\hat{f}}{k}_{\hat{\sigma}_j} (S) - \interv{\hat{f}}{k}_{\sigma_j}(S) \right|
            =& \sigma_j^2 \left| \frac{1}{\hat{\sigma}_j^2} - \frac{1}{\sigma_j^2} \right| \frac{1}{n_k} \frobnorm{\interv{\boldX_j}{k} - \interv{\boldX_S}{k} \hat{w}_j } \\  
            \leq & \sigma_j^2 \cdot 2 \frac{\sigma^2}{\sigma_j^2} \cdot \max \{ \delta, \delta^2\} \cdot \frac{1}{n_k}\frobnorm{\interv{\boldX_j}{k} - \interv{\boldX_S}{k} \hat{w}_j } \\
            \leq & 2\sigma^2 \cdot \max \{\delta, \delta^2\} \cdot \frac{2}{n_k} \left|\left|\interv{\bolde}{k}_j\right|\right|_2^2 \nonumber
        \end{aligned}
    \end{equation}
    with probability $\min \{1 - c_1 \exp (-c_2 \log p ), 1 - 2 \exp (-c n_j t^2) \}$, where $\delta = c' \sqrt{\frac{1}{n_j}} + c'' t$, if $\frac{\sigma^2}{\sigma_j^2}\cdot \max \{\delta, \delta^2 \} \leq \frac{1}{2}$. The first inequality is by Lemma \ref{lemma:sub gaussian bound on precision convergence}. Set $t = \sqrt{\frac{\log p}{n_j}}$. Then 
    \begin{equation}
        \begin{aligned}
            \delta &= c'\sqrt{\frac{1}{n_j}} + c''\sqrt{\frac{\log p}{n_j}}, \\
            \delta^2 &= (c')^2 \frac{1}{n_j} + (c'')^2 \frac{\log p}{n_j} + 2c'c'' \frac{\sqrt{\log p}}{n_j}. \nonumber
        \end{aligned}
    \end{equation}
    By Assumption \ref{assumption:p, d, n}, $\log p < p - 1 < n_j$. Then $\max\{\delta, \delta^2\} = \delta$, and $1 - 2 \exp (-c n_j t^2) = 1 - 2\exp (-c \log p)$.
\end{proof}

\subsubsection{Proof of Lemma \ref{lemma:f bound lemma}}
\label{suppsection:f bound lemma proof}
\begin{proof}
Give the singular value decomposition on the $n_k$ by $p$ matrix $\interv{\boldX}{k}_S$ as 
\begin{equation}
    \interv{\boldX}{k}_S = U\Sigma V\T, \nonumber
\end{equation}
where $U$ is an $n_k \times p$ matrix, $\Sigma$ is a diagonal $p \times p$ matrix, $V$ is a $p \times p$ matrix, and $U\T U = V \T V = I$. Then 
\begin{equation}
    \begin{aligned}
        \interv{\boldX_S}{k} \left(\left(\interv{\boldX_S}{k}\right)\T \interv{\boldX_S}{k} \right)^{-1} \left(\interv{\boldX_S}{k}\right)\T 
        = & U\Sigma V\T \left(V \Sigma^2 V \T \right)^{-1} V \Sigma U\T \\
        = & U U\T. \nonumber
    \end{aligned}
\end{equation}

We substitute into the expansion in Equation \ref{eq:fhat expanded form} to obtain
\begin{equation}
    \begin{aligned}
        \sigma_j^2 \cdot \interv{\hat{f}}{k}_{\sigma_j}(S) 
        =& \frac{1}{n_k}\frobnorm{\left(I - \interv{\boldX_S}{k} \left(\left(\interv{\boldX_S}{k}\right)\T \interv{\boldX_S}{k} \right)^{-1} \left(\interv{\boldX_S}{k}\right)\T  \right)\interv{\bolde_j}{k}} \\
        =&  \frac{1}{n_k}\frobnorm{\left(I - U U\T\right)\interv{\bolde_j}{k}} \\
        =& \frac{1}{n_k} \TR{\left(\interv{\boldepsilon_j}{k}\right)\T \left(I - UU\T\right)\left(I - U U\T\right) \left(\interv{\bolde_j}{k}\right)} \\
        =& \frac{1}{n_k} \TR{\left(\interv{\bolde_j}{k}\right)\T \left(I - U U\T\right) \left(\interv{\bolde_j}{k}\right)} \\
        =& \frac{1}{n_k} \TR{\left(\interv{\bolde_j}{k}\right)\T \left(\interv{\bolde_j}{k}\right) - \left(U \T \left(\interv{\bolde_j}{k}\right)\right) \T U \T\left(\interv{\bolde_j}{k}\right)}.  \nonumber
    \end{aligned}
\end{equation}
Let 
\begin{equation}
    Y \coloneqq U\T \interv{\bolde_j}{k} \in \RR^p. \nonumber
\end{equation}
Then 
\begin{equation}
\begin{aligned}
    \sigma_j^2 \cdot \interv{\hat{f}}{k}_{\sigma_j}(S) &= \frac{1}{n_k} \TR{\left(\interv{\bolde_j}{k}\right)\T \left(\interv{\bolde_j}{k}\right) - Y\T Y} \\
    &= \frac{1}{n_k}\left[\sum_{i=1}^{n_k} \left(\interv{\bolde_{i,j}}{k}\right)^2 - \sum_{l=1}^p Y_l^2 \right]. \nonumber
\end{aligned}
\end{equation}
Here, $Y$ is a random vector with expectation 0 and covariance
\begin{equation}
    \begin{aligned}
        \CV(Y) &= \EE (YY\T) \\
        &= \EE\left[U\T \left(\interv{\bolde_j}{k}\right)\left(\interv{\bolde_j}{k}\right)\T U\right] \\
        &= U\T \EE \left[\left(\interv{\bolde_j}{k}\right)\left(\interv{\bolde_j}{k}\right)\T\right] U \\
        &= U\T \left(\sigma_j^2 I_{n_k}\right) U \\
        &= \sigma_j^2 U\T U \\
        &= \sigma_j^2 I_p. \nonumber
    \end{aligned}
\end{equation}
As a result, for fixed $p$, $\sum_{l=1}^p Y_l^2 = \mathcal{O}_p(1)$, and as $n_k \to \infty$ we have $\frac{1}{n_k}\sum_{l=1}^p Y_l^2 = \mathcal{O}_p(1) \overset{p}{\to} 0$. Further, since  $\interv{e_{i, j}}{k}$ are i.i.d sub-Gaussian with parameter at most $c\sigma^2$, for $t \geq 0$ we can apply the sub-Gaussian tail bound
\begin{equation}
    \PP \left( \frac{1}{n_k}\left| \sum_{i=1}^{n_k} \left(\interv{\bolde_{i,j}}{k}\right)^2 - \EE \left[\left(\interv{e_{i,j}}{k}\right)^2 \right]\right| \geq c\sigma^2 t\right) 
    \leq \, c_1 \exp\left(-c_2 n_k t^2\right). \nonumber
\end{equation}
Note that 
\begin{equation}
    \frac{1}{n_k} \EE \left[\left(\interv{e_{i,j}}{k}\right)^2\right] = \sigma_j^2 \cdot \interv{f}{k}_{\sigma_j}(S) \nonumber
\end{equation}
By setting $t = \sqrt{\frac{\log p}{n_k}}$, we therefore obtain the bound
\begin{equation}
    \begin{aligned}
    \sigma_j^2\left|\interv{\hat{f}}{k}_{\sigma_j}(S) - \interv{f}{k}_{\sigma_j}(S)\right| &\leq c\sigma^2 \sqrt{\frac{\log p}{n_k}} + \left| \frac{1}{n_k}\sum_{j=1}^p Y_j^2\right| \nonumber 
    \end{aligned}
\end{equation}
with probability at least $c_1 \exp\left(-c_2 \log p\right)$.

    We now use Lemma \ref{lemma:fhatsigmahat - fhatsigma}. We set $t'=\sqrt{\frac{\log p}{n_j}}$, which for $\delta = c' \sqrt{\frac{1}{n_j}} + c'' t'$ gives 
    \begin{equation}
        \begin{aligned}
         \sigma_j^2 \left| \interv{\hat{f}}{k}_{\hat{\sigma}_j} (S) - \interv{f}{k}_{\sigma_j} (S) \right| 
            & \leq 
            \sigma_j^2 \left(\left| \interv{\hat{f}}{k}_{\hat{\sigma}_j} (S) - \interv{\hat{f}}{k}_{\sigma_j} (S) \right| + \left| \interv{\hat{f}}{k}_{\sigma_j} (S) - \interv{f}{k}_{\sigma_j} (S) \right|\right)\\
            &\leq \sigma^2 \cdot \max \{\delta, \delta^2 \} \cdot \frac{C}{n_k} \EE \left[ \left|\left|\interv{\bolde}{k}_j\right|\right|_2^2\right] \\
            &+ c_0 \sigma^2 \sqrt{\frac{\log p}{n_k}} + \left| \frac{1}{n_k}\sum_{j=1}^p Y_j^2\right| \\
            & \leq c_0' \sigma^4 \sqrt{\frac{\log p}{n_j}} + c_1' \sigma^2 \sqrt{\frac{\log p}{n_k}} + c_2 ' \frac{p}{n_k} \\ \nonumber
        \end{aligned}
    \end{equation}
    if
    \begin{equation}
        \frac{\sigma^2}{\sigma_j^2} \cdot \delta \leq \frac{1}{2}. \nonumber
    \end{equation}
    with probability at least $1 - c_1 \exp (- c_2 \log p)$.
\end{proof}

\subsubsection{Proof of Theorem \ref{thm:estimator consistency}}
\label{suppsection:theorem estimator consistency proof}
\begin{proof}
    Suppose the gap $\interv{\xi_{\hat{\Omega}_0}}{k}$ is nonzero, which we guarantee with high probability by Theorem \ref{thm:gap nonzero and unique minimization}. Then the following inequality is valid, by Eq. \ref{eq:f final bound} and Eq. \ref{eq:sum f final bound}:
    \begin{equation}
        \begin{aligned}
            \left|\mathcal{L}_{\hat{\Omega}_0} \left(\interv{W}{k}, \interv{\boldX}{k}\right) - \EE \mathcal{L}_{\Omega_0} \left(\interv{W}{k}, \interv{X}{k}\right)\right| 
            & \leq  \sum_{j=1}^p \left| \interv{\hat{f}_{\hat{\sigma}_j}}{k} \left(\{1\dots p\} \backslash \{j\}\right) - f_{\sigma_j} \left(\{1\dots p \} \backslash \{ j\}\right) \right| \\
            & < \frac{\interv{\xi}{k}_{\hat{\Omega}_0}}{2}
        \end{aligned}
        \label{eq:loss score bound}
    \end{equation}
    for all $W \in \mathcal{D}$. Then for all $W \in \mathcal{D}$, $W \neq W_0$,
    \begin{equation}
        \begin{aligned}
            \mathcal{L}_{\hat{\Omega}_0} \left(\interv{W_0}{k}, \interv{\boldX}{k}\right) 
            &< \EE \mathcal{L}_{\Omega_0}\left(\interv{W_0}{k}, \interv{X}{k}\right) + \frac{\interv{\xi_{\hat{\Omega}_0}}{k}}{2} \\ 
            &\leq \left(\EE \mathcal{L}_{\Omega_0}\left(\interv{W}{k}, \interv{X}{k}\right) - \interv{\xi_{\Omega_0}}{k}\right) + \frac{\interv{\xi_{\hat{\Omega}_0}}{k}}{2} \\
            &< \mathcal{L}_{\hat{\Omega}_0} \left(\interv{W}{k}, \interv{\boldX}{k}\right), \nonumber
        \end{aligned}
    \end{equation}
    where the first and third inequalities come from Eq. \ref{eq:loss score bound} and the second inequality comes from the definition of the gap $\interv{\xi}{k}_{\Omega_0}$.
\end{proof}

\subsection{Two node system - \dotears{}}
We return to the two-node system described by Eq. \ref{eq:two node SEM}, and re-examine the system under the \dotears{} loss $\mathcal{L}_{\Omega_0}$, defined in Eq. \ref{suppeq:dotears loss definition}.  As before, we define the ground-truth weighted adjacency matrix $W_0 \coloneqq \begin{pmatrix}
    0 & w \\
    0 & 0
\end{pmatrix}$ and false weighted adjacency matrix $W_\delta \coloneqq \begin{pmatrix}
    0 & 0\\
    \delta & 0
\end{pmatrix}$. For simplicity, we assume we are given $\Omega_\alpha = \EE \hat{\Omega}_0 = {\alpha^2}\Omega_0$, the expected value of the estimator $\hat{\Omega}_0$. We show that $\EE \mathcal{L}_{\Omega_\alpha}\left(\interv{W_0}{k}, \interv{\boldX}{k}\right) < \EE \mathcal{L}_{\Omega_\alpha}\left(\interv{W_\delta}{k}, \interv{\boldX}{k}\right)$ for all $k=0,1,2$. 

\subsubsection{Observational system}
In the observational system, we retain the generative SEM in Eq. \ref{eq:two node SEM}:
\begin{equation}    
    \begin{aligned}
        \interv{X_1}{0} &= \interv{\epsilon_1}{0} \\
        \interv{X_2}{0} &= w \interv{X_1}{0} + \interv{\epsilon_2}{0} \\
                        &= w \interv{\epsilon_1}{0} + \interv{\epsilon_2}{0}.
    \end{aligned}
    \label{eq:two node SEM (supp)}
\end{equation}
To calculate $\EE\mathcal{L}_{\Omega_\alpha}\left(\interv{W_0}{0}, \interv{\boldX}{0}\right)$, we decompose the loss $\mathcal{L}$ component-wise by noting that 
\begin{equation}
        \EE\mathcal{L}_{\Omega_\alpha}\left(\interv{W}{k}, \interv{\boldX}{k}\right) 
        = \frac{1}{n_k}\sum_{i=0}^p \frac{\alpha^2}{\sigma_i^2} \EE\frobnorm{\left(\interv{\boldX}{k} - \interv{\boldX}{k}\interv{W}{k}\right)_i}, \nonumber
\end{equation}
to obtain
\begin{equation}
    \begin{aligned}
        \left(\interv{\boldX}{0} - \interv{\boldX}{0}W_0\right)_1 
        &= \interv{\boldX_1}{0} \\
        &= \interv{\boldepsilon_1}{0} \\
        \left(\interv{\boldX}{0} - \interv{\boldX}{0}W_0\right)_2 
        &= \interv{\boldX_2}{0} - w \interv{\boldX_1}{0} \\
        &= w \interv{\boldepsilon_1}{0} + \interv{\boldepsilon_2}{0} - w\interv{\boldepsilon_1}{0} \\
        &= \interv{\boldepsilon_2}{0} \nonumber
    \end{aligned}
\end{equation}
and therefore
\begin{equation}
    \begin{aligned}
        \frac{1}{n_0} \frac{\alpha^2}{\sigma_1^2} \EE \frobnorm{\left(\interv{\boldX}{0} - \interv{\boldX}{0}W_0\right)_1} 
        &= \frac{\alpha^2}{\sigma_1^2}\left(\frac{1}{n_0} \EE \frobnorm{\interv{\boldepsilon_1}{0}} \right)\\
        &= \left(\frac{\alpha^2}{\sigma_1^2}\right)\sigma_1^2  \\
        &=\alpha^2 \\
        \frac{1}{n_0} \frac{\alpha^2}{\sigma_2^2} \EE \frobnorm{\left(\interv{\boldX}{0} - \interv{\boldX}{0}W_0\right)_2} 
        &= \frac{\alpha^2}{\sigma_2^2}\left(\frac{1}{n_0} \EE \frobnorm{\interv{\boldepsilon_2}{0}} \right) \\
        &= \left(\frac{\alpha^2}{\sigma_2^2}\right)\sigma_2^2  \\
        &=\alpha^2 \nonumber
    \end{aligned}
\end{equation}
As a result,
\begin{equation}
    \EE \mathcal{L}_{\Omega_\alpha}\left(\interv{W_0}{0}, \interv{\boldX}{0}\right) =  2a^2. \nonumber
\end{equation}

Similarly, we calculate $\EE\mathcal{L}_{\Omega_\alpha}\left(\interv{W_\delta}{0}, \interv{\boldX}{0}\right)$ component-wise:
\begin{equation}
    \begin{aligned}
        \left(\interv{\boldX}{0} - \interv{\boldX}{0}W_\delta\right)_1 
        &= \interv{\boldX_1}{0} -\delta \interv{\boldX_2}{0} \\
        &= \interv{\boldepsilon_1}{0} - \delta\left(w \interv{\boldepsilon_1}{0} + \interv{\boldepsilon_2}{0}\right) \\
        &= \left(1 - \delta w\right) \interv{\boldepsilon_1}{0} - \delta \interv{\boldepsilon_2}{0} \\
        \left(\interv{\boldX}{0} - \interv{\boldX}{0}W_\delta\right)_2 
        &= \interv{\boldX_2}{0} \\ 
        &= w \interv{\boldepsilon_1}{0} + \interv{\boldepsilon_2}{0} \nonumber
    \end{aligned}
\end{equation}
Let $\gamma \in \RR^+$, such that $\sigma_1^2 = \gamma \sigma_2^2$. Then we calculate expected loss component-wise as
\begin{equation}
    \begin{aligned}
        \frac{1}{n_0} \frac{\alpha^2}{\sigma_1^2} \EE \frobnorm{\left(\interv{\boldX}{0} - \interv{\boldX}{0}W_\delta\right)_1} 
        &= \frac{\alpha^2}{\sigma_1^2}\left((1 - \delta w)^2 \sigma_1^2 + \delta^2\sigma_2^2  \right) \\
        &=\alpha^2 \left( (1 - \delta w)^2 + \frac{\delta^2}{\gamma}\right) \\
        \frac{1}{n_0} \frac{\alpha^2}{\sigma_2^2} \EE \frobnorm{\left(\interv{\boldX}{0} - \interv{\boldX}{0}W_\delta\right)_2} 
        &= \frac{\alpha^2}{\sigma_2^2}\left(w^2\sigma_1^2 + \sigma_2^2\right) \\
        &=\alpha^2 \left(1 + w^2 \gamma\right) \nonumber
    \end{aligned}
\end{equation}
As a result,
\begin{equation}
    \EE \mathcal{L}_{\Omega_\alpha}\left(\interv{W_\delta}{0}, \interv{\boldX}{0}\right) = \alpha^2 \left( (1 - \delta w)^2 + \frac{\delta^2}{\gamma} + 1 + w^2 \gamma\right). \nonumber
\end{equation}

We can now ask whether $\EE \mathcal{L}_{\Omega_\alpha}\left(\interv{W_\delta}{0}, \interv{\boldX}{0}\right) \geq \EE \mathcal{L}_{\Omega_\alpha}\left(\interv{W_0}{0}, \interv{\boldX}{0}\right)$ for all $w, \delta, \gamma$.
\begin{equation}
    \begin{aligned}
        \EE \mathcal{L}_{\Omega_\alpha}\left( \interv{W_\delta}{0}, \interv{\boldX}{0}\right) 
        &\overset{?}{\geq} \EE \mathcal{L}_{\Omega_\alpha}\left(\interv{W_0}{0}, \interv{\boldX}{0}\right) \\
       \alpha^2 \left( (1 - \delta w)^2 + \frac{\delta^2}{\gamma} + 1 + w^2 \gamma\right) 
        &\overset{?}{\geq} 2\alpha^2\\
        (1 - \delta w)^2 + \frac{\delta^2}{\gamma} + 1 + w^2 \gamma 
        &\overset{?}{\geq} 2 \\
        1 - 2\delta w + \delta^2 w^2 + \frac{\delta^2}{\gamma} + w^2 
        &\overset{?}{\geq} 1 \\ \nonumber
    \end{aligned}
\end{equation}
In the end, we obtain the inequality
\begin{equation}
w^2 \left(\delta^2 + \gamma\right) - w (2\delta) + \frac{\delta^2}{\gamma} \overset{?}{\geq} 0
\label{suppeq:two node final equality dotears}
\end{equation}
which is also a quadratic in $w$. We can solve for the roots of $w$:
\begin{equation}
    \begin{aligned}
        w
        &= \frac{2\delta\pm \sqrt{
        4\delta^2 - 4\left(\frac{\delta^2}{\gamma}\right)\left(\delta^2 + \gamma\right)} 
        }
        {
        2\left(\delta^2 + \gamma\right)
        } \\
        &= \frac{2\delta\pm 2\sqrt{
        \delta^2 - \frac{\delta^4}{\gamma} - \delta^2} 
        }
        {
        2\left(\delta^2 + \gamma\right)
        } \\
        &=\frac{2\delta\pm 2\sqrt{
        - \frac{\delta^4}{\gamma}} 
        }
        {
        2\left(\delta^2 + \gamma\right)
        }
        \nonumber
    \end{aligned}
\end{equation}
$\gamma \in \RR^+$ shows that $w$ has no solution in $\RR$. Moreover, the intercept term $\frac{\delta^2}{\gamma} > 0$ in Eq. \ref{suppeq:two node final equality dotears}, proving the result for all $w, \delta, \gamma$.

\subsubsection{Intervention on node 1}
We proceed similarly for the interventional cases. Upon intervention on node 1, the generative structure does not change, i.e. $\interv{W_0}{1} = \begin{pmatrix}
    0 & w \\
    0 & 0
\end{pmatrix}$, with SEM
\begin{equation}
    \begin{aligned}
        \interv{X_1}{1} &= \interv{\epsilon_1}{1} \\
        \interv{X_2}{1} &= w \interv{X_1}{1} + \interv{\epsilon_2}{1} \\
                        &= w \interv{\epsilon_1}{1} + \interv{\epsilon_2}{1}
        \nonumber
    \end{aligned}.
\end{equation}
Note that $\VV\left(\interv{\epsilon_i}{1}\right) = \frac{\sigma_1^2}{\alpha^2} = \frac{1}{\alpha^2}\VV\left(\interv{\epsilon_i}{0}\right)$, in accordance with Assumption \ref{assumption:global a assumption}. Component-wise under $\interv{W_0}{1}$ we have
\begin{equation}
    \begin{aligned}
        \left(\interv{\boldX}{1} - \interv{\boldX}{1}\interv{W_0}{1}\right)_1 
        &= \interv{\boldX_1}{1} \\
        &= \interv{\boldepsilon_1}{1} \\
        \left(\interv{\boldX}{1} - \interv{\boldX}{1}\interv{W_0}{1}\right)_2 
        &= \interv{\boldX_2}{1} - w \interv{\boldX_1}{1} \\
        &= \interv{\boldX_2}{1} - w \interv{\boldX_1}{1} \\
        &= w\interv{\boldepsilon_1}{1} + \interv{\boldepsilon_2}{1} - w\interv{\boldepsilon_1}{1}\\
        &= \interv{\boldepsilon_2}{1}
        \nonumber
    \end{aligned}
\end{equation}
which gives us the expected component-wise losses
\begin{equation}
    \begin{aligned}
        \frac{1}{n_1} \frac{\alpha^2}{\sigma_1^2} \EE \frobnorm{\left(\interv{\boldX}{1} - \interv{\boldX}{1}\interv{W_0}{1}\right)_1} 
        &= \frac{\alpha^2}{\sigma_1^2}\frac{\sigma_1^2}{\alpha^2} \\
        &= 1\\
        \frac{1}{n_1} \frac{\alpha^2}{\sigma_2^2} \EE \frobnorm{\left(\interv{\boldX}{1} - \interv{\boldX}{1}\interv{W_0}{1}\right)_2} 
        &= \frac{\alpha^2}{\sigma_2^2}\sigma_2^2 \\
        &=\alpha^2
        \nonumber
    \end{aligned}
\end{equation}
and thus
\begin{equation}
    \EE\mathcal{L}_{\Omega_\alpha}\left(\interv{W_0}{1}, \interv{\boldX}{1}\right) = 1 +\alpha^2.
        \nonumber
\end{equation}

Note that under intervention on $X_1$, $\interv{W_\delta}{1} = \begin{pmatrix}
    0 & 0 \\
    0 & 0
\end{pmatrix}$. Component-wise, we then have
\begin{equation}
    \begin{aligned}
        \left(\interv{\boldX}{1} - \interv{\boldX}{1}\interv{W_\delta}{1}\right)_1 
        &= \interv{\boldX_1}{1} \\
        &= \interv{\boldepsilon_1}{1} \\
        \left(\interv{\boldX}{1} - \interv{\boldX}{1}\interv{W_\delta}{1}\right)_2 
        &= \interv{\boldX_2}{1} \\
        &= w \interv{\boldepsilon_1}{1} + \interv{\boldepsilon_2}{1}
        \nonumber
    \end{aligned}
\end{equation}
and the expected component-wise loss
\begin{equation}
    \begin{aligned}
        \frac{1}{n_1} \frac{\alpha^2}{\sigma_1^2} \EE \frobnorm{\left(\interv{\boldX}{1} - \interv{\boldX}{1}\interv{W_\delta}{1}\right)_1} 
        &= \frac{\alpha^2}{\sigma_1^2}\frac{\sigma_1^2}{\alpha^2} \\
        &= 1\\
        \frac{1}{n_1} \frac{\alpha^2}{\sigma_2^2} \EE \frobnorm{\left(\interv{\boldX}{1} - \interv{\boldX}{1}\interv{W_\delta}{1}\right)_2}  
        &= \frac{\alpha^2}{\sigma_2^2}\left(w^2 \frac{\sigma_1^2}{\alpha^2} + \sigma_2^2 \right) \\
        &=\alpha^2\left(w^2 \frac{\sigma_1^2}{\sigma_2^2} + 1\right).
        \nonumber
    \end{aligned}
\end{equation}

Thus,
\begin{equation}
    \EE\mathcal{L}_{\Omega_\alpha}\left(\interv{W_\delta}{1}, \interv{\boldX}{1}\right) = 1 +\alpha^2 +\alpha^2 w^2 \frac{\sigma_1^2}{\sigma_2^2} > \EE\mathcal{L}_{\Omega_\alpha}\left(\interv{W_0}{1}, \interv{\boldX}{1}\right),
        \nonumber
\end{equation}
which proves the result.

\subsubsection{Intervention on node 2}
Under intervention on $X_2$, $\interv{W_0}{2} = \begin{pmatrix}
    0 & 0 \\
    0 & 0
\end{pmatrix}$, with corresponding SEM
\begin{equation}
    \begin{aligned}
        \interv{X_1}{2} &= \interv{\epsilon_1}{2} \\
        \interv{X_2}{2} &= \interv{\epsilon_2}{2} \\
        \nonumber
    \end{aligned}.
\end{equation}
Component-wise, we obtain the terms
\begin{equation}
    \begin{aligned}
        \left(\interv{\boldX}{2} - \interv{\boldX}{2}\interv{W_0}{2}\right)_1 
        &= \interv{\boldX_1}{2} \\
        &= \interv{\boldepsilon_1}{2} \\
        \left(\interv{\boldX}{2} - \interv{\boldX}{2}\interv{W_0}{2}\right)_2
        &= \interv{\boldX_2}{2} \\
        &= \interv{\boldepsilon_2}{2} \\ 
        \nonumber
    \end{aligned}
\end{equation}

Note that $\VV\left(\interv{\epsilon_2}{2}\right) = \frac{\sigma_2}{\alpha^2}$ in accordance with Assumption \ref{assumption:global a assumption}. Then the expected loss component-wise is
\begin{equation}
    \begin{aligned}
        \frac{1}{n_2}\frac{\alpha^2}{\sigma_1^2} \EE \frobnorm{\left(\interv{\boldX}{2} - \interv{\boldX}{2}\interv{W_0}{2}\right)_1} 
        &= \frac{\alpha^2}{\sigma_1^2} \sigma_1^2 \\
        &= \alpha^2 \\
        \frac{1}{n_2}\frac{\alpha^2}{\sigma_2^2} \EE \frobnorm{\left(\interv{\boldX}{2} - \interv{\boldX}{2}\interv{W_0}{2}\right)_2} 
        &= \frac{\alpha^2}{\sigma_2^2} \frac{\sigma_2^2}{\alpha^2} \\
        &= 1,
        \nonumber
    \end{aligned}
\end{equation}
giving 
\begin{equation}
    \EE \mathcal{L}_{\Omega_\alpha}\left(\interv{W_0}{2},\interv{\boldX}{2}\right) = 1 +\alpha^2.
        \nonumber
\end{equation}

Under $\interv{W_\delta}{2} = \begin{pmatrix}
    0 & 0\\
    \delta & 0
\end{pmatrix}$, we obtain the terms component-wise
\begin{equation}
    \begin{aligned}
        \left(\interv{\boldX}{2} - \interv{\boldX}{2}\interv{W_\delta}{2}\right)_1 
        &= \interv{\boldX_1}{2} - \delta \interv{\boldX_2}{2} \\
        &= \interv{\boldepsilon_1}{2} - \delta \interv{\boldepsilon_2}{2}\\
        \left(\interv{\boldX}{2} - \interv{\boldX}{2}\interv{W_\delta}{2}\right)_2
        &= \interv{\boldX_2}{2} \\
        &= \interv{\boldepsilon_2}{2} \\
        \nonumber
    \end{aligned}
\end{equation}
which in expectation gives the component-wise losses
\begin{equation}
    \begin{aligned}
        \frac{1}{n_2} \frac{\alpha^2}{\sigma_1^2} \EE \frobnorm{\left(\interv{\boldX}{2} - \interv{\boldX}{2}\interv{W_\delta}{2}\right)_1} 
        &= \frac{\alpha^2}{\sigma_1^2}\left( \sigma_1^2 + \delta^2 \frac{\sigma_2^2}{\alpha^2} \right)\\
        &= \alpha^2 + \delta^2 \frac{\sigma_2^2}{\sigma_1^2}\\
        \frac{1}{n_2} \frac{\alpha^2}{\sigma_2^2} \EE \frobnorm{\left(\interv{\boldX}{2} - \interv{\boldX}{2}\interv{W_\delta}{2}\right)_2}  
        &= \frac{\alpha^2}{\sigma_2^2} \frac{\sigma_2^2}{\alpha^2},
        \nonumber
    \end{aligned}
\end{equation}
which gives
\begin{equation}
    \EE \mathcal{L}_{\Omega_\alpha}\left(\interv{W_\delta}{2}, \interv{\boldX}{2}\right) = 1 +\alpha^2 + \delta^2 \frac{\sigma_2^2}{\sigma_1^2}
        \nonumber
\end{equation}
which is trivially greater than $\EE \mathcal{L}_{\Omega_\alpha}\left(\interv{W_\delta}{2}, \interv{\boldX}{2}\right)$ for all $\alpha, \delta, \sigma_1^2, \sigma_2^2$.

\subsection{Full two node simulation details}
\label{suppsection:full two node sims}
 In Section \ref{section:small p simulations}, we simulate from the system of SEMs
\begin{equation}
    \begin{aligned}
        \interv{X}{0} &= \interv{X}{0}W + \interv{\epsilon}{0} \\
        \interv{X}{1} &= \interv{X}{1}\interv{W}{1} + \interv{\epsilon}{1} \\
        \interv{X}{2} &= \interv{X}{2}\interv{W}{2} + \interv{\epsilon}{2}, \\
        \nonumber
    \end{aligned}
\end{equation}
such that in the observational system
\begin{equation}
    \begin{aligned}
        \interv{X_1}{0} &= \interv{\epsilon_1}{1} && \interv{\epsilon_1}{0} \sim \Nor\left(0, \gamma\right) \\
        \interv{X_2}{0} &= w \interv{X_1}{0} + \interv{\epsilon_2}{0} && \interv{\epsilon_2}{0} \sim \Nor\left(0, 1\right), \\
    \end{aligned}
    \label{eq:small sim generative model obs}
\end{equation}
under intervention on node 1 we have
\begin{equation}
    \begin{aligned}
        \interv{X_1}{1} &= \interv{\epsilon_1}{1} && \interv{\epsilon_1}{1} \sim \Nor\left(0, \frac{\gamma}{\alpha^2} \right) \\
        \interv{X_2}{1} &= w \interv{X_1}{1} + \interv{\epsilon_2}{1} && \interv{\epsilon_2}{1} \sim \Nor\left(0, 1\right), \\
    \end{aligned}
    \label{eq:small sim generative model ko 1}
\end{equation}
and under intervention on node 2
\begin{equation}
    \begin{aligned}
        \interv{X_1}{2} &= \interv{\epsilon_1}{2} && \interv{\epsilon_1}{2} \sim \Nor\left(0, \gamma\right) \\
        \interv{X_2}{2} &= \interv{\epsilon_2}{2} && \interv{\epsilon_2}{2} \sim \Nor\left(0, \frac{1}{\alpha^2}\right). \\
    \end{aligned}
    \label{eq:small sim generative model ko 2}
\end{equation}
For observational data, we simulate only from the SEM given in Eq. \ref{eq:small sim generative model obs}, with a sample size $n=3000$. For interventional data, we draw a sample size of $n_k = 1000$ from each intervention $k$. This gives a total sample size of $n=(p + 1) \times 3000$ for interventional data, which matches the sample size of observational data. We set $\alpha = 4$. In Section \ref{section:small p simulations}, we presented empirical results on two node simulations for $(w, \gamma) \in \{0.3, 0.5, 0.7, 0.8, 1.0\} \times \{1, 2, 100\}$. Figure \ref{suppfig:full two node sims} shows the $\ell_1$ distance between the ground truth DAG $W_0$ and the inferred DAG for each method, for all $(w, \gamma) \in \{0.1, 0.2, \dots ,1.5\} \times \{1, 2, 4, 10, 100\}$. For each parameter combination of $(w, \gamma)$, we draw 25 instances of simulated data for both the interventional and strictly observational case, at the described sample size. 

For \dotears{}, sortnregress, and NO TEARS, we set the regularization parameter $\lambda$ to 0, to isolate the performance of the loss function.  In GOLEM-EV and GOLEM-NV we set the $\ell_1$ regularization parameter $\lambda_1$ to 0, but let $\lambda_2 = 5$ to enforce DAG-ness, as recommended by the authors. 

See Figure \ref{suppfig:full two node sims} for the full set of simulated $w, \gamma$. 

\begin{figure*}
    \centering
    \includegraphics[width=\textwidth]{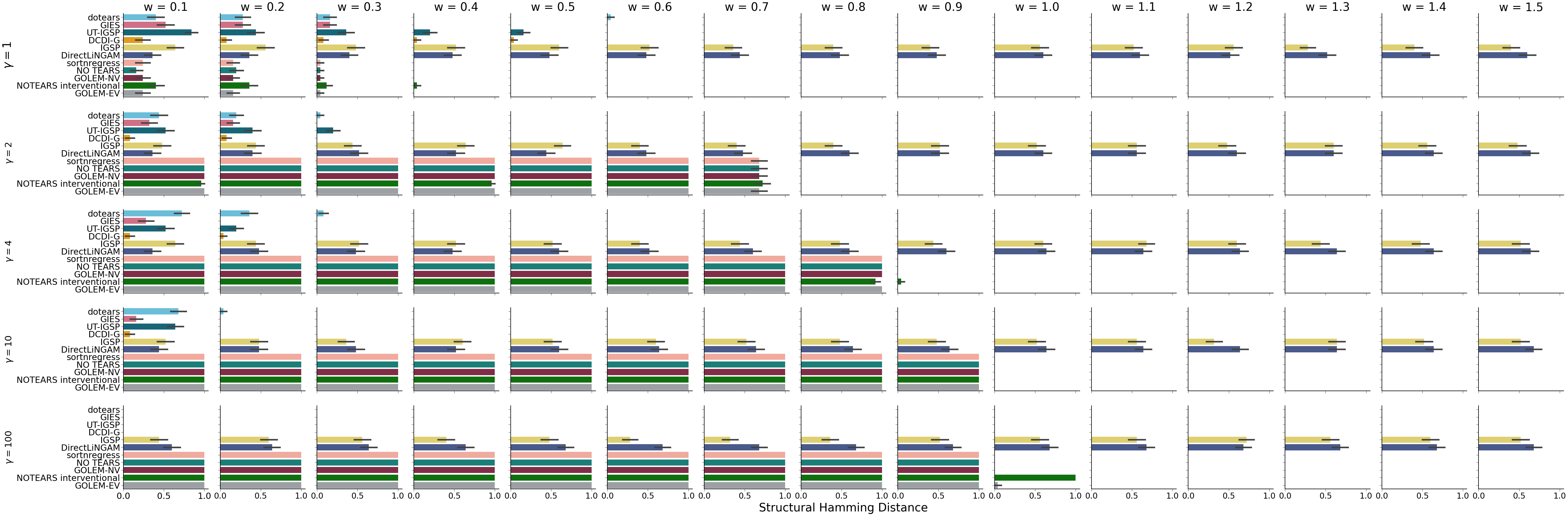}
    \caption{Full simulations for the two-node DAG for all $(w, \gamma) \in \{0.1, 0.2, \dots ,1.5\} \times \{1, 2, 4, 10, 100\}$, where $X_1 \overset{w}{\rightarrow} X_2$ and $\sigma_1^2 = \gamma\sigma_2^2$. Methods are compared using SHD (lower is better).}
    \label{suppfig:full two node sims}
\end{figure*}

\subsection{Full three node simulations}
\label{suppsection:full three node sims}
Simulations in the two-node DAG showed that NO TEARS and GOLEM-NV are deterministic functions of the true structure $W$ and the true exogenous variance structure $\Omega_0$ (Sections \ref{section:small p simulations}, \ref{suppsection:full two node sims}). Simulations in more complex three-node topologies verify determinism of NO TEARS and GOLEM-NV in $W$ and $\Omega_0$, but show they are distinct deterministic functions. 

We simulate observational and interventional data under the three node chain $X_1 \overset{w}{\rightarrow} X_2 \overset{w}{\rightarrow} X_3$, the three node collider $X_1 \overset{w}{\rightarrow} X_3 \overset{w}{\leftarrow} X_2$, and the three node fork $X_2 \overset{w}{\leftarrow} X_1 \overset{w}{\rightarrow} X_3$. In each topology, we let $\Omega_0 \coloneqq \begin{pmatrix}
    \gamma & 0 & 0 \\
    0 & 1 & 0 \\
    0 & 0 & 1
\end{pmatrix}$, so that $\gamma$ is the exogenous variance of a source node. For simplicity, we constrain the edge weights to be equal, and simulate data under Gaussian exogenous variance for $(w, \gamma) \in \{0.1, \dots, 1.5 \} \times \{1, 2, 4, 10, 100\}$. For interventional data with interventions $i\in \{0, 1, 2, 3\}$, we draw $n_i = 1000$ observations in all simulations, giving us a total sample size of $n = 4000$ that is matched in observational data. As in section \ref{section:small p simulations}, we set $\alpha = 4$, and remove $\ell_1$ regularization where appropriate. We benchmark \dotears{}, NO TEARS, sortnregress, GOLEM-EV, GOLEM-NV, DirectLingam, GIES, IGSP, UT-IGSP, and DCDI-G \cite{hauser2012characterization, zheng2018dags, reisach2021beware, ng2020role, yang2018characterizing, wang2017permutation, squires2020permutation, brouillard2020differentiable, shimizu2011directlingam}. 

We evaluate each method by the SHD between the ground truth DAG $W$ and the method inferred DAG. In the two node case, the least squares loss performed identically to the likelihood loss of GOLEM-NV; here, their performances diverge, but remain essentially deterministic functions in $w$ and $\gamma$.

\subsubsection{Chain}
In the chain, we have the true structure $W_0 = \begin{pmatrix}
    0 & w & 0 \\
    0 & 0 & w \\
    0 & 0 & 0
\end{pmatrix}$. We simulate under the system of SEMs
\begin{equation}
    \begin{aligned}
        \interv{X}{0} &= \interv{X}{0}W_0 + \interv{\epsilon}{0} \\
        \interv{X}{1} &= \interv{X}{1}\interv{W_0}{1} + \interv{\epsilon}{1} \\
        \interv{X}{2} &= \interv{X}{2}\interv{W_0}{2} + \interv{\epsilon}{2}, \\
        \interv{X}{3} &= \interv{X}{3}\interv{W_0}{3} + \interv{\epsilon}{3},
        \nonumber
    \end{aligned}
\end{equation}
such that in the observational system
\begin{equation}
    \begin{aligned}
        \interv{X_1}{0} &= \interv{\epsilon_1}{1} && \interv{\epsilon_1}{0} \sim \Nor\left(0, \gamma\right) \\
        \interv{X_2}{0} &= w \interv{X_1}{0} + \interv{\epsilon_2}{0} && \interv{\epsilon_2}{0} \sim \Nor\left(0, 1\right), \\
        \interv{X_3}{0} &= w \interv{X_2}{0} + \interv{\epsilon_3}{0} && \interv{\epsilon_3}{0} \sim \Nor\left(0, 1\right),
        \nonumber
    \end{aligned}
\end{equation}
under intervention on node 1 we have
\begin{equation}
    \begin{aligned}
        \interv{X_1}{1} &= \interv{\epsilon_1}{1} && \interv{\epsilon_1}{1} \sim \Nor\left(0, \frac{\gamma}{\alpha^2} \right) \\
        \interv{X_2}{1} &= w \interv{X_1}{1} + \interv{\epsilon_2}{1} && \interv{\epsilon_2}{1} \sim \Nor\left(0, 1\right), \\
        \interv{X_3}{1} &= w \interv{X_2}{1} + \interv{\epsilon_3}{1} && \interv{\epsilon_3}{1} \sim \Nor\left(0, 1\right), \\
        \nonumber
    \end{aligned}
\end{equation}
under intervention on node 2
\begin{equation}
    \begin{aligned}
        \interv{X_1}{2} &= \interv{\epsilon_1}{2} && \interv{\epsilon_1}{2} \sim \Nor\left(0, \gamma\right) \\
        \interv{X_2}{2} &= \interv{\epsilon_2}{2} && \interv{\epsilon_2}{2} \sim \Nor\left(0, \frac{1}{\alpha^2}\right), \\
        \interv{X_3}{2} &= w \interv{X_2}{2} + \interv{\epsilon_3}{2} && \interv{\epsilon_3}{2} \sim \Nor\left(0, 1\right), \\
        \nonumber
    \end{aligned}
\end{equation}
and under intervention on node 3
\begin{equation}
    \begin{aligned}
        \interv{X_1}{3} &= \interv{\epsilon_1}{3} && \interv{\epsilon_1}{3} \sim \Nor\left(0, \gamma\right) \\
        \interv{X_2}{3} &= w \interv{X_1}{3} + \interv{\epsilon_2}{3} && \interv{\epsilon_2}{3} \sim \Nor\left(0, 1 \right), \\
        \interv{X_3}{1} &= \interv{\epsilon_3}{3} && \interv{\epsilon_3}{3} \sim \Nor\left(0, \frac{1}{\alpha^2} \right). \\
        \nonumber
    \end{aligned}
\end{equation}

Figure \ref{suppfig:full three node chain sims} shows the full results. 

\begin{figure*}
    \centering
    \begin{subfigure}[b]{0.9\textwidth}
       \includegraphics[width=1\linewidth]{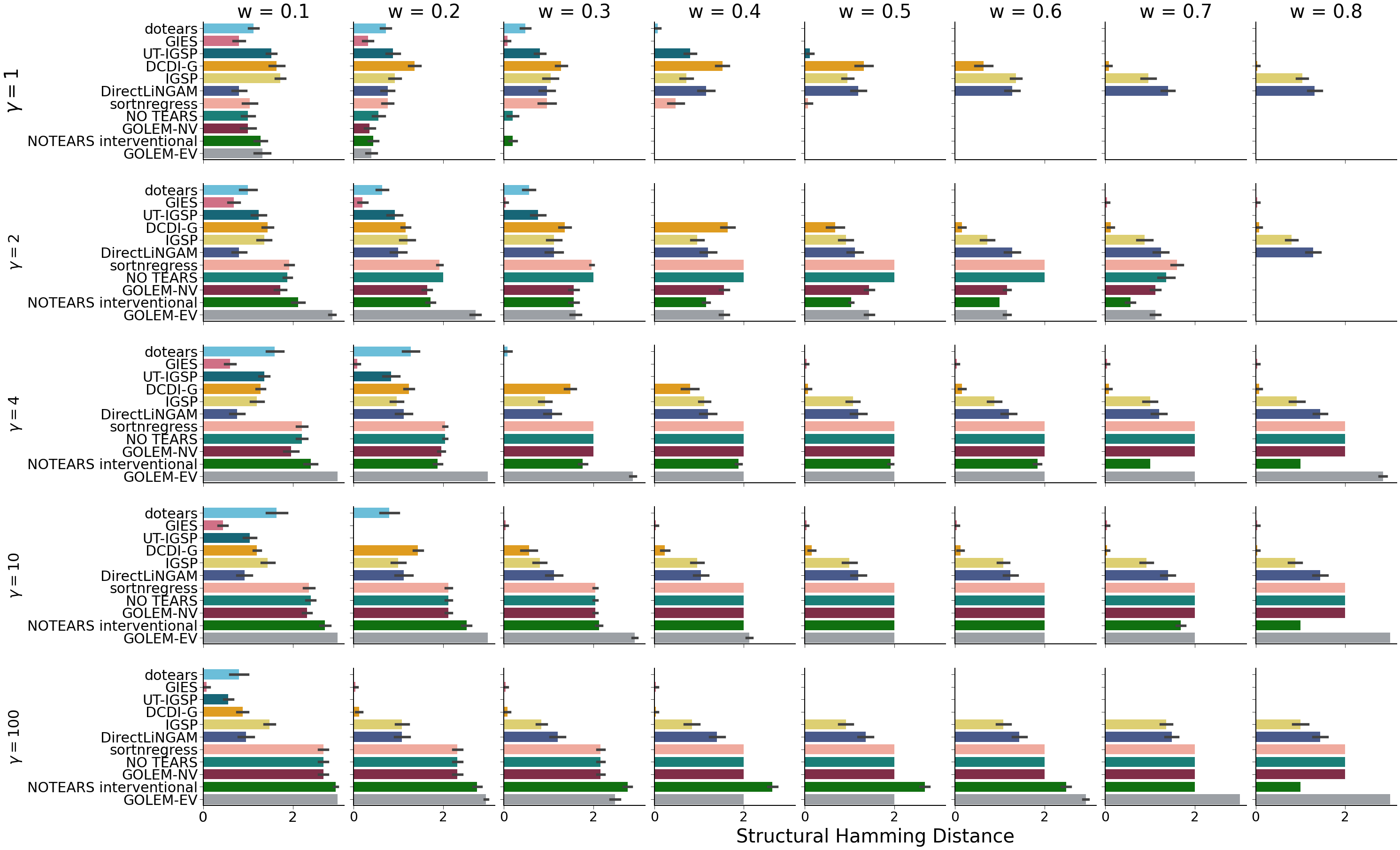}
       \caption{Results for $w$ in $0.1, \dots, 0.8$, inclusive.}
    \end{subfigure}
    
    \begin{subfigure}[b]{0.9\textwidth}
       \includegraphics[width=1\linewidth]{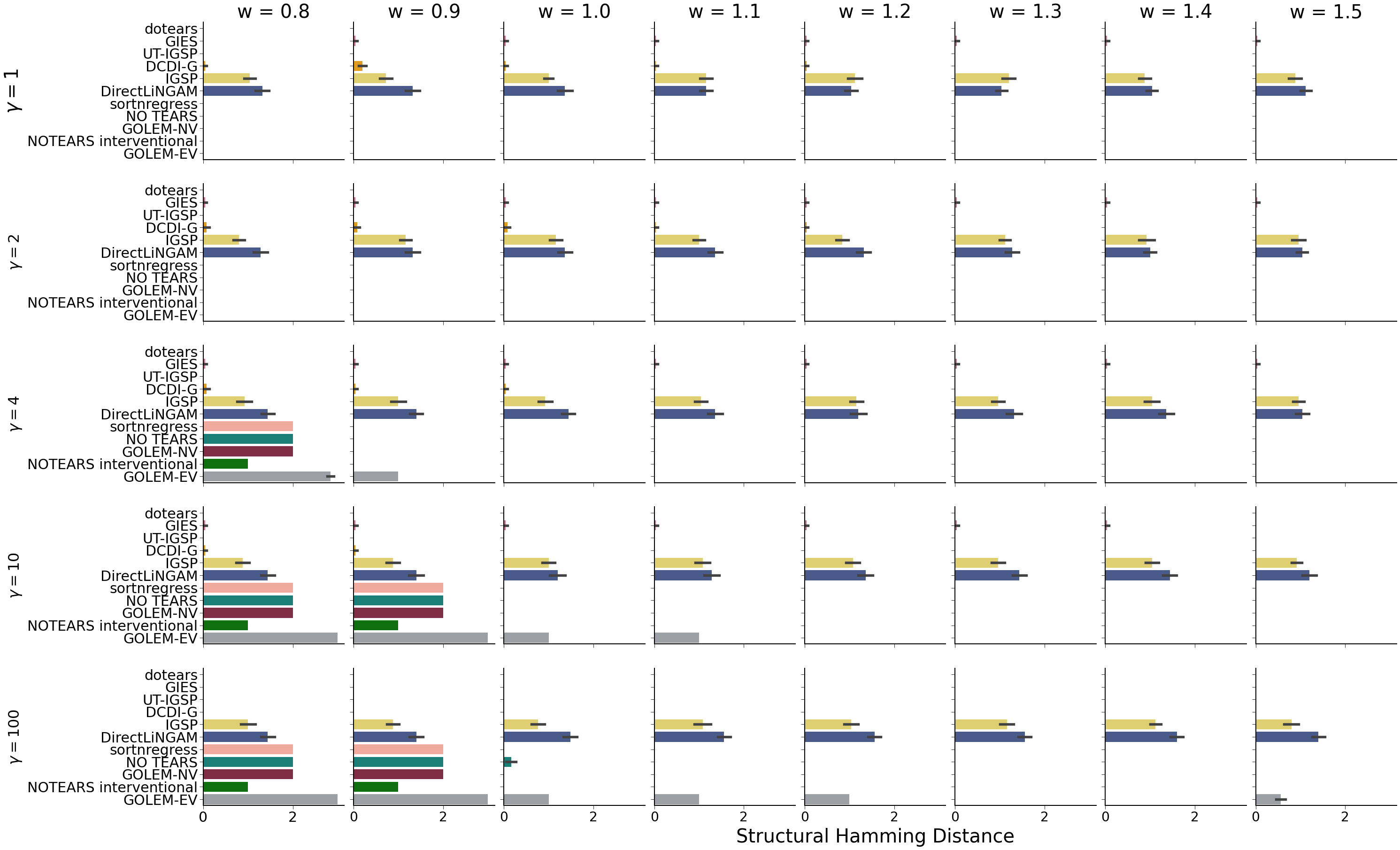}
       \caption{Results for $w$ in $0.8, \dots, 1.5$, inclusive.}
    \end{subfigure}
    
    \caption{Full simulations for the chain topology $X_1 \overset{w}{\rightarrow} X_2 \overset{w}{\rightarrow} X_3$, where $\Omega_0 = \begin{pmatrix}
        \gamma & 0 & 0 \\
        0 & 1 & 0 \\
        0 & 0 & 1 \\
    \end{pmatrix}$. We iterate across all $(w, \gamma) \in \{0.1, 0.2, \dots ,1.5\} \times \{1, 2, 4, 10, 100\}$. Methods are compared using SHD (lower is better). (a) shows results for $w$ in $0.1, \dots, 0.8$, inclusive, while (b) shows results for $w$ in $0.8, \dots, 1.5$, inclusive. $w = 0.8$ is repeated across subpanels.}
    \label{suppfig:full three node chain sims}
\end{figure*}

\subsubsection{Collider}
In the collider, we have the true structure $W_0 = \begin{pmatrix}
    0 & 0 & w \\
    0 & 0 & w \\
    0 & 0 & 0
\end{pmatrix}$. We simulate under the system of SEMs
\begin{equation}
    \begin{aligned}
        \interv{X}{0} &= \interv{X}{0}W_0 + \interv{\epsilon}{0} \\
        \interv{X}{1} &= \interv{X}{1}\interv{W_0}{1} + \interv{\epsilon}{1} \\
        \interv{X}{2} &= \interv{X}{2}\interv{W_0}{2} + \interv{\epsilon}{2}, \\
        \interv{X}{3} &= \interv{X}{3}\interv{W_0}{3} + \interv{\epsilon}{3},
        \nonumber
    \end{aligned}
\end{equation}
such that in the observational system
\begin{equation}
    \begin{aligned}
        \interv{X_1}{0} &= \interv{\epsilon_1}{1} && \interv{\epsilon_1}{0} \sim \Nor\left(0, \gamma\right) \\
        \interv{X_2}{0} &= \interv{\epsilon_2}{0} && \interv{\epsilon_2}{0} \sim \Nor\left(0, 1\right), \\
        \interv{X_3}{0} &= w\interv{X_1}{0} + w\interv{X_2}{0} + \interv{\epsilon_3}{0} && \interv{\epsilon_3}{0} \sim \Nor\left(0, 1\right),
        \nonumber
    \end{aligned}
\end{equation}
under intervention on node 1 we have
\begin{equation}
    \begin{aligned}
        \interv{X_1}{1} &= \interv{\epsilon_1}{1} && \interv{\epsilon_1}{1} \sim \Nor\left(0, \frac{\gamma}{\alpha^2} \right) \\
        \interv{X_2}{1} &= \interv{\epsilon_2}{1} && \interv{\epsilon_2}{1} \sim \Nor\left(0, 1\right), \\
        \interv{X_3}{1} &= w\interv{X_1}{1} + w\interv{X_2}{1} + \interv{\epsilon_3}{1} && \interv{\epsilon_3}{1} \sim \Nor\left(0, 1\right), \\
        \nonumber
    \end{aligned}
\end{equation}
under intervention on node 2
\begin{equation}
    \begin{aligned}
        \interv{X_1}{2} &= \interv{\epsilon_1}{2} && \interv{\epsilon_1}{2} \sim \Nor\left(0, \gamma\right) \\
        \interv{X_2}{2} &= \interv{\epsilon_2}{2} && \interv{\epsilon_2}{2} \sim \Nor\left(0, \frac{1}{\alpha^2}\right), \\
        \interv{X_3}{2} &= w\interv{X_1}{2} + w \interv{X_2}{2} + \interv{\epsilon_3}{2} && \interv{\epsilon_3}{2} \sim \Nor\left(0, 1\right), \\
        \nonumber
    \end{aligned}
\end{equation}
and under intervention on node 3
\begin{equation}
    \begin{aligned}
        \interv{X_1}{3} &= \interv{\epsilon_1}{3} && \interv{\epsilon_1}{3} \sim \Nor\left(0, \gamma\right) \\
        \interv{X_2}{3} &= \interv{\epsilon_2}{3} && \interv{\epsilon_2}{3} \sim \Nor\left(0, 1 \right), \\
        \interv{X_3}{1} &= \interv{\epsilon_3}{3} && \interv{\epsilon_3}{3} \sim \Nor\left(0, \frac{1}{\alpha^2} \right). \\
        \nonumber
    \end{aligned}
\end{equation}
\begin{figure*}
    \centering
    \begin{subfigure}[b]{0.9\textwidth}
       \includegraphics[width=1\linewidth]{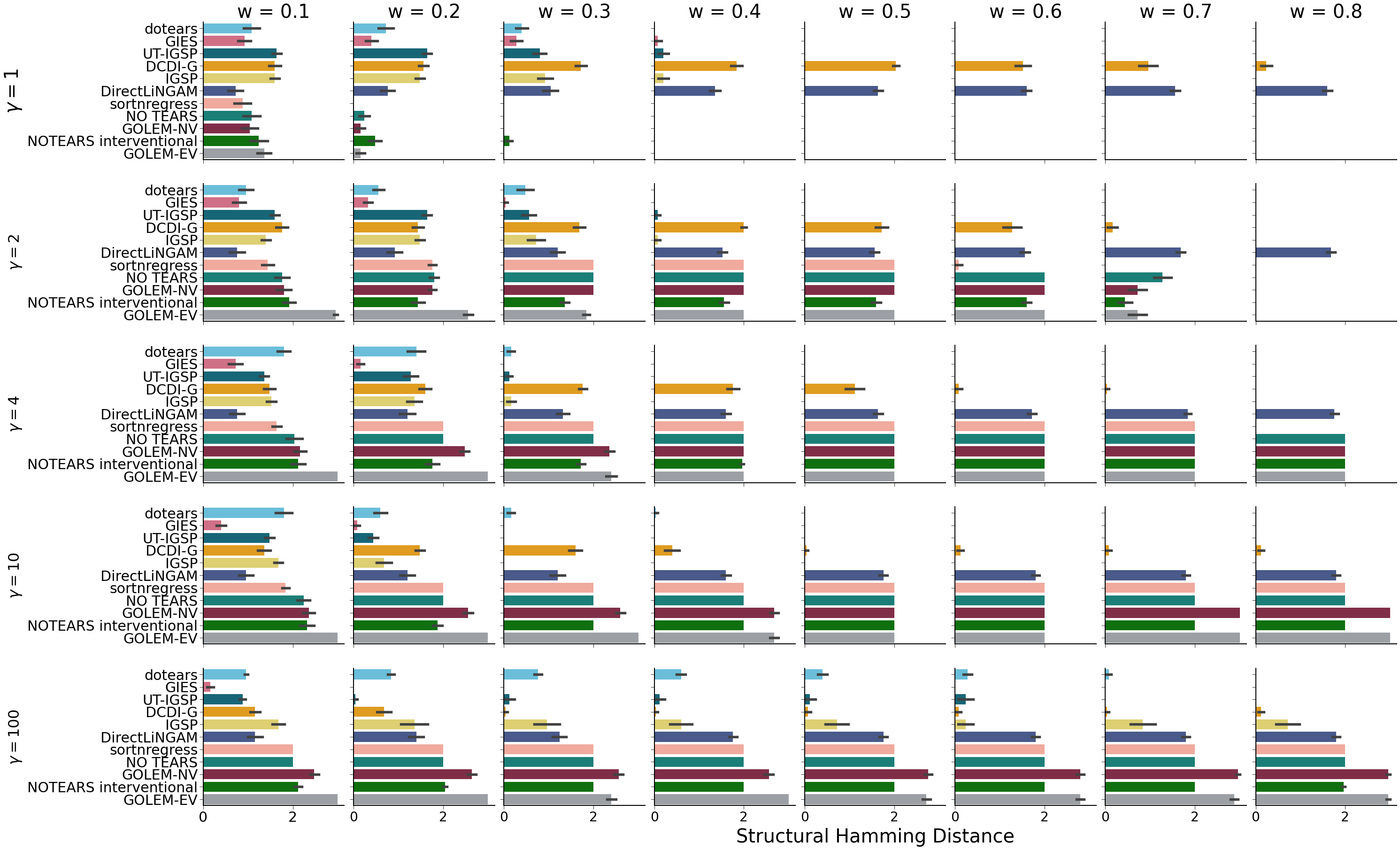}
       \caption{Results for $w$ in $0.1, \dots, 0.8$, inclusive.}
    \end{subfigure}
    
    \begin{subfigure}[b]{0.9\textwidth}
       \includegraphics[width=1\linewidth]{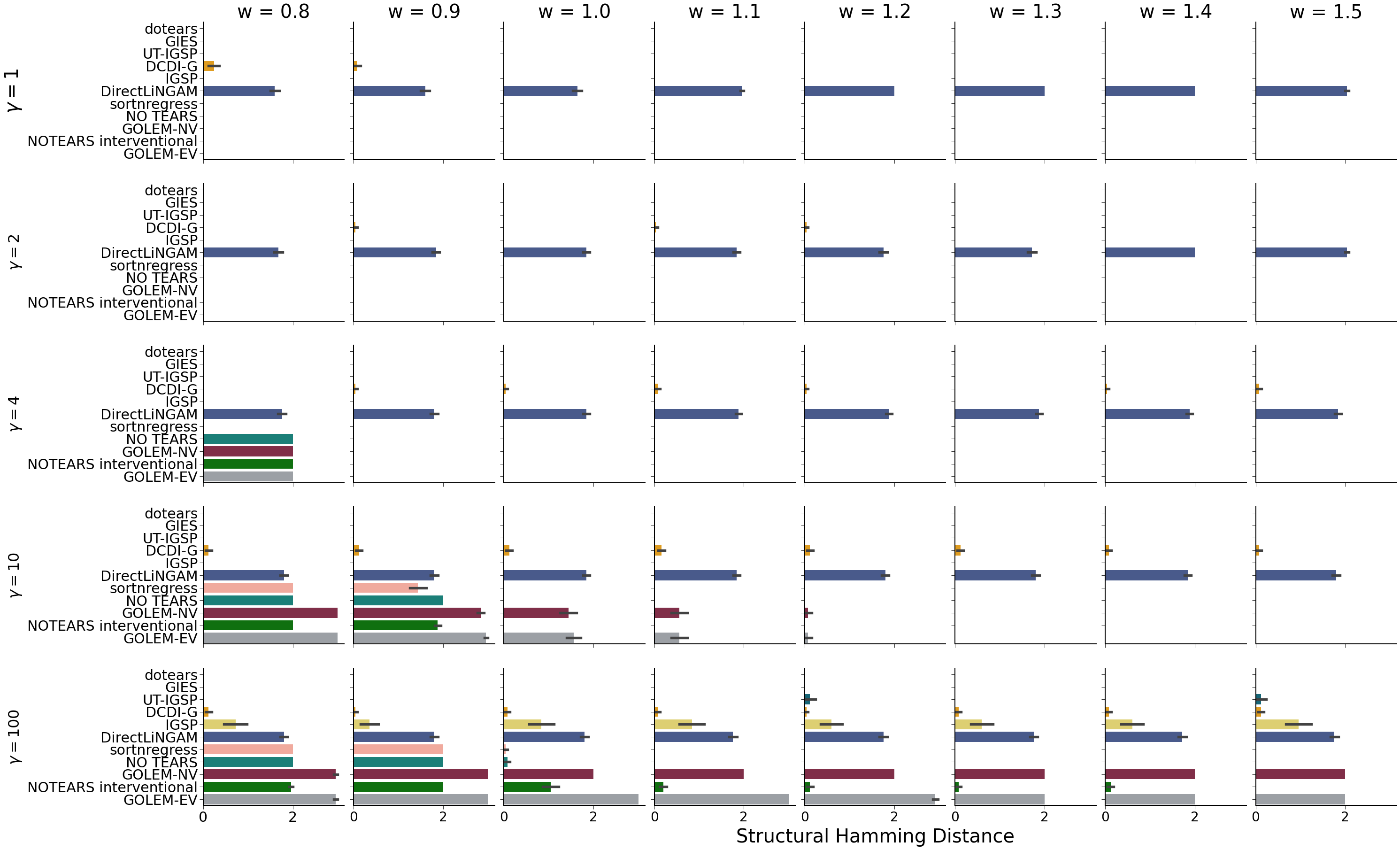}
       \caption{Results for $w$ in $0.8, \dots, 1.5$, inclusive.}
    \end{subfigure}
    \caption{Full simulations for the collider topology $X_1 \overset{w}{\rightarrow} X_3 \overset{w}{\leftarrow} X_2$, where $\Omega_0 = \begin{pmatrix}
        \gamma & 0 & 0 \\
        0 & 1 & 0 \\
        0 & 0 & 1 \\
    \end{pmatrix}$. We iterate across all $(w, \gamma) \in \{0.1, 0.2, \dots ,1.5\} \times \{1, 2, 4, 10, 100\}$. Methods are compared using SHD (lower is better). (a) shows results for $w$ in $0.1, \dots, 0.8$, inclusive, while (b) shows results for $w$ in $0.8, \dots, 1.5$, inclusive. $w = 0.8$ is repeated across subpanels.}
    \label{suppfig:full three node collider sims}
\end{figure*}

\subsubsection{Fork}
In the collider, we have the true structure $W_0 = \begin{pmatrix}
    0 & w & w \\
    0 & 0 & 0 \\
    0 & 0 & 0
\end{pmatrix}$. We simulate under the system of SEMs
\begin{equation}
    \begin{aligned}
        \interv{X}{0} &= \interv{X}{0}W_0 + \interv{\epsilon}{0} \\
        \interv{X}{1} &= \interv{X}{1}\interv{W_0}{1} + \interv{\epsilon}{1} \\
        \interv{X}{2} &= \interv{X}{2}\interv{W_0}{2} + \interv{\epsilon}{2}, \\
        \interv{X}{3} &= \interv{X}{3}\interv{W_0}{3} + \interv{\epsilon}{3},
        \nonumber
    \end{aligned}
\end{equation}
such that in the observational system
\begin{equation}
    \begin{aligned}
        \interv{X_1}{0} &= \interv{\epsilon_1}{1} && \interv{\epsilon_1}{0} \sim \Nor\left(0, \gamma\right) \\
        \interv{X_2}{0} &= w\interv{X_1}{0} + \interv{\epsilon_2}{0} && \interv{\epsilon_2}{0} \sim \Nor\left(0, 1\right), \\
        \interv{X_3}{0} &= w\interv{X_1}{0} + \interv{\epsilon_3}{0} && \interv{\epsilon_3}{0} \sim \Nor\left(0, 1\right),
        \nonumber
    \end{aligned}
\end{equation}
under intervention on node 1 we have
\begin{equation}
    \begin{aligned}
        \interv{X_1}{1} &= \interv{\epsilon_1}{1} && \interv{\epsilon_1}{1} \sim \Nor\left(0, \frac{\gamma}{\alpha^2} \right) \\
        \interv{X_2}{1} &= w\interv{X_1}{0} + \interv{\epsilon_2}{1} && \interv{\epsilon_2}{1} \sim \Nor\left(0, 1\right), \\
        \interv{X_3}{1} &= w\interv{X_1}{1} + \interv{\epsilon_3}{1} && \interv{\epsilon_3}{1} \sim \Nor\left(0, 1\right), \\
        \nonumber
    \end{aligned}
\end{equation}
under intervention on node 2
\begin{equation}
    \begin{aligned}
        \interv{X_1}{2} &= \interv{\epsilon_1}{2} && \interv{\epsilon_1}{2} \sim \Nor\left(0, \gamma\right) \\
        \interv{X_2}{2} &= \interv{\epsilon_2}{2} && \interv{\epsilon_2}{2} \sim \Nor\left(0, \frac{1}{\alpha^2}\right), \\
        \interv{X_3}{2} &= w\interv{X_1}{2} + \interv{\epsilon_3}{2} && \interv{\epsilon_3}{2} \sim \Nor\left(0, 1\right), \\
        \nonumber
    \end{aligned}
\end{equation}
and under intervention on node 3
\begin{equation}
    \begin{aligned}
        \interv{X_1}{3} &= \interv{\epsilon_1}{3} && \interv{\epsilon_1}{3} \sim \Nor\left(0, \gamma\right) \\
        \interv{X_2}{3} &= w\interv{X_1}{3} + \interv{\epsilon_2}{3} && \interv{\epsilon_2}{3} \sim \Nor\left(0, 1 \right), \\
        \interv{X_3}{1} &= \interv{\epsilon_3}{3} && \interv{\epsilon_3}{3} \sim \Nor\left(0, \frac{1}{\alpha^2} \right). \\
        \nonumber
    \end{aligned}
\end{equation}

\begin{figure*}
    \centering
    \begin{subfigure}[b]{0.9\textwidth}
       \includegraphics[width=1\linewidth]{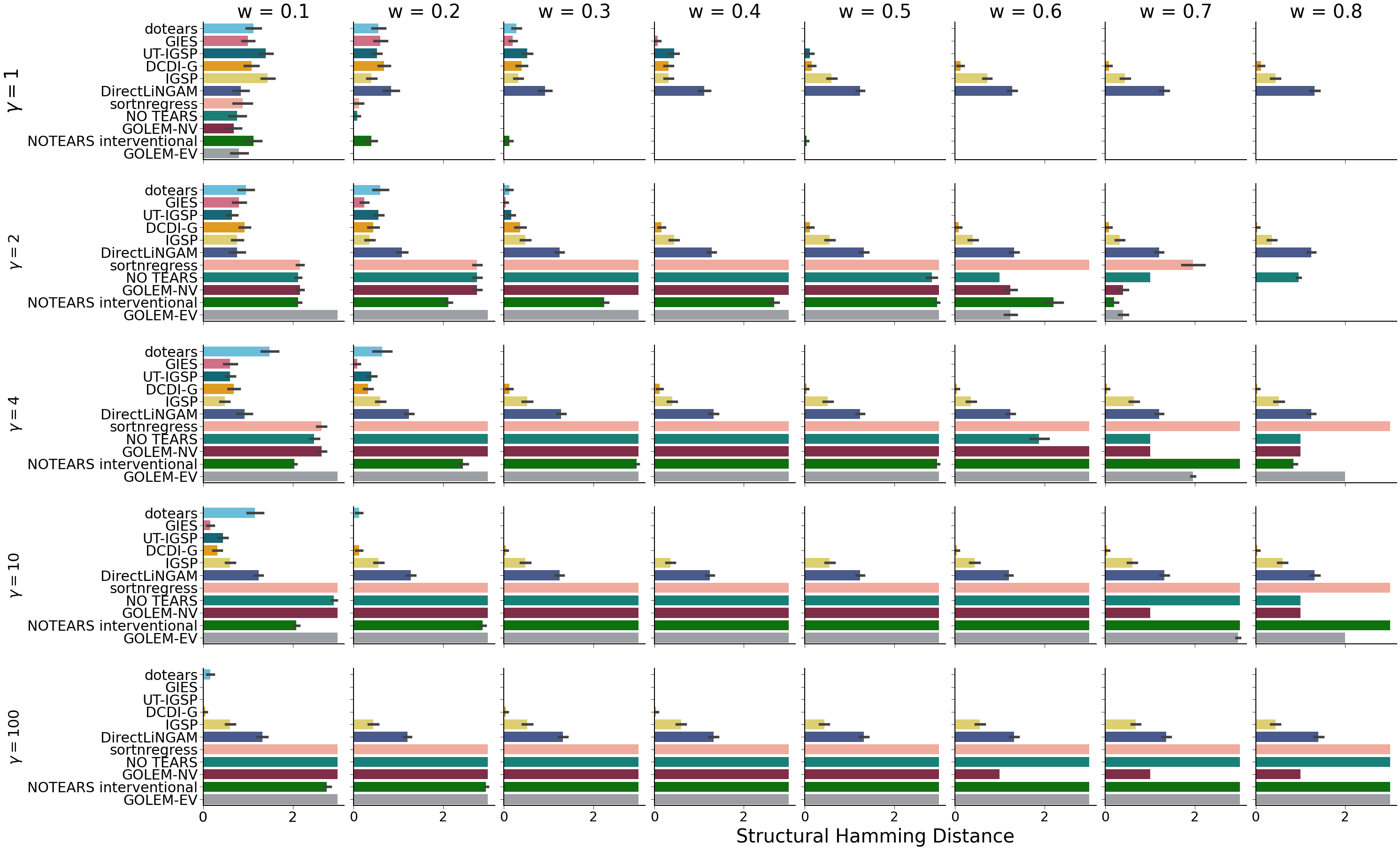}
       \caption{Results for $w$ in $0.1, \dots, 0.8$, inclusive.}
    \end{subfigure}
    
    \begin{subfigure}[b]{0.9\textwidth}
       \includegraphics[width=1\linewidth]{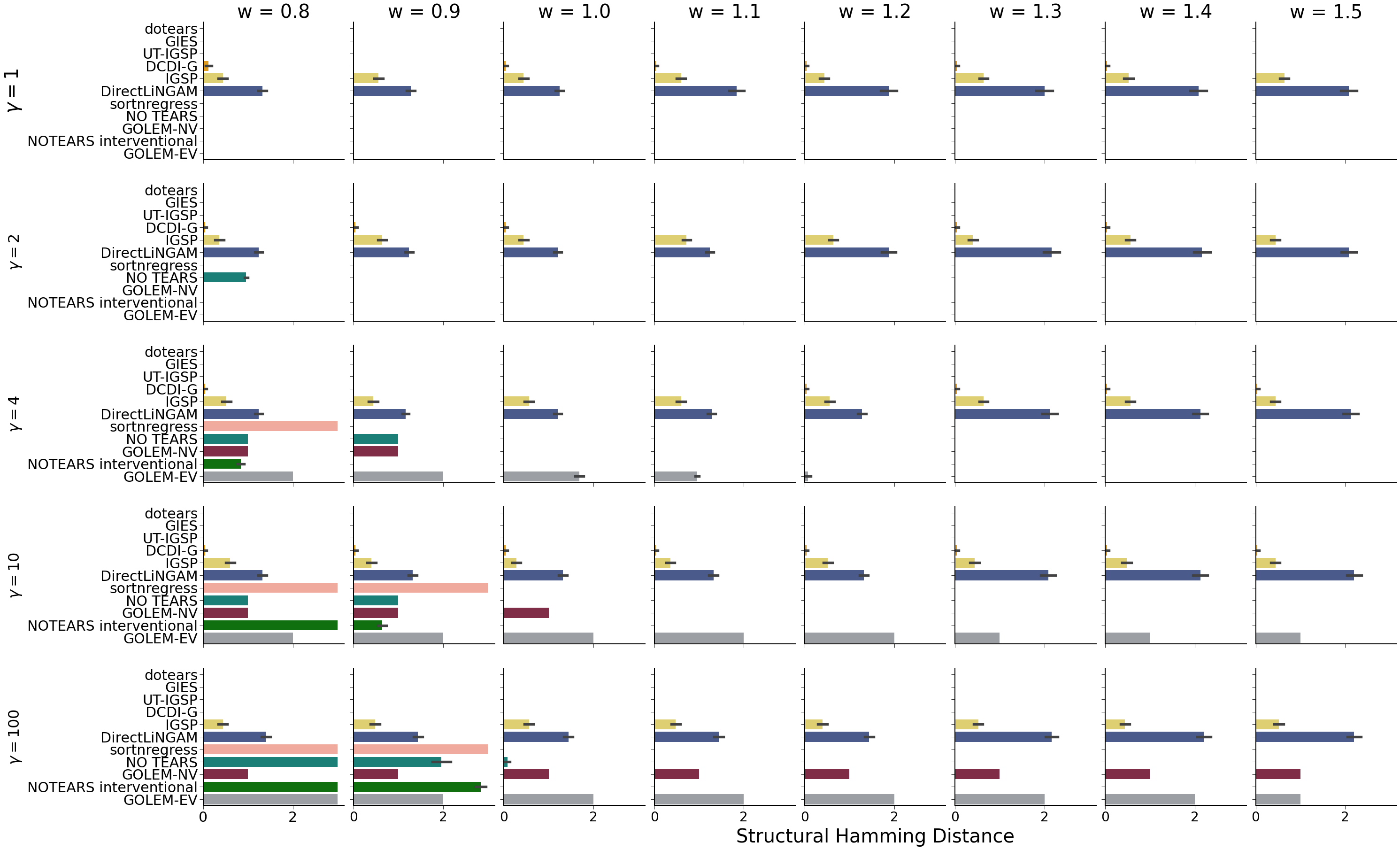}
       \caption{Results for $w$ in $0.8, \dots, 1.5$, inclusive.}
    \end{subfigure}
    \caption{Full simulations for the fork topology $X_2 \overset{w}{\leftarrow} X_1 \overset{w}{\rightarrow} X_3$, where $\Omega_0 = \begin{pmatrix}
        \gamma & 0 & 0 \\
        0 & 1 & 0 \\
        0 & 0 & 1 \\
    \end{pmatrix}$. We iterate across all $(w, \gamma) \in \{0.1, 0.2, \dots ,1.5\} \times \{1, 2, 4, 10, 100\}$. Methods are compared using SHD (lower is better). (a) shows results for $w$ in $0.1, \dots, 0.8$, inclusive, while (b) shows results for $w$ in $0.8, \dots, 1.5$, inclusive. $w = 0.8$ is repeated across subpanels.}
    \label{suppfig:full three node fork sims}
\end{figure*}

\subsection{Large random graph simulations}
\label{suppsection:large random graph simulations}

\subsubsection{Data generation}
\label{suppsection:large p data generation}
For evaluation, data is generated for DAGs with $p = 40$ nodes. Structures are drawn from both \erdosrenyi{}{} (ER) and Scale-Free (SF) DAGs \cite{erdHos1960evolution, barabasi1999emergence}. We parameterize ER graphs as ER-$r$ and SF graphs as SF-$z$, where $r\in [0, 1]$ represents the probability of assignment of each individual edge, whereas $z\in \mathbb{Z}$ is the integer number of edges assigned per node.

We simulate under two parameterizations $\{\text{Low Density, High Density}\}$ of the edge densities $(r, z)$. In the ``Low Density'' parameterization, we give $(r, z) = (0.1, 2)$. To evaluate performance on higher density topologies, we also give ``High Density'' parameterizations, where $(r, z) = (0.2, 4)$. 

Given an edge density scenario, we simulate under two parameterizations of the edge weights. In the ``Strong Effects'' parameterization, $w \sim \Unif\left([-2.0, -0.5] \cup [0.5, 2.0]\right)$. We also give the ``Weak Effects'' parameterization, which edge weights are drawn from $w \sim \Unif\left([-1.0, 0.3] \cup [0.3, 1.0]\right)$ such that $|w| \leq 1$ is guaranteed. Here, there is no guarantee any two nodes will be varsortable (although they may be in practice, as a function of $\Omega_0$) \cite{reisach2021beware, kaiser2022unsuitability}. Table \ref{tab:simulation parameters} summarizes all four possible simulation parameterizations.

For each node $i$, we draw $\sigma_i \sim \Unif\left([0.5, 2.0]\right)$, and draw $n_{0}$ observations from $\interv{\boldepsilon_i}{0} \sim \Nor\left(0, \sigma_i^2\right)$. For each DAG we generate an instance of observational data, where $n_{0} = (p + 1) * n_k = 4100$, and an instance of interventional data, where $n_{k} = 100$ for all $k=0\dots p$ to match sample size. We set the distribution of $\interv{\epsilon_i}{k}$ according to Assumptions \ref{assumption:global a assumption} and \ref{assumption:variance without intervention} for $\alpha=4$.

\begin{table*}[ht!]
    \centering
    \begin{tabular}{r@{}l||r@{}l|c}
        \toprule
        \multicolumn{2}{c||}{\textbf{Simulation Type}} & \multicolumn{2}{c|}{\textbf{Topology}} & $w$\tabularnewline
        \hline
        Low Density&, Strong Effects & ER-0.1\,\,&OR SF-2 & $\Unif\left([-2.0, -0.5] \cup [0.5, 2.0]\right)$  \tabularnewline  
        Low Density&, Weak Effects & ER-0.1\,\,&OR SF-2 & $\Unif\left([-1.0, -0.3] \cup [0.3, 1.0]\right)$ \tabularnewline 
        High Density&, Strong Effects & ER-0.2\,\,&OR SF-4 & $\Unif\left([-2.0, -0.5] \cup [0.5, 2.0]\right)$ \tabularnewline 
        High Density&, Weak Effects & ER-0.2\,\,&OR SF-4 & $\Unif\left([-1.0, -0.3] \cup [0.3, 1.0]\right)$  \tabularnewline
    \end{tabular}
    \caption{For each simulation type for $p=40$ DAG simulations, we provide the parameterizations of the \erdosrenyi{} and Scale-Free topologies and the distribution on the drawn edge weights.}
    \label{tab:simulation parameters}
\end{table*}

For \dotears{}, NO TEARS, sortnregress, GOLEM-EV, GOLEM-NV, and DCDI-G 5-fold cross-validation was performed to select the regularization parameters. For each drawn DAG, a separate data instance of interventional and observational data was re-drawn from the same distribution specifically for cross-validation. After choosing a $\lambda$ (or for GOLEM-EV and GOLEM-NV, the set $(\lambda_1, \lambda_2)$) from the data for cross-validation, the methods were evaluated on the original simulated data. For \dotears{}, NO TEARS, sortnregress, and DCDI-G, 5-fold cross-validation was performed across the grid $\lambda \in \{.001, .01, .1, 1, 10, 100\}$. For GOLEM-EV and GOLEM-NV, 5-fold cross-validation was performed across the grid $\lambda_1 \times \lambda_2 \in \{.001, .01, .1, 1, 10, 100\} \times \{.05, .5, 5, 50\}$. We threshold at 0.2 for ``Weak Effects'' simulations, and at 0.3 for ``Strong Effects'' simulations. Methods are generally robust to thresholding choice. Results for precision and recall on thresholded edges are shown in Figures \ref{suppfig:large simulation precision} and \ref{suppfig:large simulation recall}, respectively.

\subsubsection{Benchmarking}
\label{suppsection:large p simulation benchmarking}
In Figure \ref{fig:benchmarking}, we benchmark wallclock time and memory usage for all methods in $p=40$ simulations \cite{koster2012snakemake} on the UCLA hoffman2 cluster. All continuous optimization methods (\dotears{}, NO TEARS, GOLEM-EV, and GOLEM-NV) have significantly higher average runtimes than other methods, which is partially explained by cross-validation procedures. \dotears{} has relatively light memory usage, outperformed only by NO TEARS and sortnregress.

DCDI-G has enormous memory requirements, especially relative to other methods. We note that for DCDI-G, the reported benchmarks do not include memory usage or runtime from cross-validation folds. Instead, we report only the ``main'' run of DCDI-G, and thus DCDI-G is not denoted with the * indicating cross-validation.  

\begin{figure*}[ht!]
    \centering
    \begin{subfigure}[t]{0.45\textwidth}
        \centering
        \includegraphics[width=\textwidth]{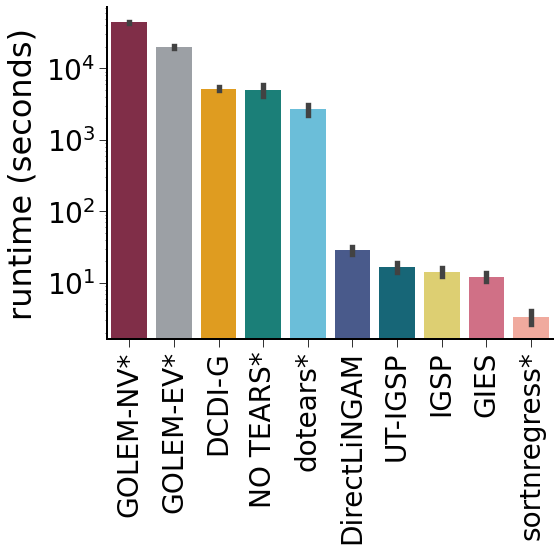}
        \caption{Wallclock time across all $p=40$ simulations in seconds. Y-axis is in log-scale.}    
        \label{fig:runtime}
    \end{subfigure}
    \hfill
    \begin{subfigure}[t]{0.45\textwidth}  
        \centering 
        \includegraphics[width=\textwidth]{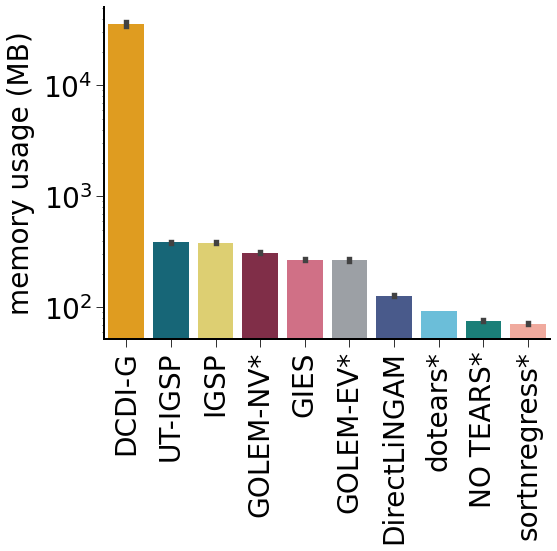}
        \caption{Memory usage across all $p=40$ simulations, defined as the maximum unique set size used by the process, or max\_uss (see psutil documentation).}        
        \label{fig:memory}
    \end{subfigure}
    \caption{Comparison of mean runtime (a) and mean memory usage (b) between methods across all $p=40$ simulations. Error bars are standard errors.} 
    \label{fig:benchmarking}
\end{figure*}
    
\subsubsection{Thresholding on large simulation results}
\label{suppsection:thresholding}
Edge thresholding for weighted adjacency matrices is necessary for accurate evaluation using SHD, but the choice of threshold can feel arbitrary. We find that methods are generally robust to thresholding choice, following similar results from \cite{reisach2021beware}.

Figure \ref{suppfig:thresholding all results} examines the effect on thresholding of small weights in $W$ in large random DAG simulations, for methods that infer weighted adjacency matrices (\dotears{}, NO TEARS, sortnregress, DirectLiNGAM, GOLEM-NV, and GOLEM-EV) \cite{zheng2018dags, reisach2021beware, ng2020role, shimizu2011directlingam}. For simplicity, we summarize simulations in terms of the generative edge distribution. Simulations with ``Weak Effects'', $w\sim \Unif\left([-1.0, -0.3] \cup [0.3, 1.0]\right)$, are shown in Figure \ref{suppfig:thresholding small edges}; simulations with ``Strong Effects'', $w\sim \Unif\left([-2.0, -0.5] \cup [0.5, 2.0]\right)$ are shown in Figure \ref{suppfig:thresholding large edges}. We compare the SHD between the ground truth adjacency matrix and the inferred adjacency matrix for each method at all thresholds between 0 and the absolute lower bound of the true edge weight distribution (0.3 for ``Weak Effects'', 0.5 for ``Strong Effects''). Any edge whose magnitude is below the chosen threshold is set to 0 for SHD evaluation.

\begin{figure*}[ht!]
    \centering
    \begin{subfigure}[t]{0.45\textwidth}
        \centering
        \includegraphics[width=\textwidth]{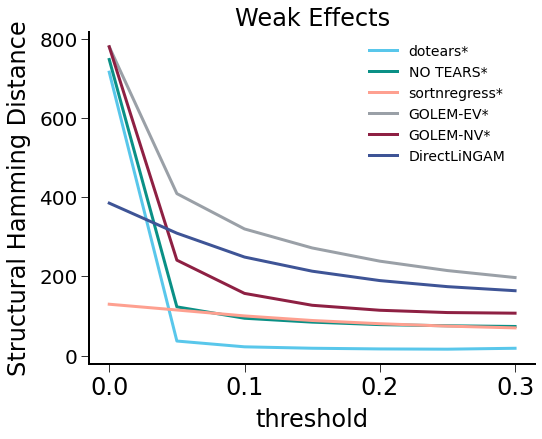}
        \caption{Structural Hamming Distance (lower is better) as a function of $\hat{w}$ threshold for ``Weak Effects'' scenarios, where $w\sim \Unif\left([-1.0, -0.3] \cup [0.3, 1.0]\right)$.}    
        \label{suppfig:thresholding small edges}
    \end{subfigure}
    \hfill
    \begin{subfigure}[t]{0.45\textwidth}  
        \centering 
        \includegraphics[width=\textwidth]{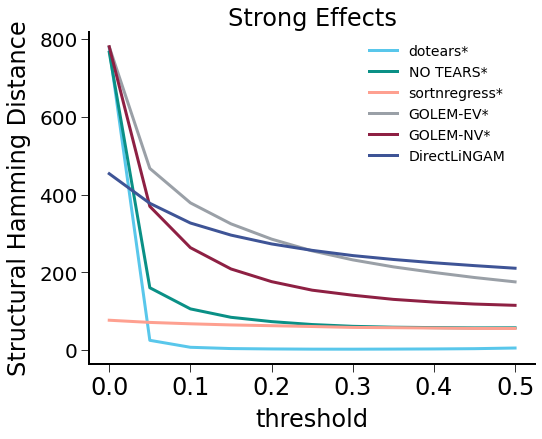}
        \caption{Structural Hamming Distance (lower is better) as a function of $\hat{w}$ threshold for ``Strong Effects'' scenarios, where $w\sim \Unif\left([-2.0, -0.5] \cup [0.5, 2.0]\right)$.} 
        \label{suppfig:thresholding large edges}
    \end{subfigure}
    \caption{Structural Hamming Distance (lower is better) as a function of thresholding choice for $p=40$ simulations detailed in \ref{suppsection:large p data generation}. Parameterizations with ``Weak Effects'', $w\sim \Unif\left([-1.0, -0.3] \cup [0.3, 1.0]\right)$, are consolidated in (a), while parameterizations with ``Strong Effects'', $w\sim \Unif\left([-2.0, -0.5] \cup [0.5, 2.0]\right)$, are consolidated in (b).} 
    \label{suppfig:thresholding all results}
\end{figure*}

Figure \ref{suppfig:thresholding all results} shows that thresholding is necessary for evaluation of SHD between weighted adjacency matrices, but also that methods are generally robust to thresholding choice. Without thresholding (equivalently, a threshold of 0), SHD results are inflated, but recover even at low thresholds. 

\begin{figure*}[ht!]
    \centering
    \includegraphics[width=\textwidth]{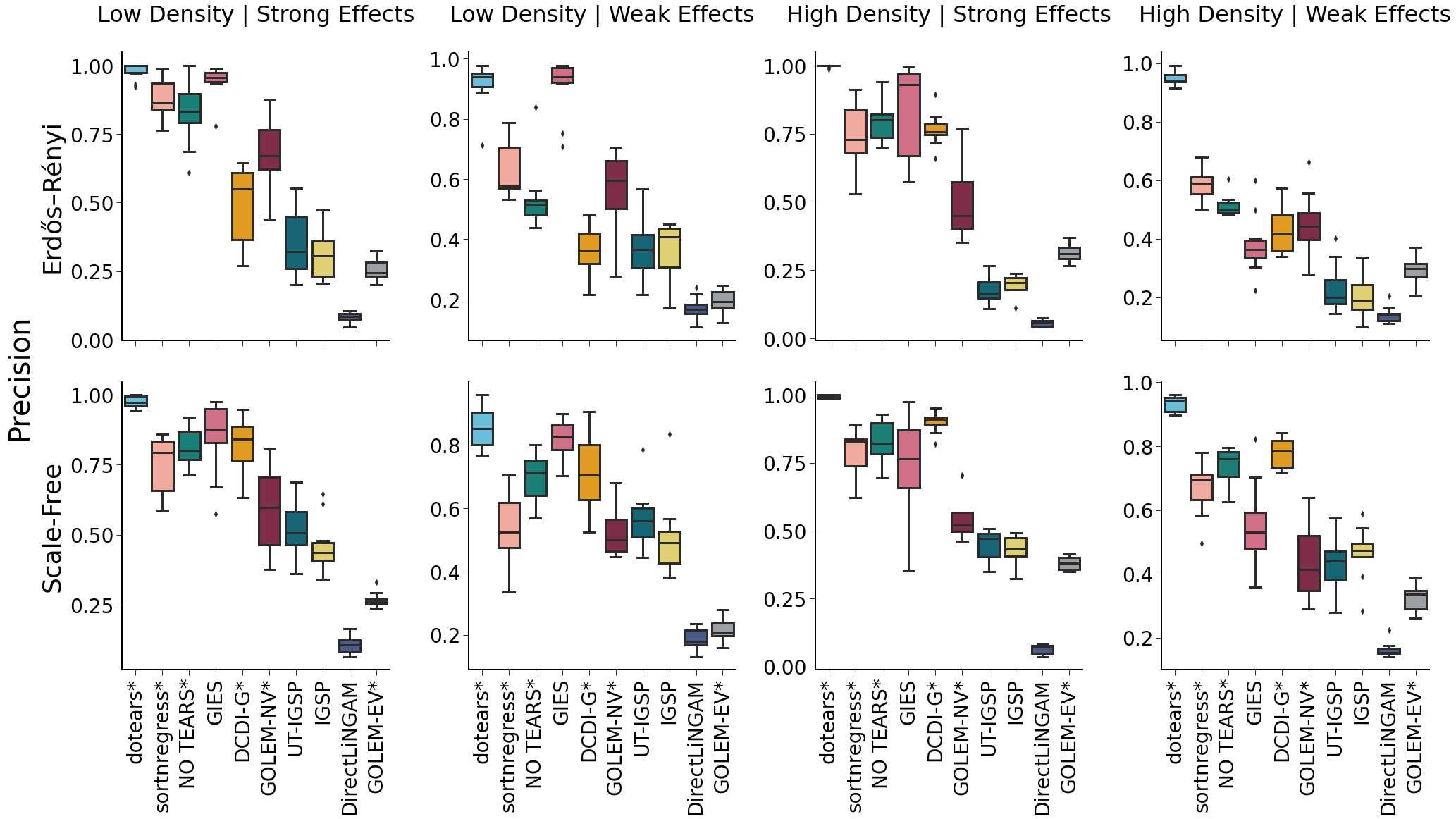}
    \caption{Methodological comparison of structure learning methods on large random graphs ($p=40$) using precision (higher is better). Each row represents simulations drawn with a topology in \{\erdosrenyi{}, Scale Free\}, respectively, and each column represents a different parameterization of edge density and edge weights ordered in terms of increasing difficulty. For details, see Table \ref{tab:simulation parameters}. 10 simulations were drawn for each set of parameters with sample size $(p+1) * 100 = 4100$, and * indicates methods performed with 5-fold cross-validation on a separate draw from the same distribution. Y-axis scales are not shared between figures. Methods are sorted, left to right, by strongest average performance across all simulations. GIES, IGSP, and UT-IGSP infer binary edges and are therefore omitted for fairness.}
    \label{suppfig:large simulation precision}
\end{figure*}

\begin{figure*}[ht!]
    \centering
    \includegraphics[width=\textwidth]{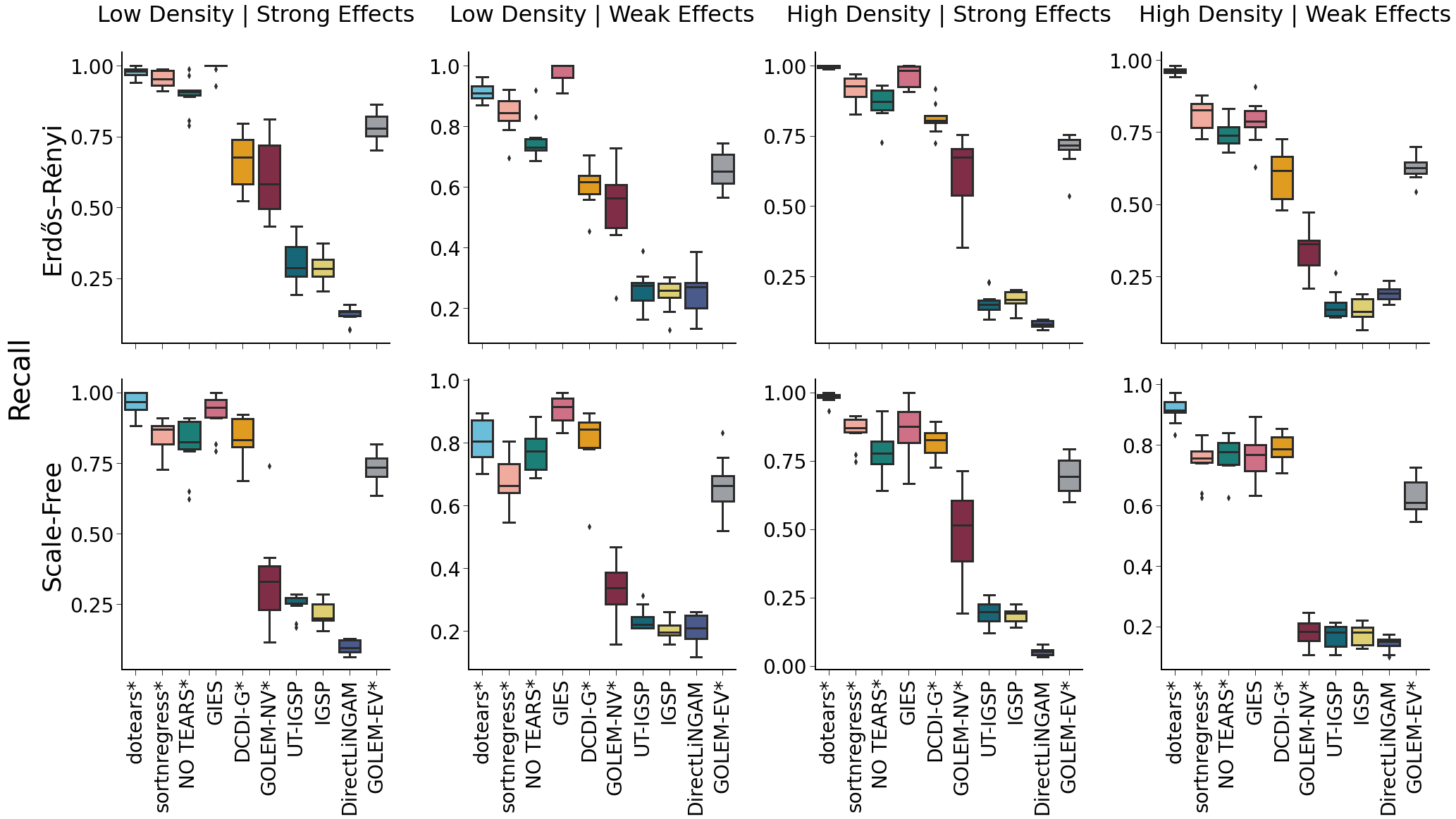}
    \caption{Methodological comparison of structure learning methods on large random graphs ($p=40$) using recall (higher is better). Each row represents simulations drawn with a topology in \{\erdosrenyi{}, Scale Free\}, respectively, and each column represents a different parameterization of edge density and edge weights ordered in terms of increasing difficulty. For details, see Table \ref{tab:simulation parameters}. 10 simulations were drawn for each set of parameters with sample size $(p+1) * 100 = 4100$, and * indicates methods performed with 5-fold cross-validation on a separate draw from the same distribution. Y-axis scales are not shared between figures. Methods are sorted, left to right, by strongest average performance across all simulations. GIES, IGSP, and UT-IGSP infer binary edges and are therefore omitted for fairness.}
    \label{suppfig:large simulation recall}
\end{figure*}

\subsubsection{Edge weight estimation for large random simulations}
\label{suppsection:shd distribution significance}
Accurate estimation of edge weights is important for understanding structure. Figure \ref{fig:large simulations SHD results} gives results on structural recovery through SHD, but does not inform edge weight recovery. To measure edge weight recovery we use $\ell_1$ distance, defined as the vector $\ell_1$ norm between the flattened true weighted adjacency matrix and the flattened inferred weighted adjacency matrix. $\ell_1$ distance gives information on both structure recovery and edge weight estimation simultaneously. In Figure \ref{suppfig:large simulation L1 results}, we benchmark the recovery of edge weights for methods that return a weighted adjacency matrix (\dotears{}, sortnregress, NO TEARS, GOLEM-EV, GOLEM-NV, DirectLiNGAM \cite{zheng2018dags, reisach2021beware, ng2020role, shimizu2011directlingam}) for simulations given in \ref{suppsection:large p data generation}. For fairness, we exclude methods that only return a binary adjacency matrix. Methods are thresholded in the same manner as in Figure \ref{fig:large simulations SHD results}.

\begin{figure*}[ht!]
    \centering
    \includegraphics[width=\textwidth]{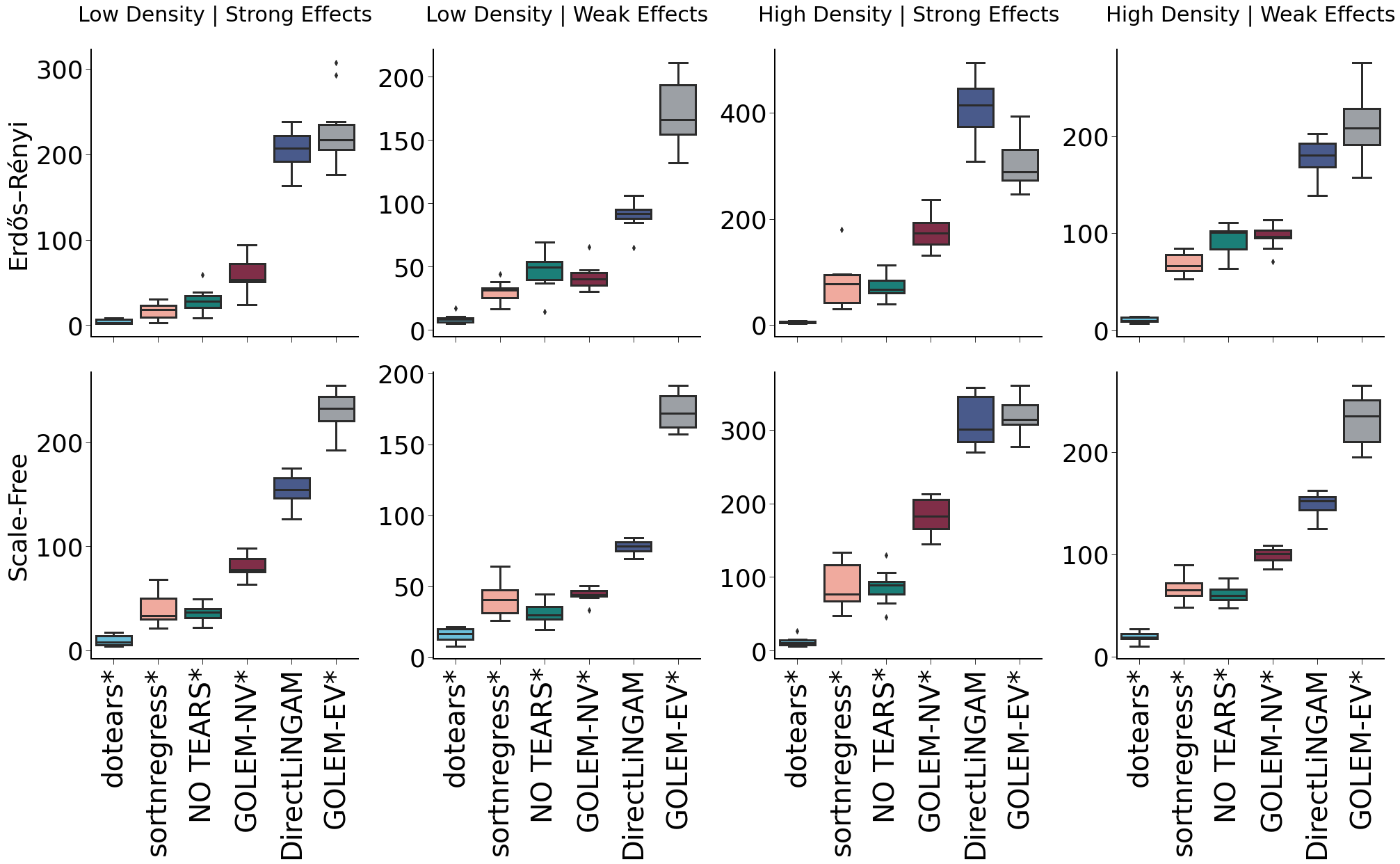}
    \caption{Methodological comparison of structure learning methods on large random graphs ($p=40$) using $\ell_1$ distance (lower is better). Each row represents simulations drawn with a topology in \{\erdosrenyi{}, Scale Free\}, respectively, and each column represents a different parameterization of edge density and edge weights ordered in terms of increasing difficulty. For details, see Table \ref{tab:simulation parameters}. 10 simulations were drawn for each set of parameters with sample size $(p+1) * 100 = 4100$, and * indicates methods performed with 5-fold cross-validation on a separate draw from the same distribution. Y-axis scales are not shared between figures. Methods are sorted, left to right, by strongest average performance across all simulations. GIES, IGSP, and UT-IGSP infer binary edges and are therefore omitted for fairness.}
    \label{suppfig:large simulation L1 results}
\end{figure*}

\dotears{} outperforms all other methods in terms of effect size recovery. In addition, the relative ordering of the methods stays consistent with Figure \ref{fig:large simulations SHD results}. 

\subsubsection{Statistical significance of SHD distribution difference}
For each method, and across all simulations in Section \ref{suppsection:large p data generation}, we compare the SHD distributions, marginalized across all simulation parameterizations, against that of \dotears{}. For each method, we perform a one-sided Mann-Whitney U test to test difference between the method's SHD distribution and the SHD distribution of \dotears{}. The resulting p-values are reported in Table \ref{supptab:mann whitney results}, and are significant for all methods.

\begin{table}[ht!]
    \centering
    \begin{tabular}{c|c}
        \textbf{Method} & \textbf{p-value} \\
        \hline
        GIES & 2.08e-7 \\
        sortnregress & 1.1e-21 \\
        NO TEARS & 9.27e-23 \\
        DCDI-G & 1.04e-24 \\
        GOLEM-NV & 8.45e-28\\
        GOLEM-EV & 4.47e-28 \\
        DirectLiNGAM & 4.47e-28 \\
        UT-IGSP & 4.467e-28 \\
        IGSP & 4.47e-28 \\
    \end{tabular}
    \caption{p-value from asymptotic one-sided Mann-Whitney U test comparing distribution of indicated method SHD against distribution of \dotears{} SHD.}
    \label{supptab:mann whitney results}
\end{table}

\subsubsection{Incorporation of interventional loss}
\label{section:interventional vs observational only}
We note that inferring $W$ from strictly observational data, given $\hat{\Omega}_0$ from interventional data, has convenient theoretical properties but ignores the majority of our data. Figure \ref{suppfig:interventional vs observational only} compares the performance, in Structural Hamming Distance, of \dotears{} using both observational and interventional data $\left(\mathcal{L}_{\hat{\Omega}_0} \left(\interv{W}{k}, \interv{\boldX}{k}\right) \text{ for } k=0\dots p\right)$ with \dotears{} using only observational data $\left(\mathcal{L}_{\hat{\Omega}_0} \left(\interv{W}{k}, \interv{\boldX}{k}\right) \text{ for } k=0\right)$. We use the simulations drawn in Section \ref{suppsection:large p data generation}, aggregated across all scenarios in Table \ref{tab:simulation parameters}. Note that $\hat{\Omega}_0$ is estimated identically for both.

\begin{figure*}[ht!]
    \centering
    \includegraphics[width=0.4\textwidth]{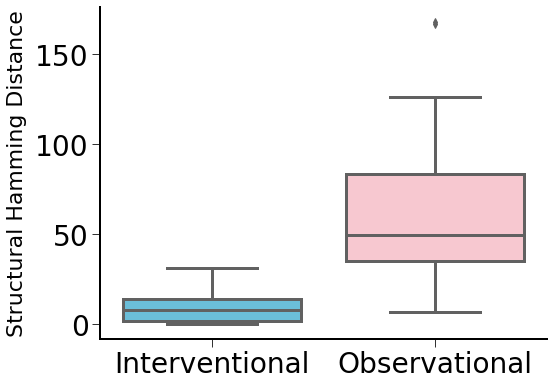}
    \caption{Comparison of \dotears{} performance when using only $\mathcal{L}_{\hat{\Omega}_0}\left(\interv{W}{k}, \interv{\boldX}{k}\right)$ for observational data ($k=0$), right, and when including interventional data ($k=0\dots p$), left. Methods are run on simulation data generated in Section \ref{suppsection:large p data generation}, averaged across parameterizations, and compared using Structural Hamming Distance.}
    \label{suppfig:interventional vs observational only}
\end{figure*}

As expected, restricting \dotears{}{} to inference only using the observational data (Figure \ref{suppfig:interventional vs observational only}, right) drastically decreases performance when compared to inference using both observational and interventional data (Figure \ref{suppfig:interventional vs observational only}, left). This motivates including $k=1\dots p$ into the loss and proving consistency for all $k$.

\subsection{Sensitivity Analysis}
\label{section:sensitivity}

\subsubsection{Sensitivity under linear SEM}
\label{section:linear}
\begin{figure*}[h]
    \centering
    \includegraphics[width=\textwidth]{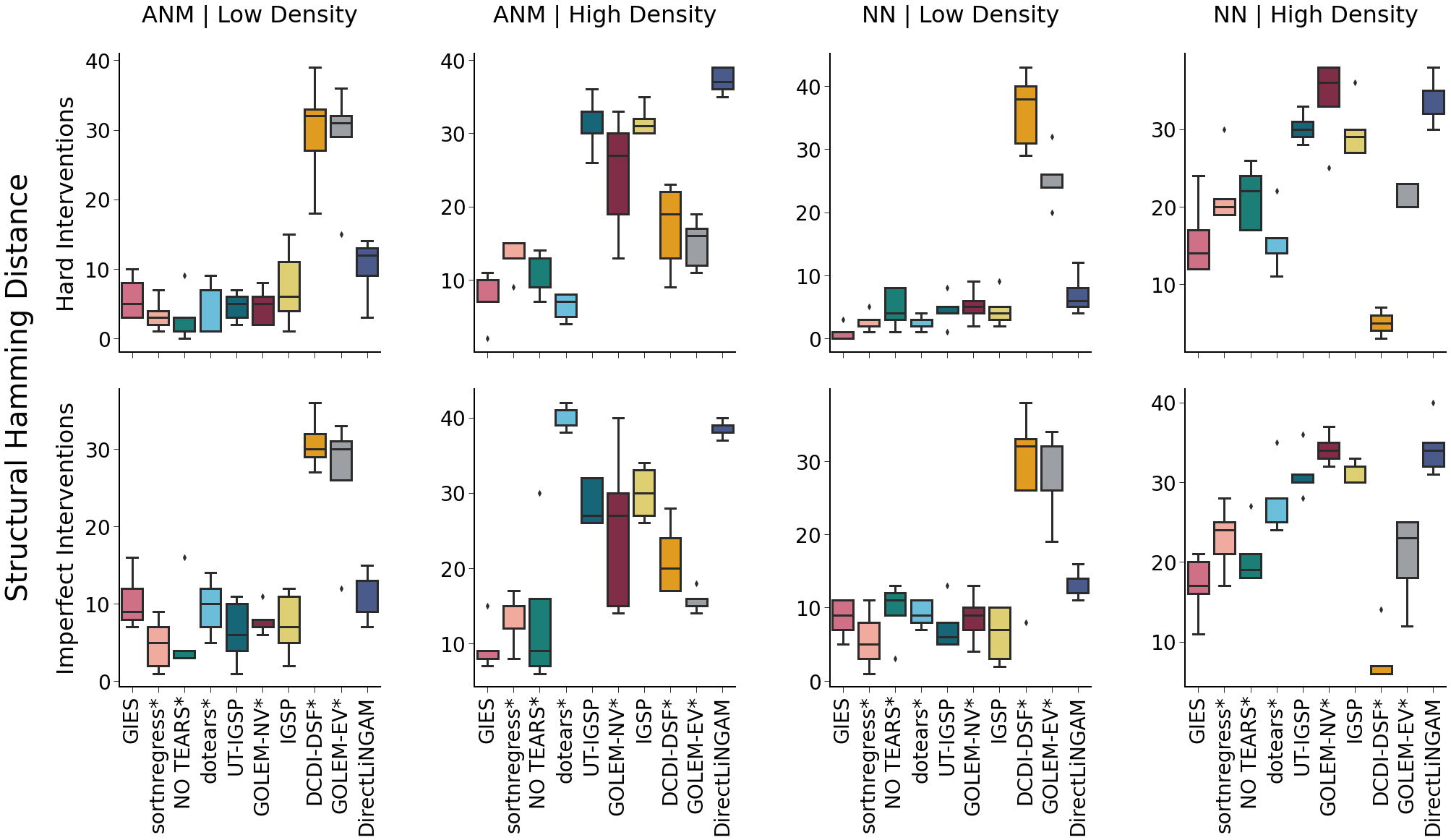}
    \caption{Method performance under nonlinear simulations in $p=10$ random graphs, in Structural Hamming Distance (lower is better). Data are generated by a neural network with Additive Noise Model (ANM) or a neural network with non-additive noise (NN) model. See \cite{brouillard2020differentiable} and Supplementary Material \ref{suppsection:nonlinear sensitivity} for details. Scenarios are sorted by increasing difficulty. Methods are sorted by average performance across scenarios.}
    \label{fig:nonlinear}
\end{figure*}

We performed sensitivity analyses under a linear SEM on \erdosrenyi{} graphs ($p=20$) with two edge density parameterizations crossed with two edge weight parameterizations (we note that the specific density and edge weight parameterizations differ from Section~\ref{section:large random graph simulations}). We report results averaged over $10$ replicates with a sample size of $n_k = 100$. We report SHD in the main text. For precision/recall for edge recovery, parameterization-specific results, details on data generation, and modelling specifics, see Supplementary Material \ref{suppsection:linear sensitivity}.

We assessed the sensitivity of our results to violation of Assumption \ref{assumption:hard intervention assumption} by allowing for $\pi$ percent of the observational parental variance into the marginal variance of the target . Here, $\pi = 0$ is a hard intervention. The accuracy of \dotears{} decreases as $\pi$ increases, or equivalently as the interventional signal decreases (Figure \ref{fig:hard interventions}). This effect is dependent on the magnitude of the edge weights and the density of the true DAG: in ``High Density'' and ``Strong Effects'' scenarios, the  decrease in accuracy is noticeable due to higher incoming parental variance (Supplementary Figure \ref{suppfig:hard interventions}). Despite this decrease in accuracy, \dotears{} remains the second best performing method across parameterizations, even when excluding the $\pi = 0$ scenario.
 
To assess sensitivity to violation of Assumption \ref{assumption:global a assumption}, we randomly perturb each $\alpha$ per node to allow non-uniform interventional effects on error variance across targets. \dotears{} is robust to violations of Assumption \ref{assumption:global a assumption} in these simulations (Figure~\ref{fig:alpha perturbation}). 

Finally, our model assumes that the distribution of the target $\interv{X_k}{k}$ depends on that of $\interv{\epsilon_k}{0}$. In Figure \ref{fig:fixed interventions}, we explore interventions with a fixed distribution, $\interv{X_k}{k} \iid \Nor(2, 1)$ following the model used in \cite{brouillard2020differentiable} and similar to that considered by \cite{hauser2012characterization}. In our model, this violates Assumption \ref{assumption:global a assumption}.

In Figure \ref{fig:fixed interventions}, \dotears{} is outperformed on average only by GIES in SHD, and outperforms DCDI under their simulation model. Supplementary Figure \ref{suppfig:linear sensitivity} shows that simulations with weak effects, where $0.3 \leq |w| \leq 0.5$ drive this performance dip; in simulations with strong effects, where $0.8 \leq |w| \leq 1.0$, \dotears{} again vastly outperforms other methods, including GIES. Sufficient signal-to-noise ratio can thus overcome severe misspecification of $\Omega_0$ under hard interventions.

\subsubsection{Sensitivity under nonlinear SEM}
\label{section:nonlinear}
To assess sensitivity to a nonlinear SEM, we generate nonlinear data under models that were used to evaluate DCDI \cite{brouillard2020differentiable}. We simulated $10$ \erdosrenyi{} DAGs with $p = 10$ and $n=10000$ total samples. For the non-linear models, we considered, in turn, a neural network additive noise model (ANM) \cite{buhlmann2014cam} and a neural network with non-additive noise model (NN) \cite{chickering2002optimal}. We test both hard interventions and ``imperfect'' interventions. Imperfect interventions add a random vector drawn from $\Nor(0, 1)$ to the last layer of the neural network. See \cite{brouillard2020differentiable} and Supplementary Material \ref{suppsection:nonlinear sensitivity} for details. Figure \ref{fig:nonlinear} shows results in SHD.

With hard interventions, \dotears{} maintains performance in DAG estimation across increasing levels of non-linearity, and on average outperforms all methods. \dotears{} performs worse in the ``imperfect'' intervention setting. In particular, \dotears{} has average performance in ``Low Density'' scenarios, but performs much worse in ``High Density'' simulations. This suggests that the imperfect intervention model, rather than non-linearity, drives the relatively worse performance of \dotears{}. As in Figure \ref{fig:hard interventions}, dense DAGs exacerbate difficulties with imperfect interventions. However, on average \dotears{} outperforms the neural network DCDI even under ``imperfect'' interventions.

\begin{figure*}[h]
     \centering
     \begin{subfigure}[t]{0.32\textwidth}
         \centering
         \includegraphics[width=\textwidth]{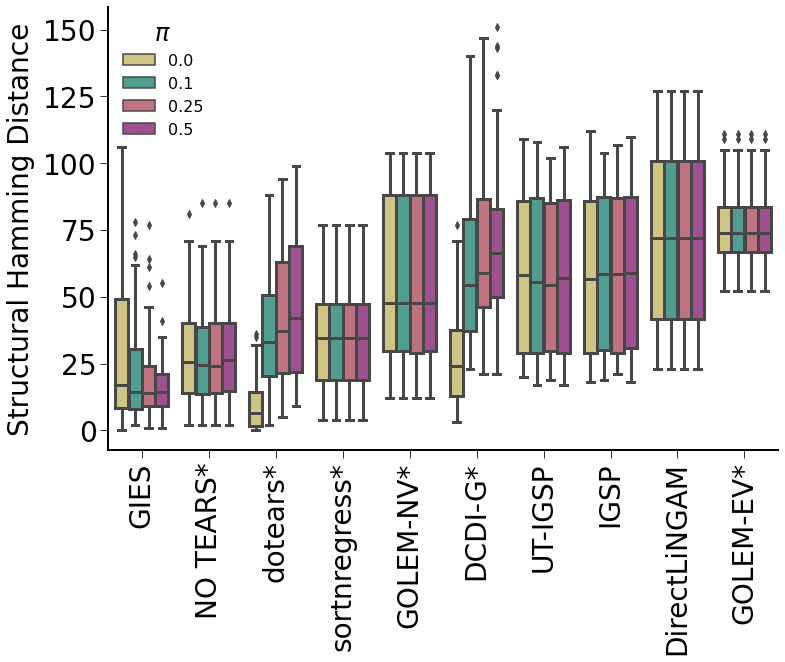}
         \caption{Performance under soft interventions, where $\pi$ is the proportion of allowed parental variance to the target.}
         \label{fig:hard interventions}
     \end{subfigure}
     \hfill
     \begin{subfigure}[t]{0.32\textwidth}
         \centering
         \includegraphics[width=\textwidth]{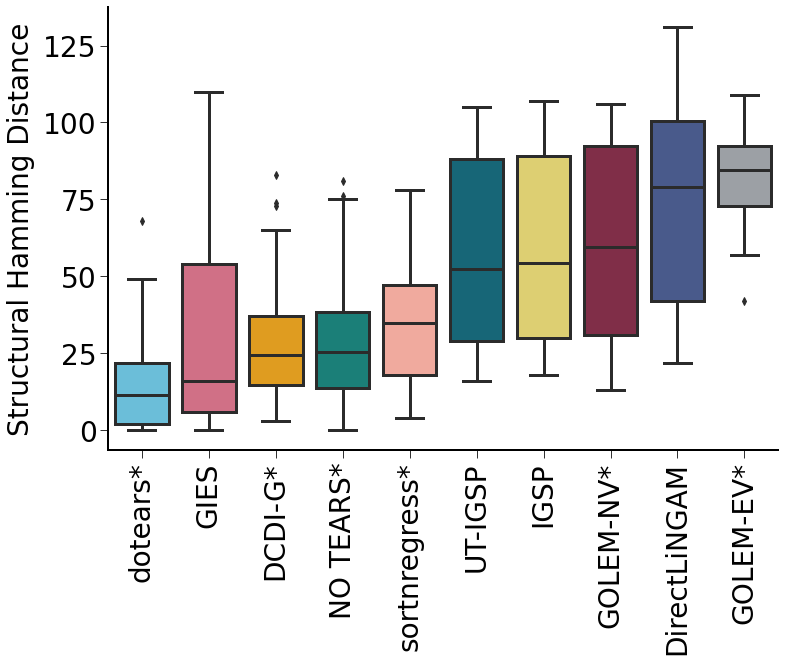}
         \caption{Performance under random per-node effects by interventions on target error variance.}
         \label{fig:alpha perturbation}
     \end{subfigure}
     \hfill
     \begin{subfigure}[t]{0.32\textwidth}
         \centering
         \includegraphics[width=\textwidth]{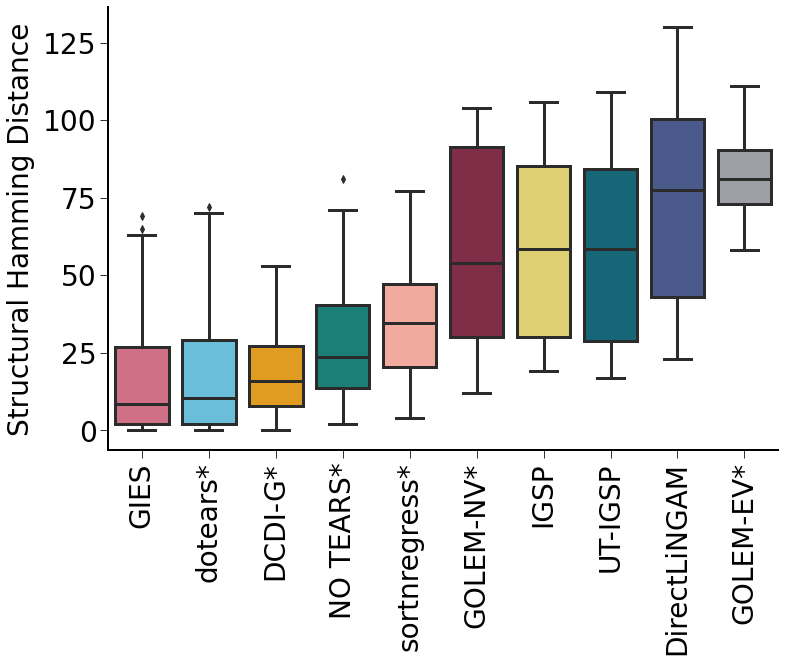}
         \caption{Performance under interventions which fix the distribution of the target to $\Nor(2, 1)$.}
         \label{fig:fixed interventions}
     \end{subfigure}
    \caption{Method performance on sensitivity analyses aggregated across parameterizations of $p=20$ random graphs, in Structural Hamming Distance (lower is better). See Supplementary Material \ref{suppsection:linear sensitivity} for details.}
    \label{fig:linear sensitivity analyses}
\end{figure*}

\subsection{Sensitivity under linear SEM}
\label{suppsection:linear sensitivity}
For evaluation of \dotears{} sensitivity to violations of the modeling assumptions, data is generated for DAGs with $p = 20$ nodes. Structures are drawn from both \erdosrenyi{}{} (ER) and Scale-Free (SF) DAGs \cite{erdHos1960evolution, barabasi1999emergence}. We simulate under two parameterizations $\{\text{Low Density, High Density}\}$ of the edge densities $(r, z)$. In the ``Low Density'' parameterization, give $(r, z) = (0.2, 2)$. To evaluate performance on higher density topologies, we also give ``High Density'' parameterizations, where $(r, z) = (0.5, 6)$. 

Given an edge density scenario, we simulate under two parameterizations of the edge weights. In the ``Strong Effects'' parameterization, $w \sim \Unif\left([-1.0, -0.8] \cup [0.8, 1.0]\right)$. We also give the ``Weak Effects'' parameterization, which edge weights are drawn from $w \sim \Unif\left([-0.3, -0.5] \cup [0.3, 0.5]\right)$. Table \ref{tab:simulation parameters} summarizes all four possible simulation parameterizations.

For each node $i$, we draw $\sigma_i \sim \Unif\left([0.5, 2.0]\right)$, and draw $n_{0}$ observations from $\interv{\boldepsilon_i}{0} \sim \Nor\left(0, \sigma_i^2\right)$. For each DAG we generate an instance of observational data, where $n_{0} = (p + 1) * n_k = 2100$, and an instance of interventional data, where $n_{k} = 100$ for all $k=0\dots p$ to match sample size. We set the distribution of $\interv{\epsilon_i}{k}$ according to Assumptions \ref{assumption:global a assumption} and \ref{assumption:variance without intervention} for $\alpha=4$. Cross-validation was performed as in Supplementary Material \ref{suppsection:large p data generation}.

\begin{table*}[ht!]
    \centering
    \begin{tabular}{r@{}l||r@{}l|c}
        \toprule
        \multicolumn{2}{c||}{\textbf{Simulation Type}} & \multicolumn{2}{c|}{\textbf{Topology}} & $w$\tabularnewline
        \hline
        Low Density&, Strong Effects & ER-0.2\,\,&OR SF-2 & $\Unif\left([-1.0, -0.8] \cup [0.8, 1.0]\right)$  \tabularnewline  
        Low Density&, Weak Effects & ER-0.2\,\,&OR SF-2 & $\Unif\left([-0.5, -0.3] \cup [0.3, 0.5]\right)$ \tabularnewline 
        High Density&, Strong Effects & ER-0.5\,\,&OR SF-6 & $\Unif\left([-1.0, -0.8] \cup [0.8, 1.0]\right)$ \tabularnewline 
        High Density&, Weak Effects & ER-0.5\,\,&OR SF-6 & $\Unif\left([-0.5, -0.3] \cup [0.3, 0.5]\right)$  \tabularnewline
    \end{tabular}
    \caption{For each simulation type for $p=20$ sensitivity analyses, we provide the parameterizations of the \erdosrenyi{} and Scale-Free topologies and the distribution on the drawn edge weights.}
    \label{tab:simulation parameters sensitivity}
\end{table*}

\subsubsection{Hard Interventions}
\begin{figure*}[h]
     \centering
     \includegraphics[height=0.5\textheight]{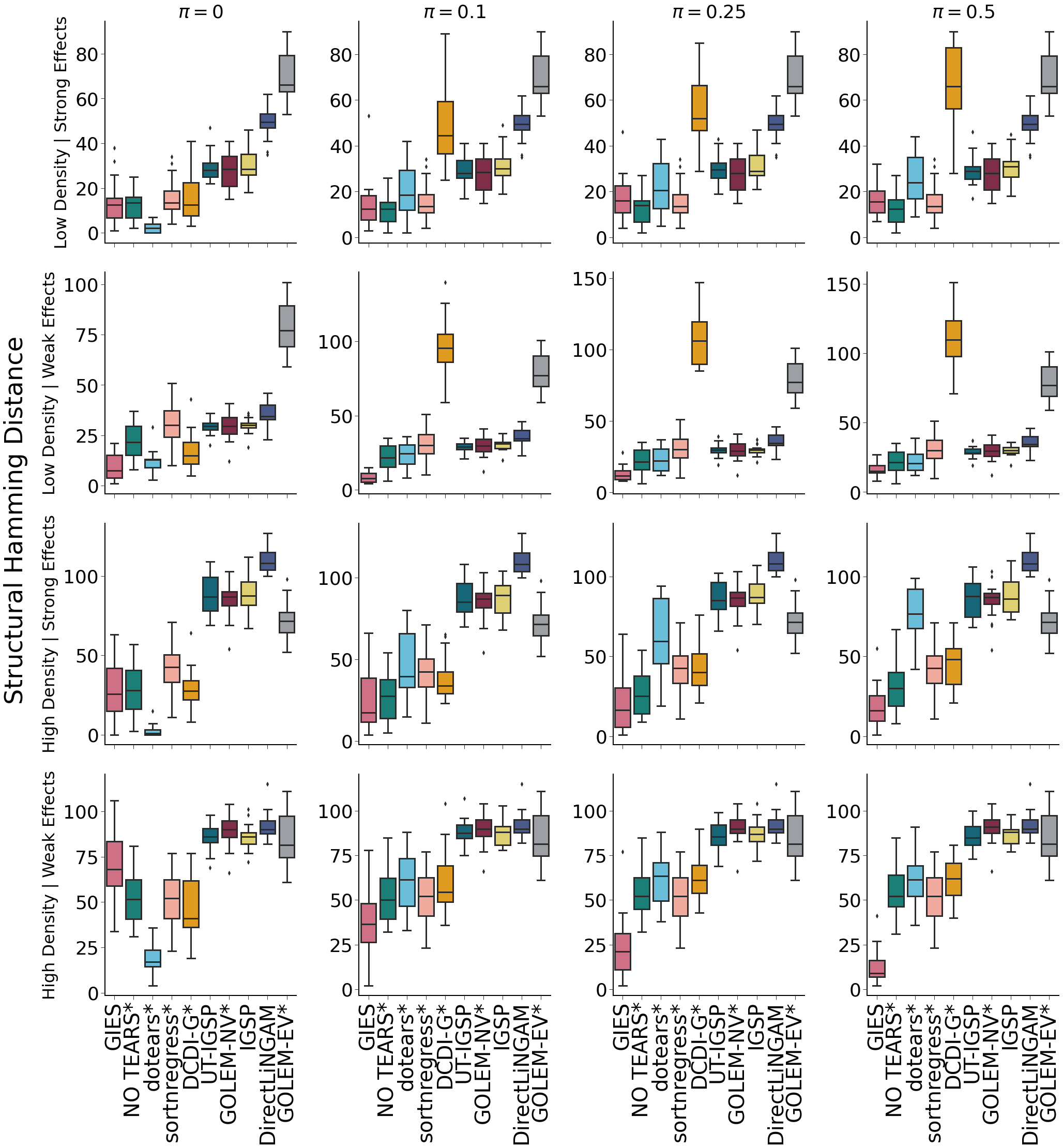}
    \caption{Method performance under soft or imperfect interventions, where $\pi$ is the proportion of allow parental variance to the target. Each row is a different parameterization of DAG density and edge weights; each column is a different value of $\pi$. Results are shown in Structural Hamming Distance (SHD), and methods are sorted by average performance in SHD.}
    \label{suppfig:hard interventions}
\end{figure*}

\begin{figure*}[h]
    \centering
    \includegraphics[height=0.5\textheight]{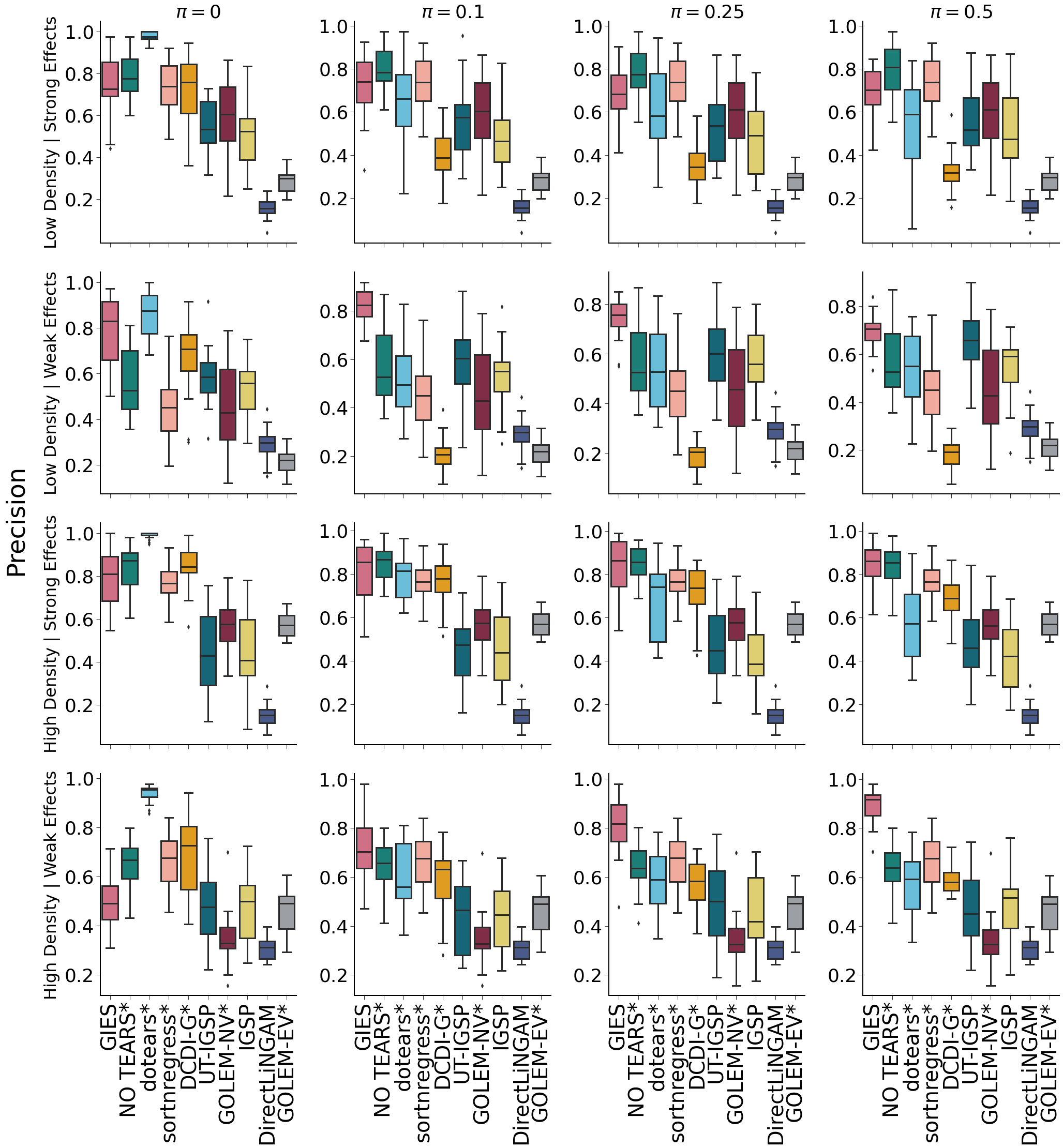}
    \caption{Method performance under soft or imperfect interventions, where $\pi$ is the proportion of allow parental variance to the target. Each row is a different parameterization of DAG density and edge weights; each column is a different value of $\pi$. Results are shown in precision of binary edge recovery, and methods are sorted by average performance in SHD.}
    \label{suppfig:hard precision}
\end{figure*}

\begin{figure*}[h]
    \centering
    \includegraphics[height=0.5\textheight]{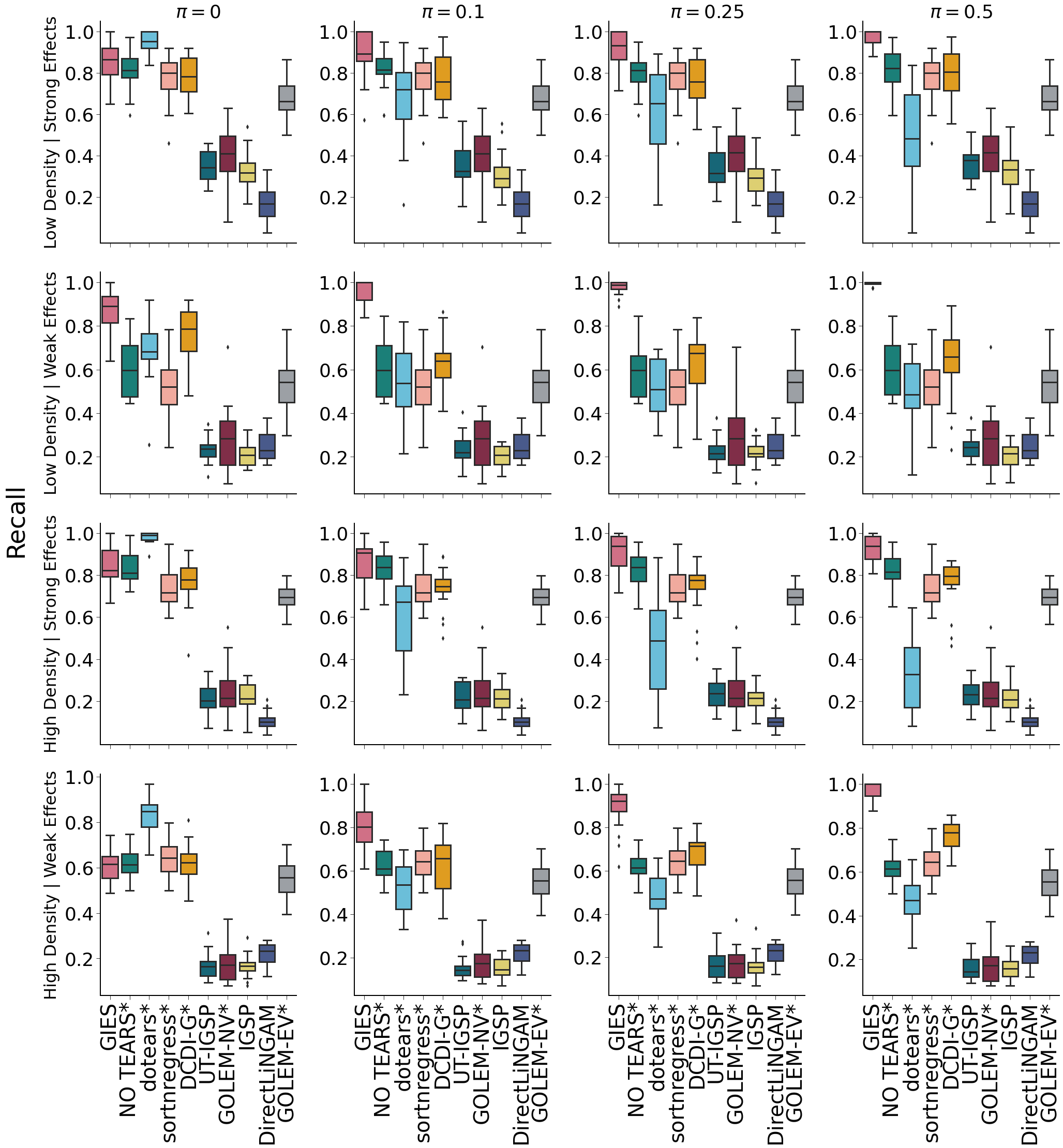}
    \caption{Method performance under soft or imperfect interventions, where $\pi$ is the proportion of allow parental variance to the target. Each row is a different parameterization of DAG density and edge weights; each column is a different value of $\pi$. Results are shown in recall of binary edge recovery, and methods are sorted by average performance in SHD.}
    \label{suppfig:hard recall}
\end{figure*}

Assumption \ref{assumption:hard intervention assumption} assumes that interventions remove all edges incoming to the target, and therefore that the marginal variance of the target has no incoming parental variance. We examine the model where 
\begin{equation}
    \interv{X_k}{k} = \sqrt{\pi} \sum_{i=0}^p w_{ij} \interv{X_i}{k} + \interv{\epsilon_k}{k},
        \nonumber
\end{equation}
where interventions downweight incoming parental edges by $\sqrt{\pi}$. Accordingly, this means that 
\begin{equation}
    \VV\left(\interv{X_k}{k}\right) = \pi \sum_{i=0}^p w_{ij}^2 \VV\left(\interv{X_i}{k}\right) + \frac{\sigma_k^2}{\alpha^2},
        \nonumber
\end{equation}
where for $\pi = 0$ we recover hard interventions. We call these interventions either soft or imperfect interventions, and examine method performance when $\pi = \{0, 0.1, 0.25, 0.5\}$. Note that other than the imperfect intervention, the model and generating processes are the same as in Supplementary Material \ref{suppsection:large p data generation}.

For each $\pi \in \{0, 0.1, 0.25, 0.5\}$ and each parameterization in Table \ref{tab:simulation parameters sensitivity}, ten \erdosrenyi{} DAGs and ten Scale-Free DAGs are drawn, with sample size matched as described above. Results in Structural Hamming Distance are shown in Supplementary Figure \ref{suppfig:hard interventions} per parameterization, and are shown averaged across parameterizations in Figure \ref{fig:hard interventions}. Results in binary precision and recall are given in Supplementary Figure \ref{suppfig:hard precision} and Supplementary Figure \ref{suppfig:hard recall}, respectively.

\subsubsection{$\alpha$ perturbation}
Assumption \ref{assumption:global a assumption} assumes that for each node under intervention, the error variance is reduced by a shared factor $\alpha$ across interventional targets, i.e.
\begin{equation}
    \VV\left(\interv{\epsilon_k}{k}\right) = \frac{\sigma_k^2}{\alpha^2}
        \nonumber
\end{equation}
for all $k$.

To test the sensitivity of \dotears{} to this assumption, we draw data for which
\begin{equation}
    \VV\left(\interv{\epsilon_k}{k}\right) = \frac{\sigma_k^2}{c_k^2\alpha^2}, \quad c_k \sim \text{Unif}([0.8, 1.2]).
        \nonumber
\end{equation}
$c_k$ is target-specific, and thus under this model $\Omega_0$ will be misspecified.

Ten \erdosrenyi{} DAGs and ten Scale-Free DAGs are drawn, with sample size matched as described above. Results in Structural Hamming Distance, averaged across parameterizations, are shown in Figure \ref{fig:alpha perturbation}. Results for each parameterization in Structural Hamming Distance are shown in Supplementary Figure \ref{suppfig:linear sensitivity} (top row). Results in binary edge precision are shown in Supplementary Figure \ref{suppfig:linear precision}. Results in binary edge recall are shown in Supplementary Figure \ref{suppfig:linear recall}.

\begin{figure*}[h]
     \centering
    \includegraphics[width=0.8\textwidth]{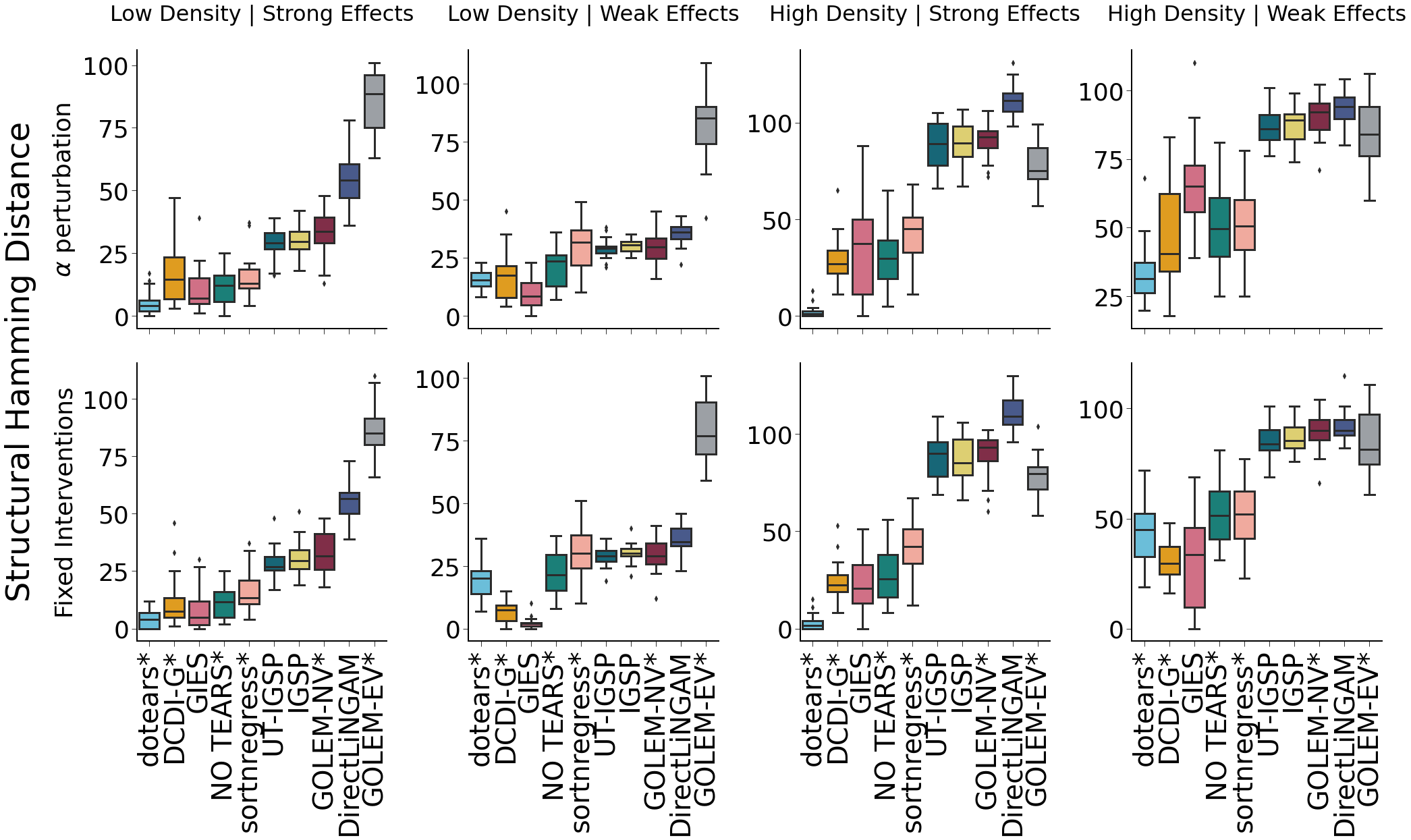}
    \caption{Method performance in linear sensitivity analyses. Each row is a different sensitivity analysis; each column is a parameterization of DAG density and edge weights. Results are shown in Structural Hamming Distance (SHD), and methods are sorted by average performance in SHD.}
    \label{suppfig:linear sensitivity}
\end{figure*}

\begin{figure*}[h]
    \centering
    \includegraphics[width=0.8\textwidth]{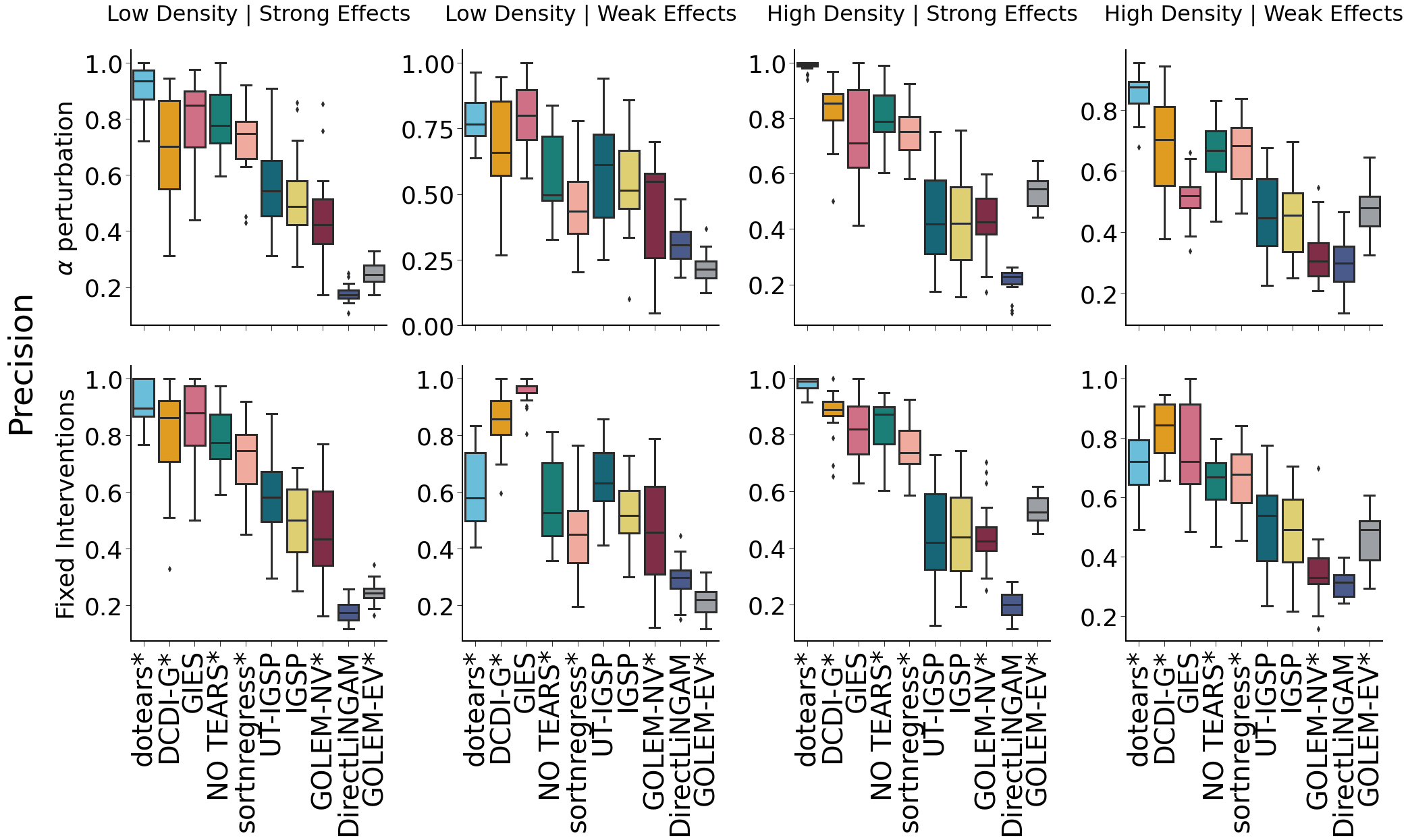}
    \caption{Method performance under linear sensitivity analyses. Each row is a different sensitivity analysis; each column is a parameterization of DAG density and edge weights. Results are shown in precision of binary edge recovery, and methods are sorted by average performance in SHD.}
    \label{suppfig:linear precision}
\end{figure*}

\begin{figure*}[h]
    \centering
    \includegraphics[width=0.8\textwidth]{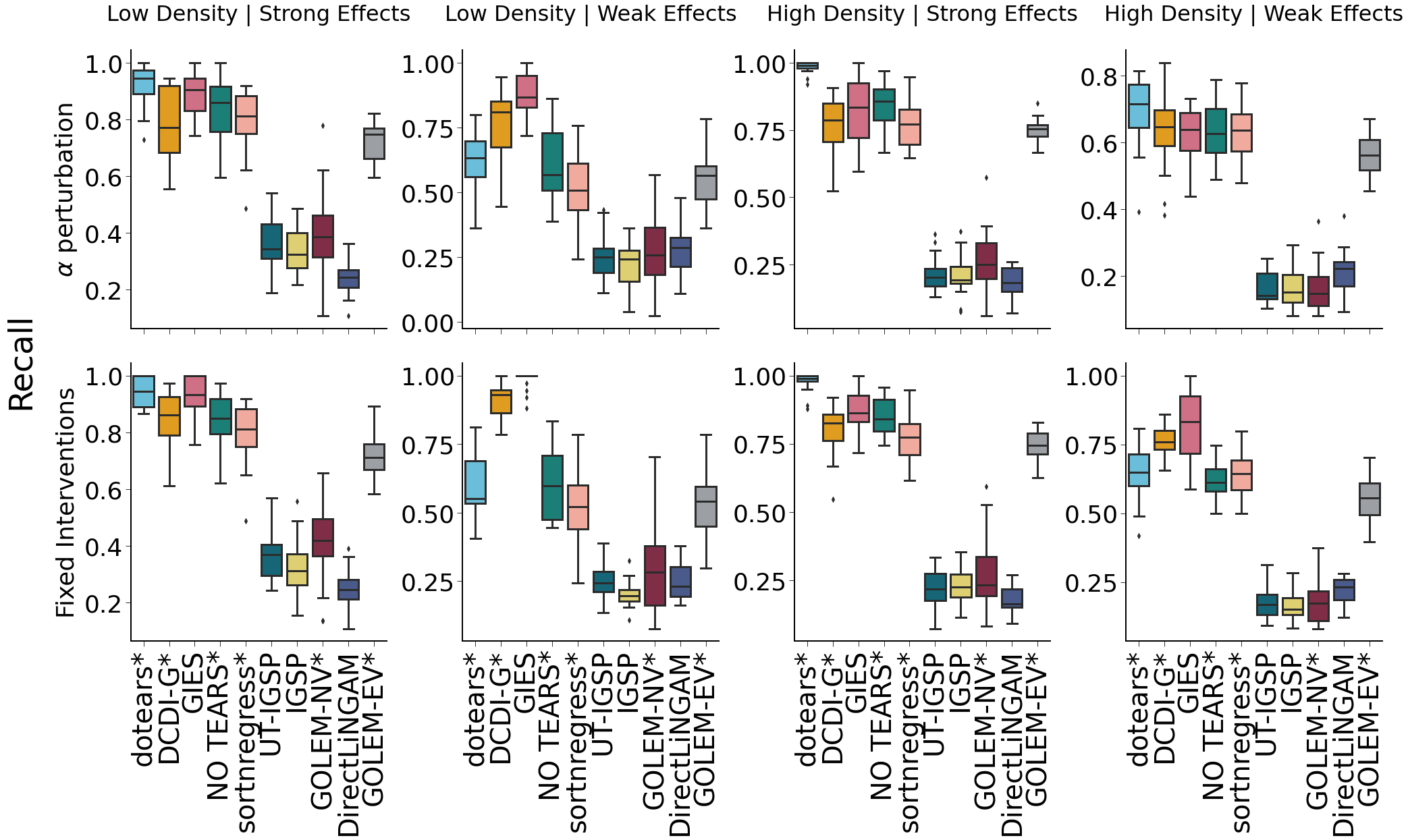}
    \caption{Method performance under linear sensitivity analyses. Each row is a different sensitivity analysis; each column is a parameterization of DAG density and edge weights. Results are shown in recall of binary edge recovery, and methods are sorted by average performance in SHD.}
    \label{suppfig:linear recall}
\end{figure*}

\subsubsection{Fixed Interventions}
Our model assumes that the distribution of node $k$ under intervention on $k$ is dependent on the distribution of $\interv{\epsilon_k}{0}$. We also examine an interventional model in which intervention on a node $k$ sets the distribution with a fixed mean shift and error variance, i.e.
\begin{equation}
    \interv{X_k}{k} \sim \Nor(2, 1),
        \nonumber
\end{equation}
or equivalently $\interv{\epsilon_k}{k}$. This model is used in linear simulations in \cite{brouillard2020differentiable} and is similar to that of \cite{hauser2012characterization}.

\subsection{Sensitivity to nonlinear SEM}
\label{suppsection:nonlinear sensitivity}
To evaluate the performance of \dotears{} on nonlinear data, we generate data from the generative models in \cite{brouillard2020differentiable}. Specifically, structures are \erdosrenyi{} random DAGs with $p=10$ with single-node interventions on every node. The total sample size is $n=10000$ for all draws and scenarios, evenly split across the interventional scenarios (10 interventions and 1 observational) for $n_k \approx 909$.

Data is generated either from the additive noise model (ANM) or the non-linear with non-additive noise model (NN). For a node $j$, let $N_j \sim \Nor(0, \sigma_j^2)$, where $\sigma_j^2 \sim \text{Unif}[1, 2]$. In the additive noise model, for a node $j$ we have 
\begin{equation}
    X_j \coloneqq f_j\left(\Pa(j)\right) + 0.4 \cdot N_j,
\end{equation}
where ``$f_j$ are fully connected neural networks with one hidden layer of 10 units and \textit{leaky ReLU} with a negative slope of 0.25 as nonlinearities'' \cite{brouillard2020differentiable}. Imperfect interventions add a random vector of $\Nor(0, 1)$ to the last layer; for details, see \cite{brouillard2020differentiable}.

In the nonlinear with non-additive noise model, for a node $j$ we have
\begin{equation}
    X_j \coloneqq f_j\left(\Pa(X_j), N_j\right),
\end{equation}
where $f_j$ are ``fully connected neural networks with one hidden layer of 20 units and \textit{tanh} as nonlinearities'' \cite{brouillard2020differentiable}. Imperfect interventions are obtained in the same fashion as the additive noise model; for details, see \cite{brouillard2020differentiable}.

We give two parameterizations for each setting: ``Low Density'', in which the expected number of edges is 10, and ``High Density'', in which the expected number of edges is 40. Results in SHD are given in Figure \ref{fig:nonlinear}. Results in precision and recall are given in Figure \ref{suppfig:nonlinear precision} and Figure \ref{suppfig:nonlinear recall}, respectively. 


\begin{figure*}[h]
    \centering
    \includegraphics[width=0.8\textwidth]{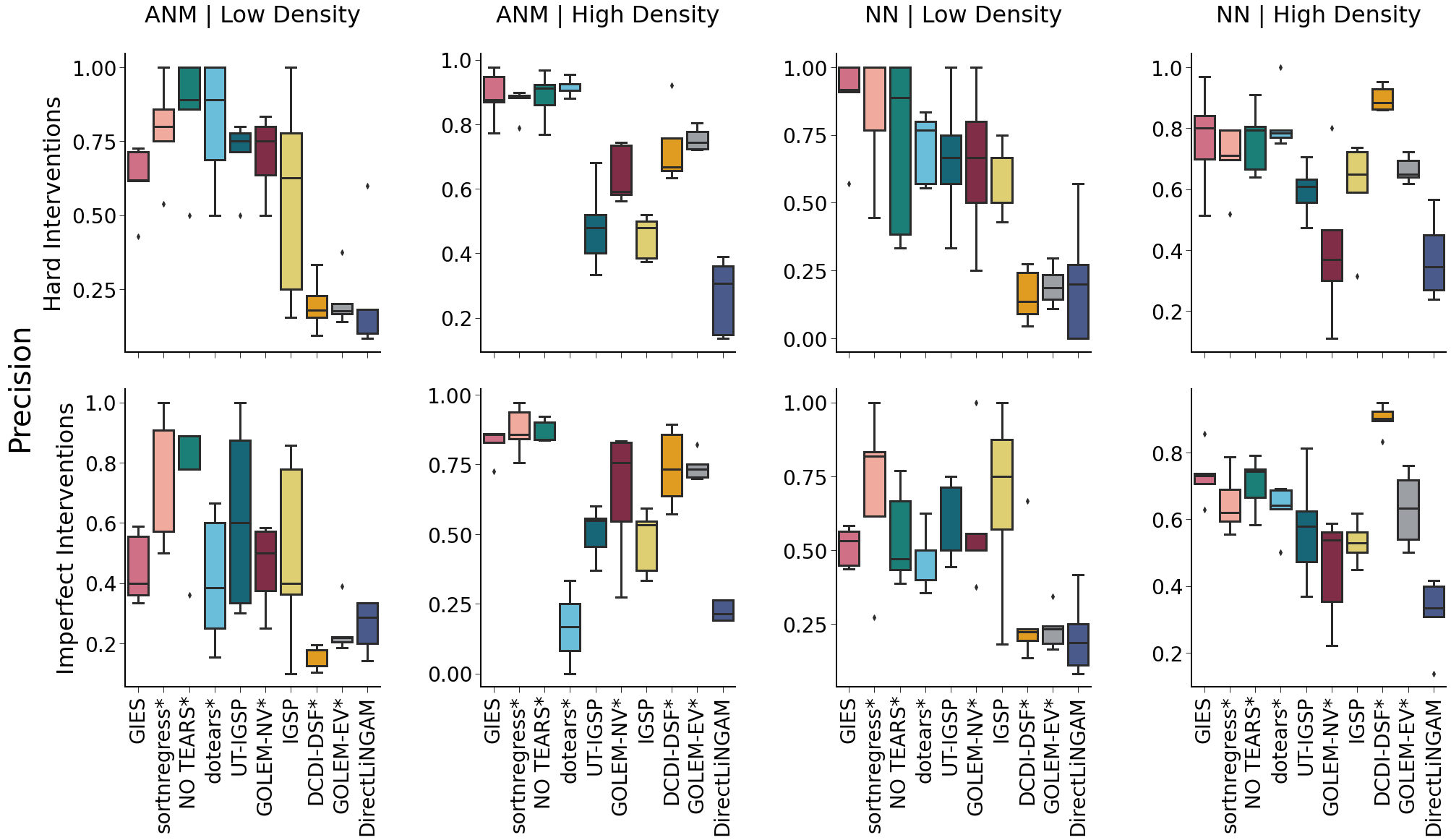}
    \caption{Method performance in nonlinear sensitivity analyses. Each row is a different sensitivity analysis; each column is a parameterization of DAG density and edge weights. Results are shown in precision of binary edge recovery, and methods are sorted by average performance in SHD across parameterizations.}
    \label{suppfig:nonlinear precision}
\end{figure*}

\begin{figure*}[h]
    \centering
    \includegraphics[width=0.8\textwidth]{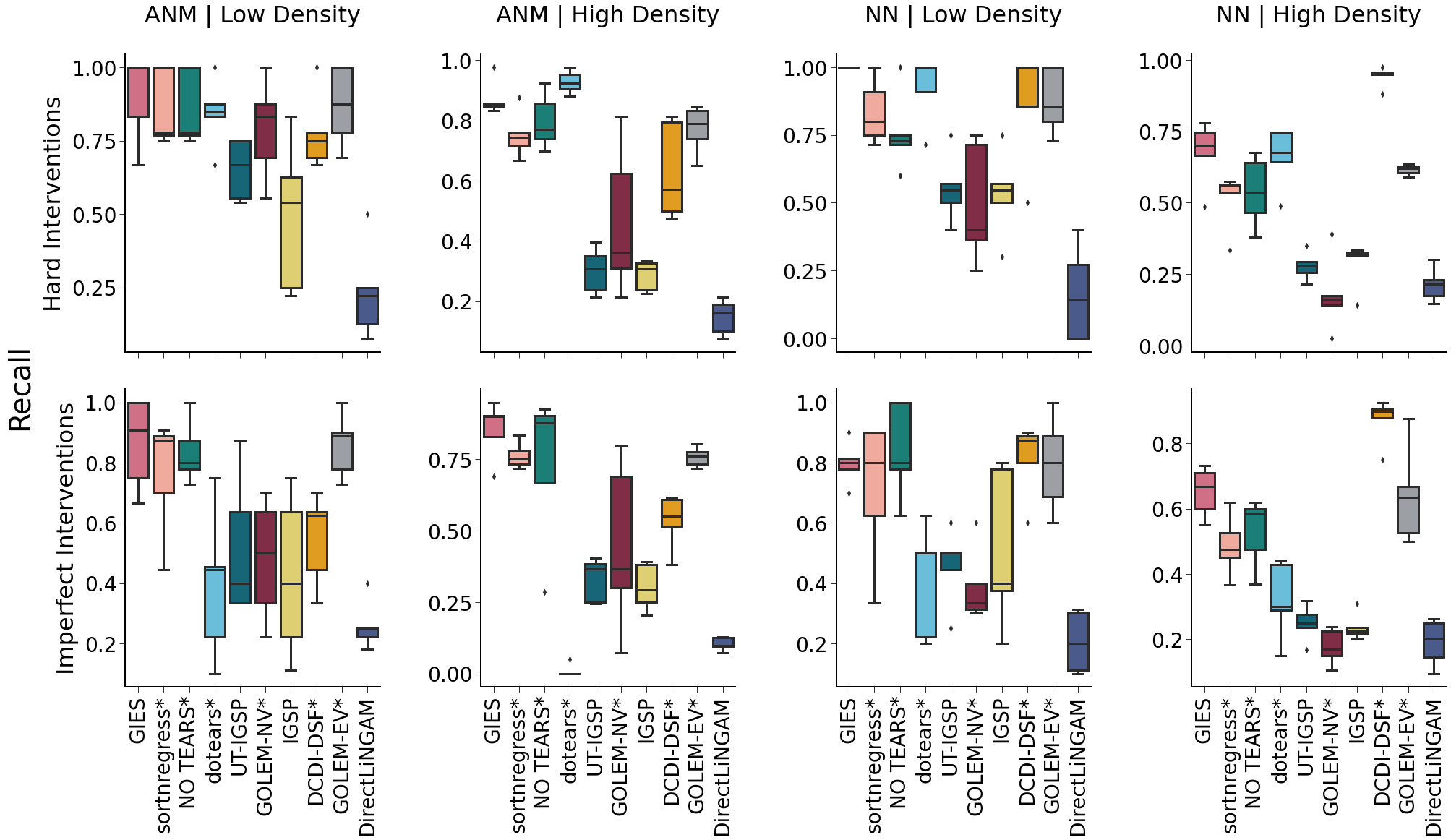}
    \caption{Method performance in nonlinear sensitivity analyses. Each row is a different sensitivity analysis; each column is a parameterization of DAG density and edge weights. Results are shown in recall of binary edge recovery, and methods are sorted by average performance in SHD across parameterizations.}
    \label{suppfig:nonlinear recall}
\end{figure*}

\subsection{Genome-Wide Perturb-Seq}
\label{suppsection:gwps}
We benchmark all methods on a single-cell Perturb-seq experiment from Replogle \etal{} \cite{replogle2022mapping}. Replogle \etal{} provide both raw count data as well as normalized data, where normalized here means a z-scoring relative to the mean and standard deviation of the control dataset, accounting for batch. Here, the control dataset indicates a set of pre-selected control cells, and not simply cells with non-targeting guides. For details, see \cite{replogle2022mapping}.

We perform feature selection in the raw data. We select the top 100 most variable genes in the raw observational data, excluding \textbf{1.)} genes coding for ribosomal proteins and \textbf{2.)} genes without knockdown data. Here, the observational data indicates cells incorporating non-targeting control guides. We then take the normalized expression data for these selected 100 genes in \textbf{1.)} the observational data, and \textbf{2.)} each of the 100 knockdowns. This forms our training set in Figures \ref{fig:gwps}a-d.

In Figure \ref{fig:gwps}e, we perform the same procedure as above, but exclude \textbf{1.)} the observational data. In other words, our training set is formed exclusively from expression data in the 100 knockdowns.

Cross-validation is not performed due to low sample sizes in some knockdowns; instead, the L1 penalty is arbitrarily set to 0.1 for all methods where appropriate. Otherwise, method settings are the same as simulations. 

For DCDI, we report intermediate results, since convergence was not obtained under 24 hours even on GPU training. CPU training was attempted with 30 cores and 180 GB memory, but was killed by the operating system. GPU training was attempted, but was also repeatedly killed by the operating system for memory constraints. DCDI is run as DCDI-G, since DCDI-DSF would not fit in memory, and is also run with imperfect interventions and known interventions. DCDI reported 2982 total edges, but many of these are both causal and anti-causal; in total, 2039 ``interactions'' were reported.

\subsubsection{Differential Expression Testing}
We use DESeq2 to test for differential expression \cite{love2014moderated, ahlmann2020glmgampoi}. We calculate size factors across the raw feature-selected data, and use these in all downstream tests. Next, for each knockdown $\text{ko}(i)$, we test for differential expression for all genes $j$ compared to the observational data. For single-cell data, we run DESeq with parameters test=LRT, fitType=glmGamPoi, useT=TRUE, minmu=1e-6, minReplicatesForReplace=Inf, reduced=~1. We repeat this for each knockdown independently in turn.

At the end of this procedure, we have $100^2$ unadjusted p-values. We perform Benjamini-Hochberg correction to an FDR level of 0.05. Subsequently, in $\text{ko}(i)$, if gene $j$ has an adjusted p-value less than .05, we call that a true edge in downstream calculations of precision and recall.

\subsubsection{Protein-Protein Interactions}
As a second validation edge set, we use protein-protein interactions from the STRING database \cite{szklarczyk2023string}. We pull protein-protein interactions for Homo Sapiens down, and use only physical evidence for interactions. Protein-protein interactions were only used if the STRING confidence score was over 70\%.

\subsubsection{Precision Recall Tables}
\label{suppsection:gwps tables}

In this section, we present precision and recall at three edge threshold levels ($|w| > 0.2$, $|w| < 0.1$, and $|w| < 0.05$) for all methods. We use both differential expression calls and protein-protein interactions as true sets, and separate results for both. Tables \ref{supptab:w0.2}, \ref{supptab:w0.1}, and \ref{supptab:w0.05} denote results for $|w| > 0.2$, $|w| < 0.1$, and $|w| < 0.05$, respectively. For differential expression calls we ignore causal direction, and instead for two given genes denote any differential expression in either direction as a ``true'' edge. This is because all methods struggle equally with inferring causal direction under differential expression, and call an equal number of edges in the ``correct'' causal direction as the ``incorrect'' anticausal direction (see Supplementary Table \ref{supptab:directionality}).
\begin{table}[ht!]
    \centering
    \begin{tabular}{c|ccccc}
        \textbf{Method} & \textbf{\# Called Edges} & \textbf{\% DE} & \textbf{\% PPI} & \textbf{\% DE | PPI} & \textbf{\% Recall} \\
        \hline
        GOLEM-EV & 18 & 44.4 & 77.8 & 83.3 & 0.813 \\
        NO TEARS & 20 & 0.5 & 0.8 & 0.85 & 0.921 \\
        GOLEM-NV & 21 & 42.9 & 76.2 & 81.0 & 0.921 \\
        DirectLiNGAM & 22 & 40.9 & 68.2 & 77.3 & 0.921 \\
        sortnregress & 31 & 35.5 & 64.5 & 67.7 & 1.12 \\
        \dotears{} & 32 & 46.9 & 65.6 & 75.0 & 1.14 \\
        DCDI-G & 2039 & 30.0 & 4.46 & 32.6 & 44.0 \\
        GIES & 3038 & 29.2 & 15.4 & 39.2 & 64.5 \\
        IGSP & 3064 & 29.0 & 15.2 & 38.9 & 64.6 \\
        UT-IGSP & 3075 & 29.8 & 15.1 & 39.6 & 66.0 \\
    \end{tabular}
    \caption{Validation results for all methods at a threshold of $|w| < 0.2$. Columns in order are: Method name (\textbf{Method}), number of edges called at threshold (\textbf{\# Called Edge}), percent of called edges showing differential expression in either the causal or anticausal direction (\textbf{\% DE}), percent of called edges with protein-protein interactions at any confidence level (\textbf{\% PPI}), percent of edges called either differentially expressed or have protein-protein interactions (\textbf{\% DE | PPI}), and total recall, counting both differential expression and protein-protein interactions as true positives (\textbf{\% Recall}).}
    \label{supptab:w0.2}
\end{table}

\begin{table}[ht!]
    \centering
    \begin{tabular}{c|ccccc}
        \textbf{Method} & \textbf{\# Called Edges} & \textbf{\% DE} & \textbf{\% PPI} & \textbf{\% DE | PPI} & \textbf{\% Recall}\\
        \hline
        NO TEARS & 69 & 40.6 & 50.7 & 66.7 & 2.49 \\
        GOLEM-EV & 71 & 39.4 & 50.7 & 66.2 & 2.55 \\
        GOLEM-NV & 72 & 38.9 & 50.0 & 63.9 & 2.49 \\
        \dotears{} & 92 & 37.0 & 45.7 & 60.9 & 3.03 \\
        DirectLiNGAM & 121 & 37.2 & 41.3 & 61.2 & 4.01 \\
        sortnregress & 131 & 36.6 & 41.2 & 58.8 & 4.17\\
        DCDI-G & 2039 & 30.0 & 4.46 & 32.6 & 44.0 \\
        GIES & 3038 & 29.2 & 15.4 & 39.2 & 64.5\\
        IGSP & 3064 & 29.0 & 15.2 & 38.9 & 64.6 \\
        UT-IGSP & 3075 & 29.8 & 15.1 & 39.6 & 66.0 \\

    \end{tabular}
    \caption{Validation results for all methods at a threshold of $|w| < 0.1$. Columns in order are: Method name (\textbf{Method}), number of edges called at threshold (\textbf{\# Called Edge}), percent of called edges showing differential expression in either the causal or anticausal direction (\textbf{\% DE}), percent of called edges with protein-protein interactions at any confidence level (\textbf{\% PPI}), percent of edges called either differentially expressed or have protein-protein interactions (\textbf{\% DE | PPI}), and total recall, counting both differential expression and protein-protein interactions as true positives (\textbf{\% Recall}).}
    \label{supptab:w0.1}
\end{table}

\begin{table}[ht!]
    \centering
    \begin{tabular}{c|ccccc}
        \textbf{Method} & \textbf{\# Called Edges} & \textbf{\% DE} & \textbf{\% PPI} & \textbf{\% DE | PPI} & \textbf{\% Recall}\\
        \hline
        NO TEARS & 149 & 38.3 & 40.9 & 61.7 & 4.98 \\
        GOLEM-EV & 173 & 39.9 & 40.9 & 63.1 & 5.74 \\
        GOLEM-NV & 168 & 42.2 & 39.9 & 63.1 & 5.74 \\
        \dotears{} & 299 & 34.4 & 28.4 & 52.2 & 8.45 \\
        DirectLiNGAM & 613 & 31.6 & 25.0 & 46.5 & 15.4 \\
        sortnregress & 572 & 34.3 & 26.4 & 50.5 & 15.7 \\
        DCDI-G & 2039 & 30.0 & 4.46 & 32.6 & 44.0 \\
        GIES & 3038 & 29.2 & 15.4 & 39.2 & 64.5\\
        IGSP & 3064 & 29.0 & 15.2 & 38.9 & 64.6 \\
        UT-IGSP & 3075 & 29.8 & 15.1 & 39.6 & 66.0 \\
    \end{tabular}
    \caption{Validation results for all methods at a threshold of $|w| < 0.05$. Columns in order are: Method name (\textbf{Method}), number of edges called at threshold (\textbf{\# Called Edge}), percent of called edges showing differential expression in either the causal or anticausal direction (\textbf{\% DE}), percent of called edges with protein-protein interactions at any confidence level (\textbf{\% PPI}), percent of edges called either differentially expressed or have protein-protein interactions (\textbf{\% DE | PPI}), and total recall, counting both differential expression and protein-protein interactions as true positives (\textbf{\% Recall}).}
    \label{supptab:w0.05}
\end{table}

\begin{table}[ht!]
    \centering
    \begin{tabular}{c|cccc}
        \textbf{Method} & \textbf{Threshold} & \textbf{\# Called Edges} & \textbf{\% Causal DE} & \textbf{\% Anticausal DE} \\
        \hline
        \dotears{} & .05 & 299 & 19.7 & 18.7 \\
        \dotears{} & .1 & 92 & 22.8 & 21.7 \\
        \dotears{} & .2 & 32 & 31.3 & 31.3 \\
        NO TEARS & .05 & 149 & 21.5 & 22.1 \\
        NO TEARS & .1 & 69.0 & 26.1 & 23.2 \\
        NO TEARS & .2 & 20.0 & 40.0 & 30.0 \\
        sortnregress & .05 & 572 & 19.6 & 18 \\ 
        sortnregress & 0.1 & 131 &22.9 & 20.6 \\
        sortnregress & 0.2 & 31 & 25.8 & 16.1 \\
        DirectLiNGAM & .05 & 613 & 17.9 & 16.8 \\
        DirectLiNGAM & .1 & 121 & 23.1 & 20.7 \\
        DirectLiNGAM & 0.2 & 22 & 22.7 & 22.3 \\
        GOLEM-EV & .05 & 173 & 20.8 & 23.7 \\
        GOLEM-EV & .1 & 71 & 22.5 & 25.4 \\
        GOLEM-EV & .2 & 18 & 27.8 & 33.3 \\
        GOLEM-NV & .05 & 168 & 19.0 & 30.0 \\
        GOLEM-NV & .1 & 72 & 20.8 & 26.4 \\
        GOLEM-NV & .2 & 21 & 28.6 & 28.6 \\
        GIES &  & 3038 & 15.3 & 15.6 \\
        IGSP &  & 3064 & 14.8 & 16.2 \\
        UT-IGSP &  & 3075 & 16.0 & 15.8 \\
        DCDI-G &  & 2982 & 18.7 & 13.7
    \end{tabular}
    \caption{All methods call an equal proportion of causal and anticausal edges. Columns in order are: Method name (\textbf{Method}), number of edges called at threshold (\textbf{\# Called Edge}), proportion of called edges that show differential expression in the correct causal direction (\textbf{\% Causal DE}), proportion of called edges that show differential expression in the anticausal direction (\textbf{\% Anticausal DE}).}
    \label{supptab:directionality}
\end{table}

\section*{Code Availability}
Code to reproduce the experiments in this paper, as well as a working implementation of \texttt{dotears}, is available at \url{https://github.com/asxue/dotears/}.

\section*{Acknowledgments}
AX was supported by the NIH Training Grant in Genomic Analysis and Interpretation T32HG002536. This work was partially funded by HHMI Hanna H Gray and Sloan fellows programs to HP. This work was partially funded by NIH grant (R35GM125055) and NSF grants (CAREER-1943497, IIS-2106908) to SS. We thank Nathan LaPierre for helpful conversations.

\end{document}